\newtheorem{assumption}{Assumption}
\newcommand{\OH}[1]{{\color{purple} #1}}
\begin{document}

\title{Topologically penalized regression on manifolds 
}

\author{\name Olympio Hacquard \email olympio.hacquard@universite-paris-saclay.fr \\
       \addr Laboratoire de Math\'ematiques d'Orsay\\
       Université Paris-Saclay, CNRS, Inria\\
       91400 Orsay France
      \AND 
      \name Krishnakumar Balasubramanian \email kbala@ucdavis.edu \\
       \addr Department of Statistics\\
       University of California\\
       Davis, CA 95616 USA
             \AND 
      \name Gilles Blanchard  \email gilles.blanchard@universite-paris-saclay.fr \\
       \addr Laboratoire de Math\'ematiques d'Orsay\\
        Université Paris-Saclay, CNRS, Inria\\
       91400 Orsay France
      \AND 
      \name Cl\'ement Levrard \email clement.levrard@lpsm.paris \\
       \addr Laboratoire de Probabilités et Statistiques Mathématiques,\\
       Universit\'e de Paris, \\
       75013 Paris France
      \AND 
      \name Wolfgang Polonik \email wpolonik@ucdavis.edu \\
       \addr Department of Statistics\\
       University of California\\
       Davis, CA 95616 USA 
}

\editor{Sayan Mukherjee}

\maketitle




\maketitle

\begin{abstract}

We study a regression problem on a compact manifold $\mathcal{M}$. In order to take advantage of the underlying geometry and topology of the data, the regression task is performed on the basis of the first several eigenfunctions of the Laplace-Beltrami operator of the manifold, that are regularized with topological penalties. The proposed penalties are based on the topology of the sub-level sets of either the eigenfunctions or the estimated function.
The overall approach is shown to yield promising and competitive performance on various applications to both synthetic and real data sets. We also provide theoretical guarantees on the regression function estimates, on both its prediction error and its smoothness (in a topological sense). Taken together, these results support the relevance of our approach in the case where the targeted function is ``topologically smooth".
\end{abstract}

\begin{keywords}
Manifold regression, Statistical learning, Graph Laplacian, Topological persistence, Penalized models
\end{keywords}


\section{Introduction}
\label{intro}
Problems of regression on manifolds are of growing importance in statistical learning. Given a manifold $\mathcal{M}$, the specific goal is to retrieve a true regression function $f^\star:\mathcal{M} \to \mathbb{R}$ from data $X_i$ (for $i=1,\ldots n)$ that lie on the manifold $\mathcal{M}$ and noisy real-valued responses of the form $Y_i = f^\star(X_i) + \varepsilon_i$ where $\varepsilon_i$ are the additive noise. Such problems arise in many applications where the data samples $X_i$, although represented by very high dimensional spaces like sets of images and 3D volumes, often have an underlying low-dimensional structure and lie on a manifold. This is in particular the case in medical applications. For instance, \cite{guerrero2014manifold} and \cite{jie2015manifold} study regression problems on a set of images of brains. While the set of all images is of very large dimension (the number of pixels), the set of brain images turns out to have a comparatively very small intrinsic dimension. Although there are ways to recover the metric of the underlying unknown manifold~\citep{beg2005computing}, in this article we adopt an extrinsic approach. 

A standard approach for estimating $f^\star$ is based on expanding the function in a suitable basis to take advantage of the underlying manifold structure. To this end, we consider the Laplace-Beltrami operator~\citep{rosenberg1997laplacian}, which has been broadly studied, both for its theoretical properties \citep{zelditch2009local, zelditch2017eigenfunctions, shi2010gradient} and its great power of applicability in statistical data analysis \citep{hendriks1990nonparametric,levy2006laplace, coifman2006diffusion,  saito2008data, mahadevan2007proto, gobel2018, kocak2020spectral}. In our context, we chose the basis to be the set of eigenfunctions of the Laplace-Beltrami operator. Since it is impossible to have access to a closed form expression for the Laplace-Beltrami eigenfunctions for various manifolds in full generality, we replace them by the eigenvectors of the Laplacian matrix of a graph built on the data; see \cite{mohar1991laplacian} for a complete treatment. Using the eigenvectors of the graph Laplacian for diverse learning tasks is an idea that has its roots in the works of \cite{belkin2003laplacian} and has become extremely popular since. There is a plethora of literature on this topic, and we refer to \cite{wang2015trend} for a theoretical treatment, and \cite{belkin2006manifold} and \cite{chun2016eigenvector} for two out of many applications. The use of the graph Laplacian spectrum is backed-up by solid guarantees regarding its convergence towards the spectrum of the Laplace-Beltrami operator. We refer to \cite{von2008consistency} for general results adopting the point of view of spectral clustering, \cite{burago2013graph} for a more recent treatment, and \cite{trillos2020error} for the recent generalization of the latter to random data.

In order to efficiently estimate $f^\star$, we will use a penalization procedure; see \cite{giraud2014introduction} or \cite{massart2007concentration} for a complete treatment of these methods. Specifically, we will present two types of penalties that both leverage topological information. These penalties are based on persistent homology, a field that has its origins in algebraic topology and Morse theory \citep{milnor1963morse}. The use of persistent homology has become increasingly popular over the past decade, popularized, among others, by the books \cite{edelsbrunner2010computational} and \cite{boissonnat2018geometric}. It offers a new approach to data representations. Penalties based on persistence follow a heuristic similar to the one based on total variation (see, for instance, \citealp{rudin1992nonlinear, hutter2016optimal}) which works by reducing the oscillations of the estimated function in order to reconstruct a smooth function. While the heuristics for total variation penalties and persistence based penalties are similar, they still work quite differently, as discussed below.

Penalizing the persistence has been used recently in \cite{chen2019topological} for classification applications, and in \cite{bruel2019topology} in the context of Generative Adversarial Networks. Furthermore, \cite{carriere2020note} has examined optimization with such penalizations in the context of various applications. The novelty of the present work resides in the use of such models in the framework of a regression over a manifold and its joint utilization with a Laplace eigenbasis, enabling a deeper understanding of its topology. It is also the opportunity to study higher dimensional examples where the behavior of topological persistence is fundamentally different from the one of total variation. Indeed, we will see that topological persistence is a very convenient way to prevent the estimated functions from oscillating too much in a stronger way than more standard approaches.

The rest of the paper is organized as follows: in an intuitive fashion, Section~\ref{sec:motiv} presents the motivation behind the introduction of a topological penalty for a Laplace eigenbasis regression and how it can overcome the limitations of total variation denoising. Section~\ref{sec:theo} discusses two types of topological penalties: one is equivalent to solving a Lasso problem with weights and therefore has a simple theoretical analysis and even has a closed-form solution, while the other one is non-convex. Despite the lack of guarantees in non-convex optimization, we will present an oracle result for the estimated parameter. We also present a result on controlling the topology, or on topological sparsity, of one of our approaches. In Section~\ref{sec:expe}, we will present the results of experiments conducted on both synthetic and real data, in order to highlight the strengths and weaknesses of such an approach as opposed to standard regression methods. We have made the code used in several examples available here.\footnote{ \url{https://github.com/OlympioH/Lap_reg_topo_pen}}

\section{Motivation}
\label{sec:motiv}

\subsection{Laplace eigenbasis regression}
We study a regression problem on a compact, smooth submanifold  $\mathcal{M}$ of dimension $d$ of $\mathbb{R}^D$ without boundary. Throughout this paper, we assume the data points $(X_i)_{i=1}^n$ are sampled uniformly and independently over $\mathcal{M}$. Furthermore, for $i=1,\ldots,n$, the responses $Y_i$ are generated based on the model 
\begin{align}\label{eq:datagm}
 Y_i = f^\star(X_i) + \varepsilon_i,
\end{align}
 where $(\varepsilon_i)_{1\leq i \leq n}$ are i.i.d. zero-mean sub-Gaussian noise variables independent of all the $X_i$'s. Our goal is to retrieve the function $f^\star$, also referred to as the regression function, from the given observations $(X_1, Y_1), \ldots, (X_n, Y_n)$.
 
A natural choice of basis to perform a regression and exploit the manifold-structure assumption is the Laplace-Beltrami eigenbasis. Analogously to the Euclidean case, the Laplace-Beltrami operator $\Delta$ is (the negative of) the divergence of the gradient: $\Delta f = -\nabla \cdot \nabla f$. If we denote by $g$ the metric tensor and by $g^{ij}$ the components of its inverse, we have the following expression in local coordinates (with Einstein summation convention):
\[ \Delta f = -\frac{1}{\sqrt{\det (g)\;}} \partial_i (\sqrt{\det (g)\;} g^{ij} \partial_j f).
\]
We remark here that due to our uniform sampling assumption on the $X_i$, if suffices to consider the standard Laplace-Beltrami operator as above. The methodology and theory we develop will immediately extend to non-uniform sampling schemes based on \textit{weighted} Laplace-Beltrami operators~\citep{rosenberg1997laplacian, grigoryan2009heat} as long as the sampling distribution is sufficiently light-tailed (say, it satisfies Poincar\'e inequality). In order for our exposition to convey our main contribution on topological penalization, we stick to the uniform sampling assumption in the rest of this paper.  

Notice that in the Euclidean case, where $g$ is the identity matrix, we retrieve the usual well-known formula for the Laplacian (up to a sign convention). The operator $\Delta$ is a self-adjoint operator with compact inverse which implies that its set of eigenvalues is discrete and that they all are non-negative~\citep{rosenberg1997laplacian}. We can then sort the eigenvalues $(\lambda_j)_{j \geq 1}$ in nondecreasing order and approximate $f^\star$ as a linear combination of the corresponding normalized eigenfunctions $(\Phi_j)_{j \geq 1}$. Besides being an orthonormal basis of $L^2 (\mathcal{M})$ with many smoothness properties, it is a known fact that the functions $(\Phi_j)_{j \geq 1}$ are related to the topology of the manifold~\citep{zelditch2009local}. In addition, the Laplace-Beltrami eigenbasis can be seen as an extension of the Fourier basis to general manifolds. Indeed, on the two dimensional flat-torus $\mathbb{R}^2/2 \pi \mathbb{Z}^2$, the eigenvalues of the Laplace-Beltrami operator are $(n^2 + m^2)_{m, n \in \mathbb{N}}$ and possible corresponding eigenfunctions are $(x, y) \mapsto \sin (nx) \sin (my)$ up to a normalization constant (see for instance Chapter 4.3 of \citealp{zelditch2017eigenfunctions}). By analogy with the approximation of a function by its truncated Fourier series in classical analysis, it is natural to choose the eigenfunctions corresponding to the $p$ smallest eigenvalues as a suitable expansion basis for the signal.

Once the number $p$ of features is chosen, the problem boils down to using the observed data for finding $\theta \in \mathbb{R}^p$ such that $\sum_{i=1}^p \theta_i \Phi_i$  is a good approximation to $f^*$. To this end, we introduce the design matrix $\mathbf{X} \in \mathbb{R}^{n \times p},$ where $\mathbf{X}_{ij}=\Phi_j(X_i),$ and we let $\hat{\theta}$ be a minimizer of
\begin{equation}
    \label{eq:penloss}
\mathcal{L}(\theta) = \|Y-\mathbf{X} \theta\|_2^2 + \mu \Omega (\theta),
\end{equation}
where $Y=(Y_1, \ldots, Y_n)$ is the response vector and $\Omega$ is a penalty term also depending on the Laplace-Beltrami eigenfunctions. Our choices for $\Omega$ will be discussed below. The scalar $\mu$ is a calibration factor aiming at reducing overfitting. In case we do not know the eigenfunctions $\Phi_i$, we will use eigenvectors of a graph Laplacian as sample approximation (see below for details). 

Examples of classical penalties include $L^1$-regularization, also called Lasso (see~\citealp{buhlmann2011statistics} for an exhaustive treatment), and total variation penalty. Although the latter provides good theoretical guarantees (see, for example~\citealp{hutter2016optimal} for oracle results and \citealp{dutta2018covering} for a metric entropy based approach), it fails to capture some aspects of the geometry of the data. Indeed, consider the square $[0, 1]^2$ (or equivalently the 2D torus), discretize it as small squares of size $\varepsilon$ (we can assume $\varepsilon$ to be equal to $1/N$ for some integer $N$ to avoid boundary issues) and consider a pyramidal function $f_\varepsilon$ on each square with value 0 at the boundary of the square, and a maximum of $\varepsilon$ attained in the middle of the square (see Figure \ref{pyramids}). The total variation of the so obtained function is equal to $\sum_{\text {cells}} \int |\nabla f_\varepsilon| = \sharp \text {cells} \int_{\text{cell}} |\nabla f_\varepsilon|$. Since $ |\nabla f_\varepsilon| = 2$, it yields that $ TV(f_\varepsilon) = 2$. In particular, it does not depend on $\varepsilon$, which means that total variation is blind to very small perturbations of the function and is therefore not suited to deal with such a type of noise. We are now going to see in the following subsection a type of penalty that can capture such small oscillations.

\begin{figure}[!h]
\begin{center}
\includegraphics[scale=0.5]{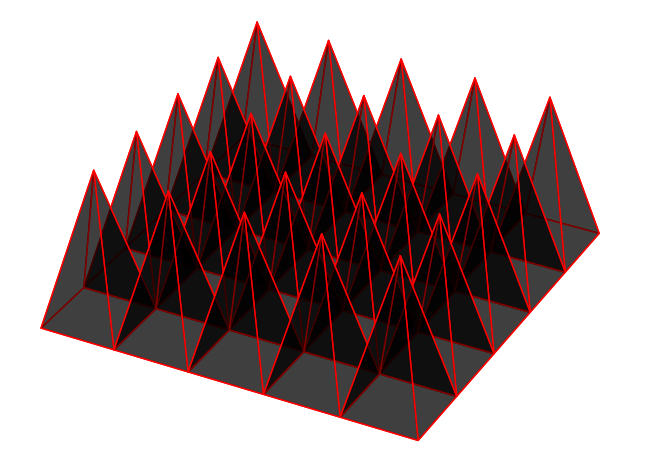}
\end{center}
\caption{Pyramidal function $f_\varepsilon$.}
\label{pyramids}
\end{figure}

\subsection{Total Persistence}
In this section, we present the most basic concepts of topological data analysis as introduced in the reference textbooks by \cite{edelsbrunner2010computational} and \cite{boissonnat2018geometric}. We will try to keep the notions as intuitive as possible and do not lay out the technical details of homology theory. Consider the sub-level sets $\{ f^{-1}((- \infty, t]) \}_{t= - \infty}^{ + \infty}$ of a given function $f$. As $t$ traverses $\mathbb{R}$ from $- \infty$ to $+ \infty$, the topology of the sub-level sets changes and we keep track of these changes in the so-called persistence diagram. More precisely, suppose we are interested in $k$-dimensional topological features present in a sub-level set (or in a topological space), namely a connected component for $k=0$, a cycle for $k=1$, a void for $k=2$, and so on. For simplicity, assume that $f$ is a Morse function (in particular, its critical points are non-degenerate), such that the topology of the sub-level sets of $f$ only changes at levels $t$ corresponding to extremal points (see Theorem~\ref{Morse} below). Then, as the level $t$ increases, such $k$-dimensional features might start to exist at a certain level $t_b$ and they might disappear by merging with another component at a different level $t_d$, where $t_d$ might be equal to $+\infty$ if it never disappears. Then we place a point in the plane with coordinates $(t_b, t_d)$. The set of all such points (each corresponding to a different $k$-dimensional feature) along with the diagonal $y=x$ (accounting for the fact that in general features might appear and disappear at the same level or time) forms the $k-$th persistence diagram of $f$. It is a multi-set of $\mathbb{R}^2$ as different features might appear and disappear at the same levels. The example of a persistence diagram for a one-dimensional function shown in Figure \ref{PDfunc} is taken  from~\cite{boissonnat2018geometric}.

\begin{figure*}[t]
\begin{center}
\includegraphics[width=0.9\linewidth]{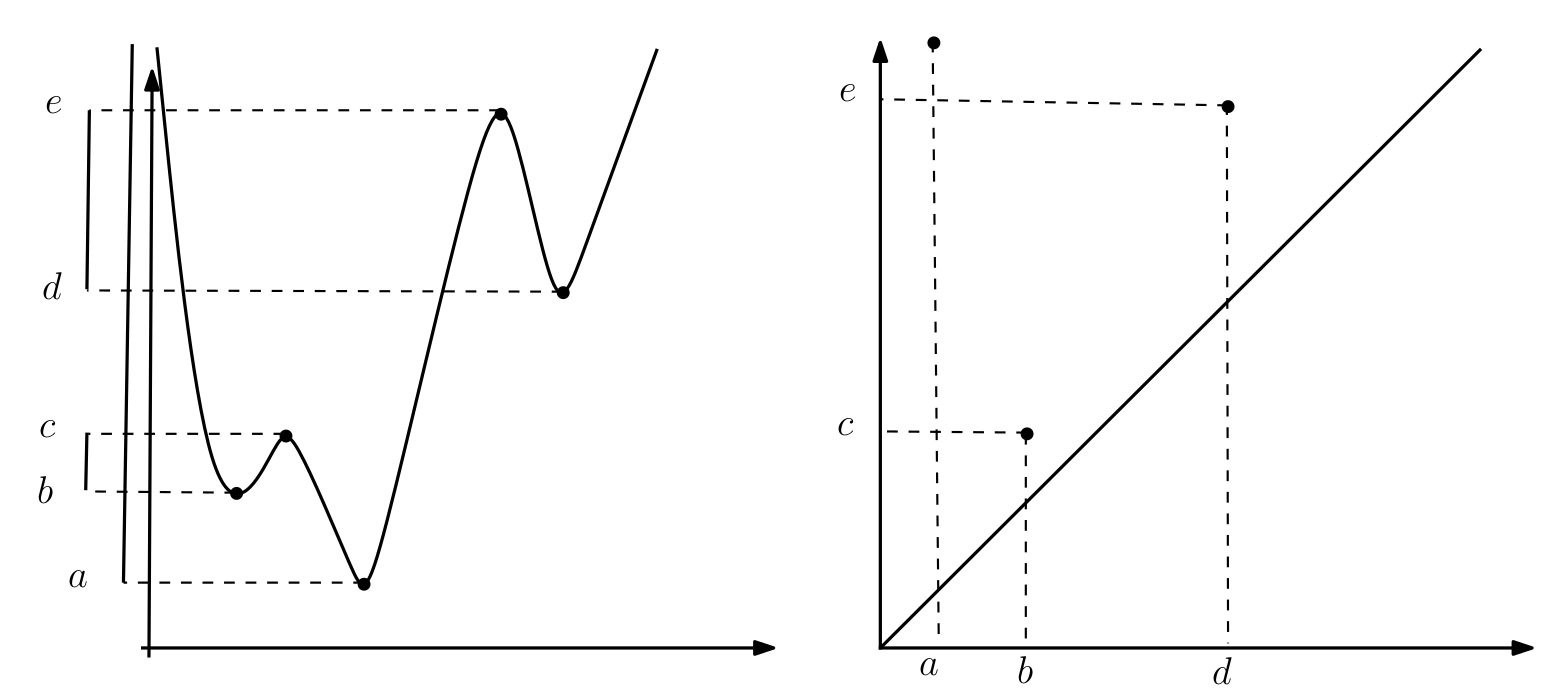}
\end{center}
\caption{Persistence diagram of a real-valued function.}
\label{PDfunc}
\end{figure*}

For the persistence diagram of a function $f$ to be well-defined, we need the function to satisfy a tameness assumption \citep{edelsbrunner2010computational}. Sufficient for this is to assume $f$ to be a Morse function. The following result makes precise the above mentioned fact that for Morse functions the topology of the sub-level sets of $f$ can be simply described in terms of its critical points, defined by $\nabla f(x) = 0$. Recall that critical points of Morse functions are non-degenerate (non-singular Hessian), and that the index of a critical point is the number of negative eigenvalues of the Hessian.

\begin{theorem}[\cite{edelsbrunner2010computational}]
\label{Morse}
Let f be a Morse function on a smooth manifold $\mathcal{M}$ and denote by $\mathcal{M}^a$ the sub-level set  $f^{-1} ( - \infty, a]$.

\begin{itemize}
\item Suppose that there is no critical value between $a < b$. Then $\mathcal{M}^a$ and $\mathcal{M}^b$ are diffeomorphic and $\mathcal{M}^b$ deformation retracts onto $\mathcal{M}^a$.
\item Suppose $p$ is a non-degenerate critical point of $f$ with index $s$ and that $f(p)=q$. We further assume there are no other critical points $p^\prime$ with $f(p^\prime)=q$. Then for $\varepsilon$ small enough, $\mathcal{M}^{q+\varepsilon} $ is homotopy equivalent to $\mathcal{M}^{q-\varepsilon}$ with a $s$-handle attached.
\end{itemize}
\end{theorem}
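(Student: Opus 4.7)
The plan is to prove both parts by constructing explicit gradient-like flows and using the Morse lemma for local normal form, following the classical arguments of Milnor's \emph{Morse Theory}. The compactness of $\mathcal{M}$ (or at least of $f^{-1}([a,b])$) will be used to control the flow in both parts.

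For the first part, since there are no critical values in $[a,b]$, the gradient $\nabla f$ (with respect to any Riemannian metric on $\mathcal{M}$) is non-vanishing on the compact slab $f^{-1}([a,b])$. I would choose a smooth bump function $\rho$ that equals $1/\|\nabla f\|^2$ on $f^{-1}([a,b])$ and has compact support in a slightly larger open set, and consider the smooth vector field $V = -\rho\,\nabla f$. By compactness of support, $V$ is complete, so its flow $\varphi_t$ is defined for all $t$. A direct computation gives $\frac{d}{dt} f(\varphi_t(x)) = -1$ on $f^{-1}([a,b])$, so $\varphi_{b-a}$ maps $\mathcal{M}^b$ diffeomorphically onto $\mathcal{M}^a$. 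The deformation retraction is then $H_s(x) = \varphi_{s \cdot \max(0,\,f(x)-a)}(x)$ for $s \in [0,1]$, which fixes $\mathcal{M}^a$ pointwise and pushes every level in $[a,b]$ down to level $a$.

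For the second part, the central tool is the Morse lemma: at the non-degenerate critical point $p$ of index $s$, there exist smooth local coordinates $(u_1,\ldots,u_d)$ on a neighborhood $U$ of $p$ in which
\[
f = q - u_1^2 - \cdots - u_s^2 + u_{s+1}^2 + \cdots + u_d^2.
\]
In these coordinates I would introduce $\xi = (u_1,\ldots,u_s)$ and $\eta=(u_{s+1},\ldots,u_d)$, so $f = q - |\xi|^2 + |\eta|^2$. Choose $\varepsilon$ small enough that the closed set $\{|\xi|^2 + |\eta|^2 \leq 2\varepsilon\}$ sits inside $U$ and contains no other critical points. One then defines an explicit modified function $F$ agreeing with $f$ outside a small neighborhood of $p$, equal to $f$ minus a bump subtracting a controlled amount near $p$, so that $F \leq f$ everywhere, $F^{-1}((-\infty,q-\varepsilon]) = \mathcal{M}^{q+\varepsilon}$ set-theoretically minus the region where the $s$-handle $D^s \times D^{d-s} = \{|\xi|\leq 1\}\times\{|\eta|\leq\delta\}$ is attached, and $F$ has no critical values in $[q-\varepsilon, q+\varepsilon]$. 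Applying part (i) to $F$ shows $\mathcal{M}^{q+\varepsilon}$ deformation retracts onto $\mathcal{M}^{q-\varepsilon} \cup H$, where $H$ is precisely the region $\{|\xi|^2 \leq \varepsilon,\ |\eta|^2 \leq |\xi|^2 - \varepsilon\} \cong D^s \times D^{d-s}$, attached along the sphere $\partial D^s \times D^{d-s} \subset \partial\mathcal{M}^{q-\varepsilon}$.

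The main obstacle is bookkeeping: one must check that the modification $F$ is smooth, agrees with $f$ outside a compact neighborhood of $p$, preserves the sub-level set at level $q+\varepsilon$, and eliminates the critical value at $q$ without introducing new critical values in $[q-\varepsilon,q+\varepsilon]$. This is done by choosing the bump carefully as a function of $|\xi|^2 + 2|\eta|^2$, with derivative strictly between $-1$ and $0$, so that $\partial F/\partial (|\xi|^2)$ and $\partial F/\partial (|\eta|^2)$ have the right signs to rule out critical points. Once this is verified, combining the modified-function argument with the homotopy equivalence $H \cup_{\partial D^s \times D^{d-s}} \mathcal{M}^{q-\varepsilon} \simeq \mathcal{M}^{q-\varepsilon}$ attached with an $s$-handle yields the stated homotopy equivalence.
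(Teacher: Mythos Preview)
The paper does not actually prove this theorem; it is stated with a citation to \cite{edelsbrunner2010computational} and used as background. Your proposal follows the classical Milnor argument (gradient flow for part one, Morse lemma plus a modified function $F = f - \mu(|\xi|^2 + 2|\eta|^2)$ for part two), which is the standard route and is correct in outline.

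One slip worth fixing: your description of the handle region $H = \{|\xi|^2 \leq \varepsilon,\ |\eta|^2 \leq |\xi|^2 - \varepsilon\}$ is empty (the second inequality forces $|\eta|^2$ to be nonpositive while $|\xi|^2 < \varepsilon$). In Milnor's argument the attached cell is the $s$-disk $e^s = \{|\xi|^2 \leq \varepsilon,\ \eta = 0\}$, and one shows $\mathcal{M}^{q-\varepsilon} \cup e^s$ is a deformation retract of $F^{-1}((-\infty,q-\varepsilon])$, which in turn equals (not ``is minus'') the set $\mathcal{M}^{q+\varepsilon} = F^{-1}((-\infty,q+\varepsilon])$ after applying part one to $F$. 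Thickening $e^s$ to an $s$-handle $D^s \times D^{d-s}$ is then a separate, easy homotopy step. With that correction the argument goes through.
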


As a consequence, for Morse functions all the coordinates of the points in the persistence diagrams of every dimension are critical values of $f$. Furthermore, in the persistence diagram of a feature dimension $k$, the birth times are critical values of index $k$ and the death times are critical values of index $k+1$. We define the persistence of a feature to be its lifetime, namely its death time $t_d$ minus its birth time $t_b$, and define the $k-$persistence of a function, denoted by $\chi_k (f)$, as the sum of all individual persistences in dimension $k$.  In the literature this sometimes is also called the $k$-total persistence. When we talk about the persistence of a function $\chi(f)$, it is understood to be the sum of all persistences over all dimensions. The count of all the $k$-dimensional features in a topological space (sub-level set) is called the $k$-th Betti number of this space. The total sum of all the $k$-th Betti number is called the total Betti number of this space.

Note that the existence of features with infinite persistence would make $\chi(f)$ equal to $\infty$. To avoid this degeneracy, the quantity $\chi(f)$ is modified (see \citealp{polterovich2019topological}) by replacing the infinite persistence of a feature born at $b$ by $\max(f) - b.$ The number of  topological features with infinite persistence equals the total Betti number $\zeta = \zeta({\cal M})$ of the manifold $\mathcal{M}$. In what follows, we will always consider persistences to be clipped as such. We also state a useful result from Chapter 6 of \cite{polterovich2019topological} in Lemma~\ref{stability} below. It can be seen as a corollary of the famous stability inequality in topological data analysis from \cite{cohen2007stability}. The result essentially states that two functions close in uniform norm necessarily have close persistence. 

\begin{lemma}[\citealp{polterovich2019topological}]
\label{stability}
Let $f$ and $h$ be two Morse functions on a manifold $\mathcal{M}$ with total Betti number $\zeta$. 
Denote by $\nu(f)$ the total number of points (with finite persistence) in the persistence diagram of $f$. Then
\[  \chi(f) - \chi(h) \leq (2 \nu (f) + \zeta) \|f-h\|_\infty.
\]
\end{lemma}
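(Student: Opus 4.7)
The plan is to deduce Lemma~\ref{stability} from the classical persistence stability theorem of~\cite{cohen2007stability}: for tame functions $f,h$, the bottleneck distance satisfies $d_B(D_f, D_h) \le \|f-h\|_\infty$. Concretely, this guarantees a bijection $\gamma$ between the augmented diagrams $D_f \cup \Delta$ and $D_h \cup \Delta$---where $\Delta$ is the diagonal $\{(x,x) : x \in \mathbb{R}\}$ counted with infinite multiplicity---such that $\|p - \gamma(p)\|_\infty \le \|f-h\|_\infty$ for every point $p$. Since a point with infinite second coordinate is at infinite $L^\infty$ distance from any diagonal or finite point, the $\zeta$ essential (infinite-persistence) classes in $D_f$ must be paired under $\gamma$ with the $\zeta$ essential classes in $D_h$.

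I would then decompose $\chi(f) - \chi(h)$ along the matching and bound each type of pair. For a finite point $p = (b,d) \in D_f$ paired with a finite point $(b',d') \in D_h$, the triangle inequality gives $(d-b) - (d'-b') \le |d-d'| + |b-b'| \le 2\|f-h\|_\infty$. For a finite point $(b,d) \in D_f$ paired with the diagonal, the nearest diagonal point lies at $L^\infty$ distance $(d-b)/2$, forcing $d-b \le 2\|f-h\|_\infty$; its whole persistence is then added to $\chi(f)$ with no balancing term in $\chi(h)$. Finite points of $D_h$ paired with the diagonal contribute only to $\chi(h)$ and can be discarded from an upper bound on $\chi(f) - \chi(h)$. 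Summed over the at most $\nu(f)$ finite points of $D_f$, these cases produce a total contribution of at most $2\nu(f)\,\|f-h\|_\infty$.

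It remains to handle the essential classes under the clipping convention. A matched infinite pair $(b_i, +\infty) \leftrightarrow (b_i', +\infty)$ contributes $(\max f - b_i) - (\max h - b_i')$ to $\chi(f) - \chi(h)$. Summing over $i = 1, \dots, \zeta$ yields
\[
  \zeta(\max f - \max h) + \sum_{i=1}^{\zeta}(b_i' - b_i),
\]
and a careful accounting---using $|\max f - \max h| \le \|f-h\|_\infty$, $|b_i - b_i'| \le \|f-h\|_\infty$, and the fact that a bottleneck-optimal pairing of births on the line can be chosen in sorted order so as to share a single $L^\infty$-error per point---yields a contribution of $\zeta\,\|f-h\|_\infty$ from the essential part. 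Combining this with the finite part gives $\chi(f) - \chi(h) \le (2\nu(f) + \zeta)\|f-h\|_\infty$. The main technical obstacle, I expect, is precisely this last step: one must avoid double-counting the $L^\infty$-error that appears both in the shift of the clipping level $\max f$ versus $\max h$ and in the shifts of the individual birth times, and it is this cancellation that produces the prefactor $\zeta$ rather than a naive $2\zeta$.
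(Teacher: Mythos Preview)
The paper does not supply its own proof of Lemma~\ref{stability}; it is quoted from \cite{polterovich2019topological} with the remark that it follows from the stability theorem of \cite{cohen2007stability}. Your strategy---pull back to a bottleneck matching and bound $\chi(f)-\chi(h)$ pair by pair---is exactly the intended route, and your treatment of the finite bars is correct: each of the $\nu(f)$ finite points of $D_f$ contributes at most $2\|f-h\|_\infty$ whether it is matched to a finite point of $D_h$ or to the diagonal, while finite points of $D_h$ matched to the diagonal may be dropped.

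The gap is in the essential part. Your assertion that ``a careful accounting'' collapses the contribution of the $\zeta$ clipped infinite bars to $\zeta\,\|f-h\|_\infty$ is not substantiated; the sentence about sorted pairings ``sharing a single $L^\infty$-error per point'' is not an argument. Each term $(\max f - b_i) - (\max h - b_i')$ is bounded only by $|\max f - \max h| + |b_i - b_i'| \le 2\|f-h\|_\infty$, and there is no general cancellation between the shift of the clipping level and the shifts of the births. In fact, with the definitions exactly as stated (so that $\nu(f)$ counts only finite bars), the constant $\zeta$ is too optimistic: take $f$ to be the standard height function on $\mathbb{T}^2$, so that $\nu(f)=0$, $\zeta=4$, and $\chi(f)=4$; let $h$ be a Morse perturbation with $\|f-h\|_\infty=\epsilon$ that raises the minimum and both saddle values by $\epsilon$ and lowers the maximum by~$\epsilon$. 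Then $\chi(h)=4-6\epsilon$, so $\chi(f)-\chi(h)=6\epsilon>(2\nu(f)+\zeta)\epsilon=4\epsilon$. The honest bound your bottleneck argument delivers is $(2\nu(f)+2\zeta)\|f-h\|_\infty$; this is correct, and it suffices for every use of the lemma in the paper up to a harmless constant. The discrepancy with the printed constant is most likely a convention mismatch with the source (e.g.\ $\nu$ counting all bars there), not something your argument can repair.
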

This result remains true when $f$ is Morse and $h$ is only continuous. Under those circumstances, $\chi(h)$ can be defined by the (possibly infinite) limit of the total persistence of a sequence of Morse functions that uniformly converges towards $h$, as done in \cite{polterovich2019persistence}. It is worth mentioning here that more precise stability results for difference of total persistences with respect to the $L_1$ metric (instead of $L_\infty$) are available in \cite{skraba2020wasserstein}, in the case where functions are defined on top of CW-complexes. Though adaptation of such results to sub-level sets based filtration seems possible, applications to this particular case of regression on manifold would lead to the same kind of bounds. Indeed, Lemma \ref{sup_norm} ensures that sup-norm bounds are of the same order as $L_1$ bounds in this case. Nonetheless, we believe that substantial gains might be expected from using these refined bounds in more general regression settings.

\section{Methodology}
In applications we construct persistence diagrams from random data --- think of a  random function, such as an estimated regression function, or a function with noise added; see below. The standard paradigm in topological data analysis is that in such random persistence diagrams the features with a high persistence are true features, whereas the features with a low persistence that lie near the diagonal are noisy perturbation of the topology. We denote that recent results from \cite{bubenik2020persistent} are changing this paradigm since relevant topological information can be found in low-persistence features. Though it is likely that some local information may be retrieved from these small persistence features (such as geometrical characteristics of the support), in a regression setting given a noisy input, topological smoothness of the regression function is enforced via discarding these small oscillations. We propose two penalization strategies that intend to achieve this goal. We can see an example of the influence of noise on the persistence diagram Figure~\ref{gaussian_data} where we have computed the value of a function at $1000$ points uniformly sampled in the square $[0, 10]$ and where we have added Gaussian noise to each entry with three different levels $(\sigma = 0.01, \sigma = 0.05, \text{ and } \sigma = 0.1$), and then plotted an interpolation.  The function considered here is the sum of four Gaussian functions on a square. This function has a single topological feature of dimension 0 (a connected component is born at level 0 and never dies) and four topological components of dimension 1 (that die at the height of the local modes of each Gaussian). When adding noise to this function, the resulting persistence diagram has many points and the noisy function has a very large persistence.  The higher the noise level, the further the noisy features are from the diagonal, until it is hard to distinguish the four true topological features from the noisy features. We can see this observation reflected in the plots of the function itself. This motivates to consider methods that sparsify the persistence diagram in order to denoise the input.

\begin{figure*}
\begin{center}
\subfigure[Original function]{\includegraphics[scale=0.5]{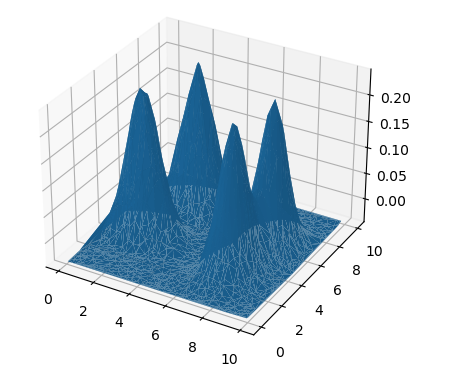}}
\subfigure[$\sigma = 0.03$]{\includegraphics[scale=0.3]{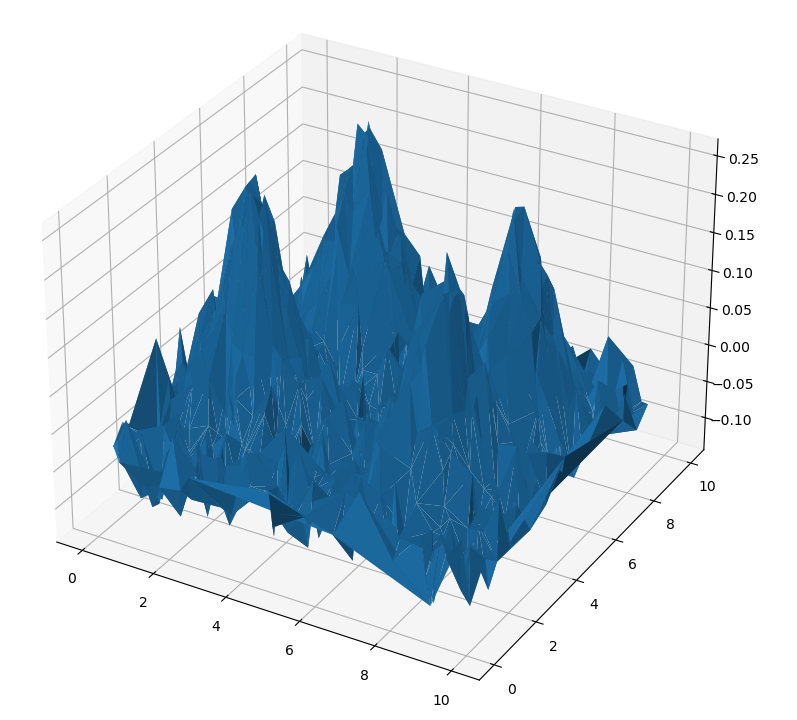}}\\
\subfigure[$\sigma = 0.05$]{\includegraphics[scale=0.3]{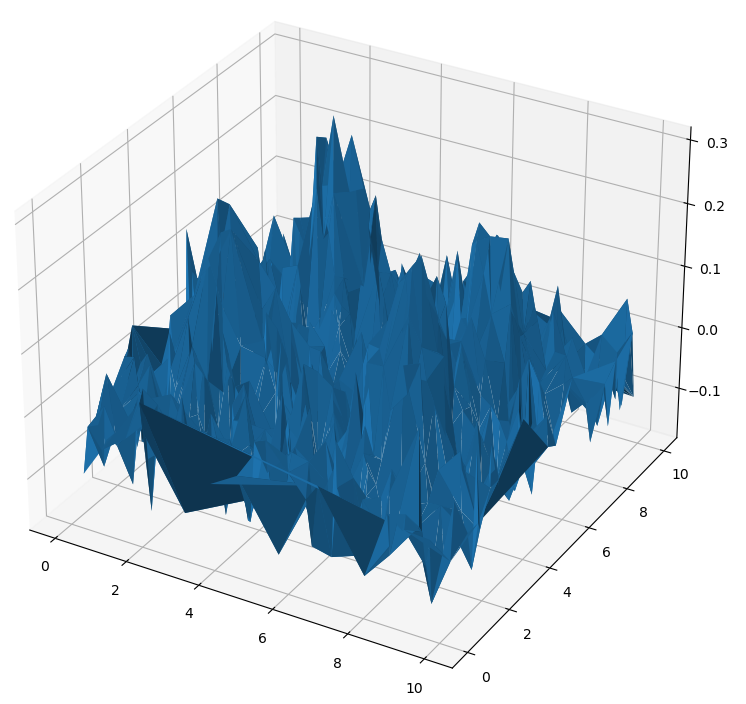}}
\subfigure[$\sigma = 0.1$]{\includegraphics[scale=0.3]{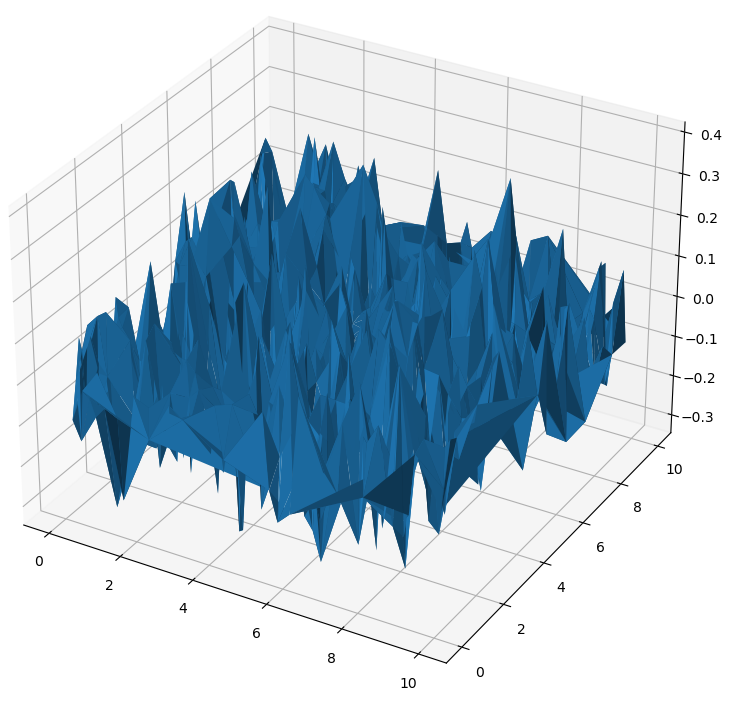}}\\
\subfigure[Original function]{\includegraphics[scale=0.17]{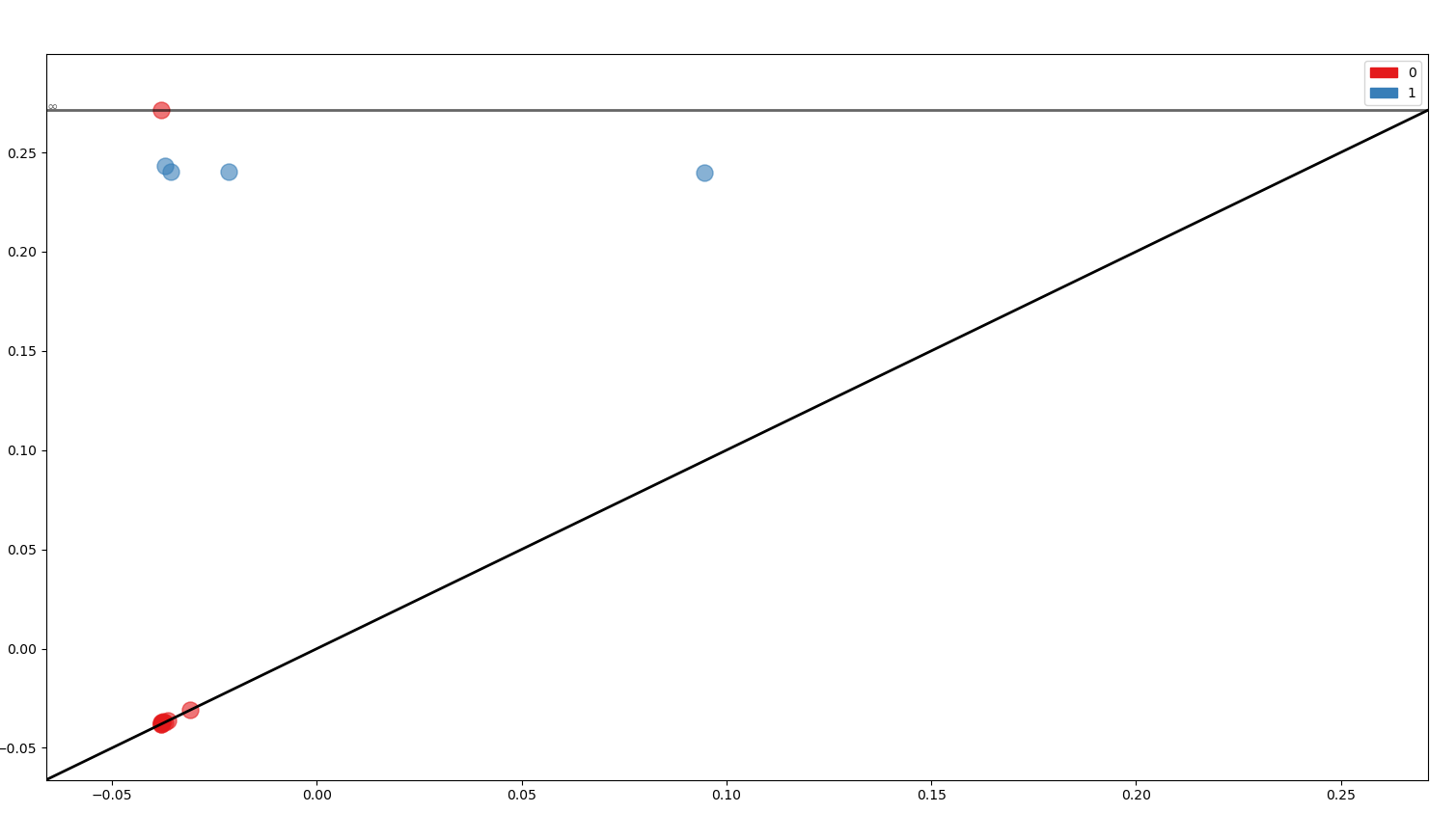}} 
\subfigure[$\sigma = 0.03$]{\includegraphics[scale=0.17]{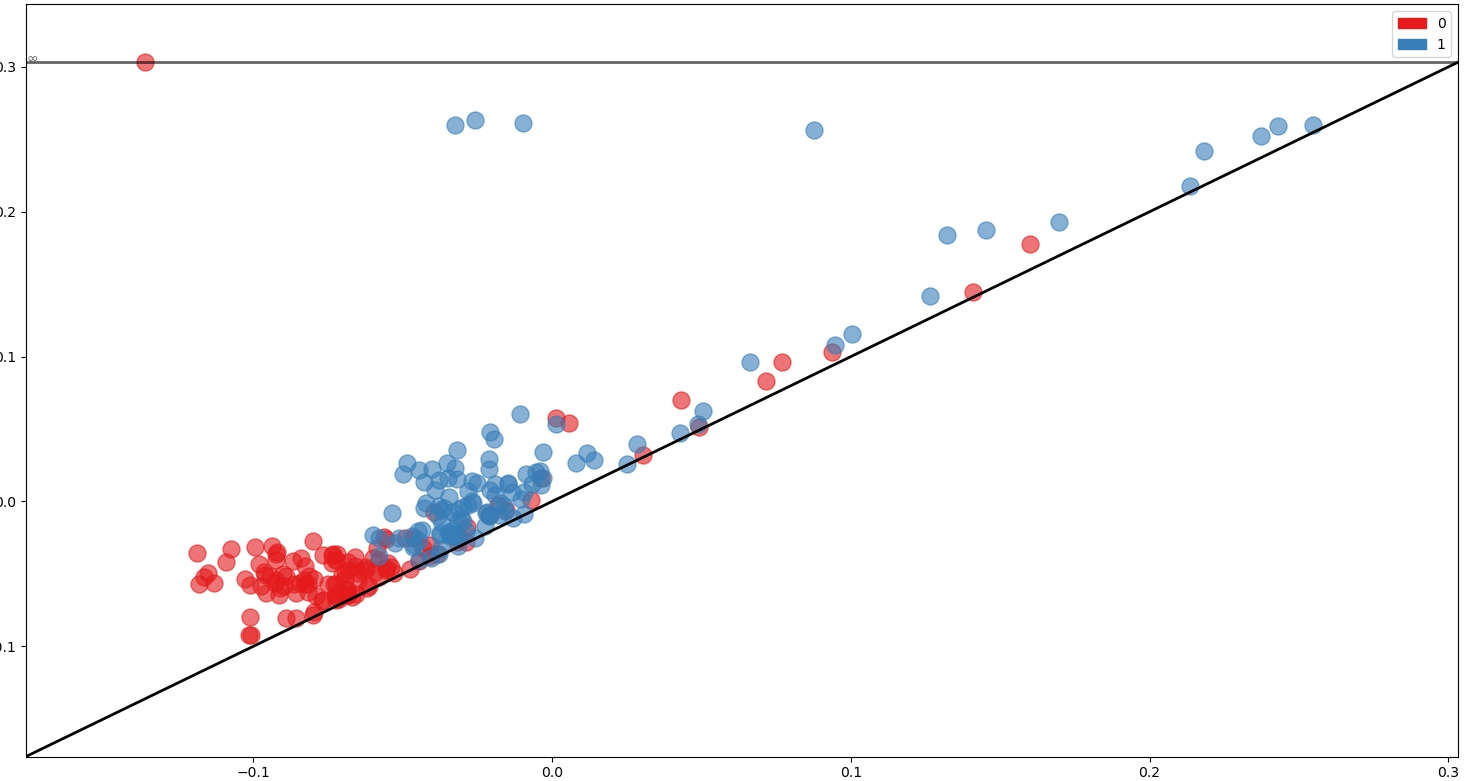}} \\
\subfigure[$\sigma = 0.05$]{\includegraphics[scale=0.17]{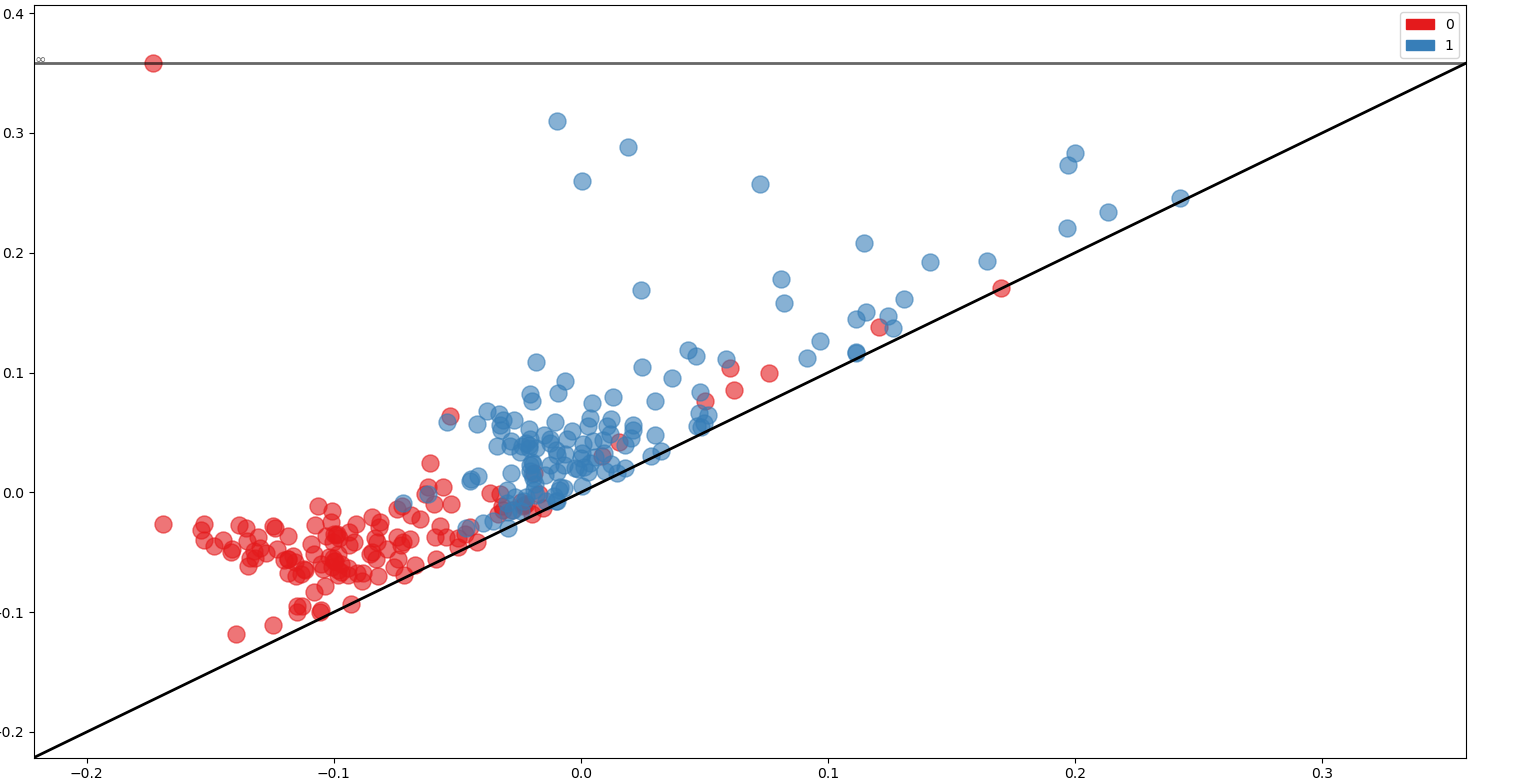}} 
\subfigure[$\sigma = 0.1$]{\includegraphics[scale=0.17]{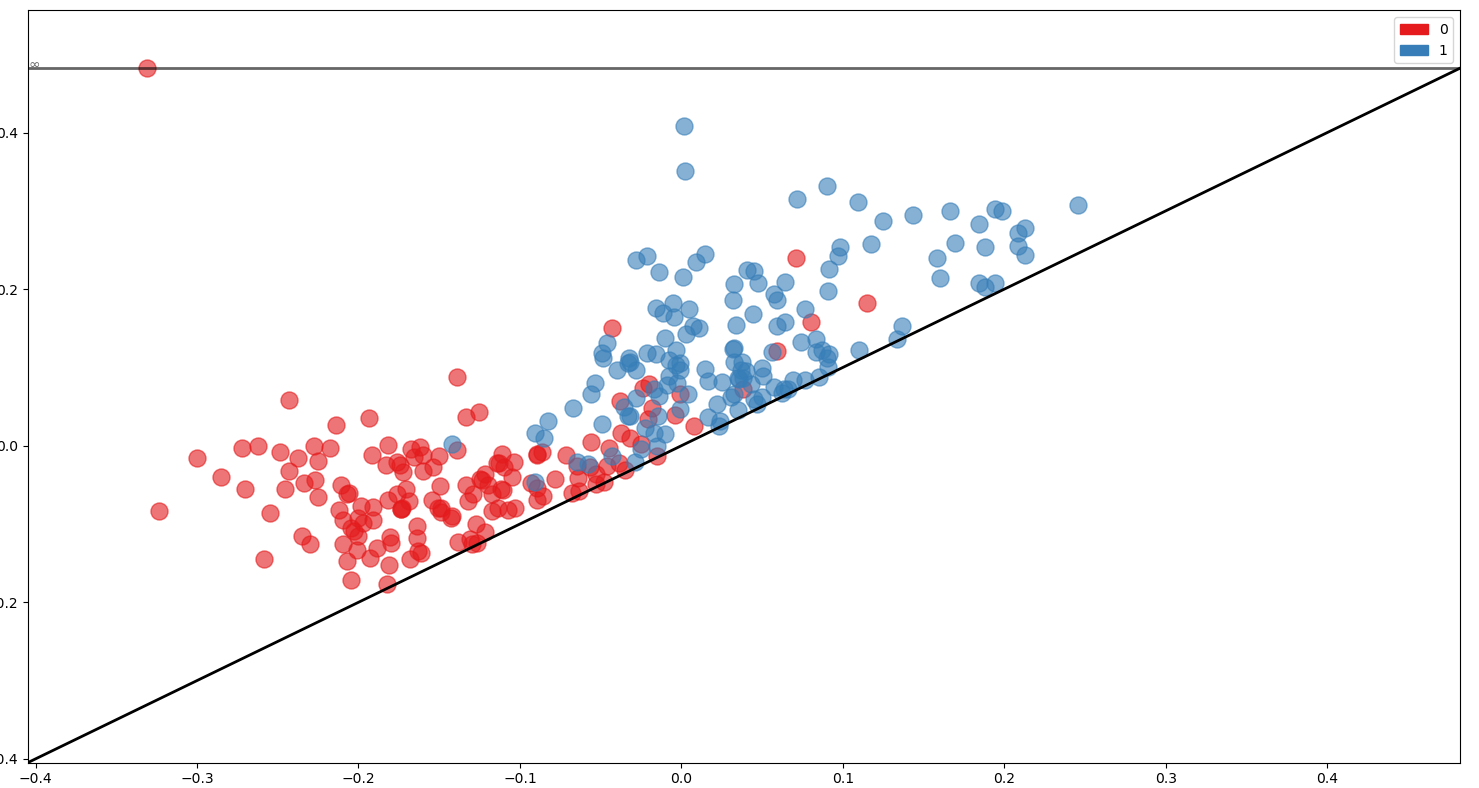}}
\center\caption{Influence of noise on persistence diagrams.}
\label{gaussian_data}
\end{center}
\end{figure*}

\subsection{Two types of penalties}
\label{two_types}

We will introduce two different ways of penalizing the persistence. The first one aims at reducing the dimension of the problem by selecting a `small' number of eigenfunctions, while the second one is more focused on denoising and providing a smoother output.

We first consider the penalty 
\begin{equation}
    \label{eq:omega1}
\Omega_1 (\theta) = \sum_{i=1}^p |\theta_i| \chi (\Phi_i),
\end{equation}
where the $(\Phi_i)_{i=1}^p$ are eigenfunctions of the Laplace-Beltrami operator. From a theoretical viewpoint, every homological dimension should be penalized in order to capture every possible oscillation of the regression function. To give an illustration, considering the pyramidal oscillations of the function depicted in Figure \ref{pyramids2} upward oscillations are captured by homology of dimension one, whereas downward ones may be seen on the 0-dimensional persistence diagram. Depending on the problem at hand, the persistence of only one or a few chosen homological dimensions can be penalized as we will see in Section \ref{sec:expe}. When treating high-dimensional data, it actually becomes a computational necessity to only penalize by the first homological dimensions, as we will discuss in Section \ref{complexity}. The idea behind the penalty is that the more a function oscillates, the more likely it is to overfit the data.  The penalty $\Omega_1$ can be understood as a weighted Lasso penalty, with weights being the persistences of each eigenfunction. The weighted Lasso is a broadly studied model (see~\citealp{buhlmann2011statistics} for an exhaustive reference). It induces sparsity in the representation of the function, and in our context it aims at introducing an inductive bias towards discarding eigenfunctions with a large persistence.

The second persistence based penalty considered here is
\begin{equation}  \label{eq:omega2} \Omega_2 (\theta) = \chi \left( \sum_{i=1}^p \theta_i \Phi_i \right).
\end{equation}
While $\Omega_2$ does not induce sparsity over the parameter $\theta$, it aims at inducing a certain kind of `topological sparsity'. Indeed, the goal of this penalty is for the reconstructed function to have a much smaller number of points in the persistence diagram than the noisy function. Intuitively, this is achieved by optimizing $\theta$ so as to cancel out oscillations of the individual eigenfunctions $\Phi_i$ when summing them up. A main computational issue with the penalty $\Omega_2$ is its non-convexity. Indeed, if we consider the two functions $\cos(nx)$ and $\cos(ny)$, they both have a persistence of order $n$, yet the sum of the two has a persistence of order $n^2$. However, a result in \cite{carriere2020note} shows the convergence of a stochastic gradient descent algorithm towards a critical point of the loss function for the penalty $\Omega_2$.


Finally, we remark that one can potentially consider variants of penalty $\Omega_1$, for instance, a weighted Ridge penalty of the form $\sum_{i=1}^p \theta_i^2 \chi( \Phi_i)$. Preliminary experiments have shown that the weighted Lasso performs better than the weighted Ridge. In addition, as we will see in Section~\ref{sec:expe}, in applications a combination of the two penalties described above is quite efficient, and we actually benefit of the selection properties of the weighted Lasso.

\subsection{Contrasting persistence and total variation}\label{sec:tvvstp}

While persistence at a first glance might appear to be a measure of the regularity of a function similar to total variation, this only is true for a function in one dimension, where the persistence is half the total variation for functions on the circle $\mathbb{S}^1$ (see \citealp{polterovich2019topological}). In higher dimensions these two penalties are no longer equivalent as discussed in the following. 

A first indication of the differences between total variation and persistence is given in Figure~\ref{pers_torus}, showing  Laplace-Beltrami eigenvalues on the flat torus along with persistences and total variations of their eigenfunctions $\sin(nx) \sin(my)$. Note that for this figure, the persistence and the total variation have been computed numerically for eigenfunctions defined on a regular grid. Within an eigenspace, the eigenvalues are sorted in lexicographical order on $(n, m)$. We defer to Section \ref{sec:expdesign} for more details on how to numerically compute persistences. We remark here that the $x$-axis (corresponding to the index of eigenfunctions) in the sub-figures are all aligned.

\begin{figure*}[t]
\begin{center}
\subfigure[Eigenvalue]{\includegraphics[scale=0.11]{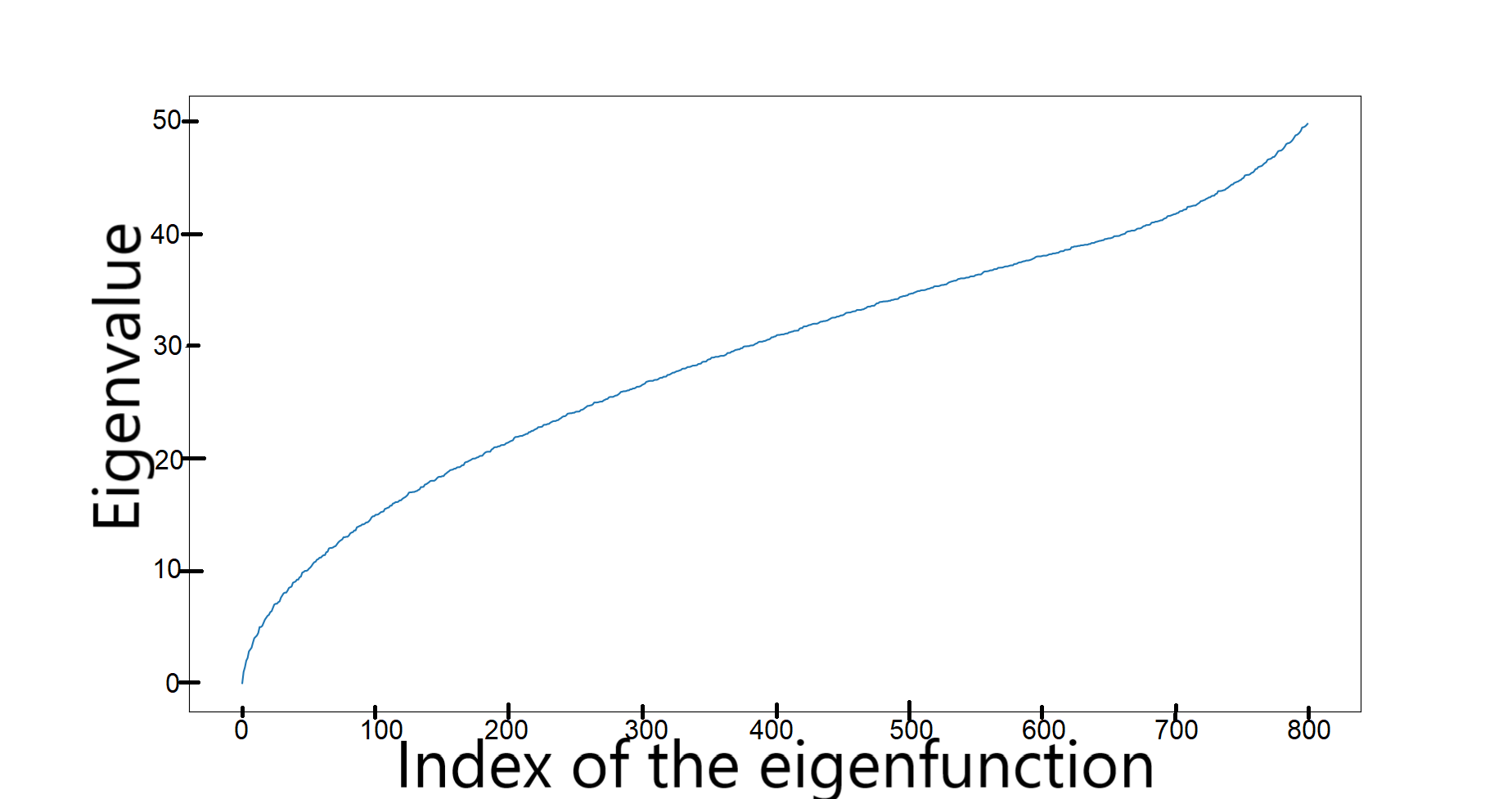}}
\subfigure[Total variation]{\includegraphics[scale=0.14]{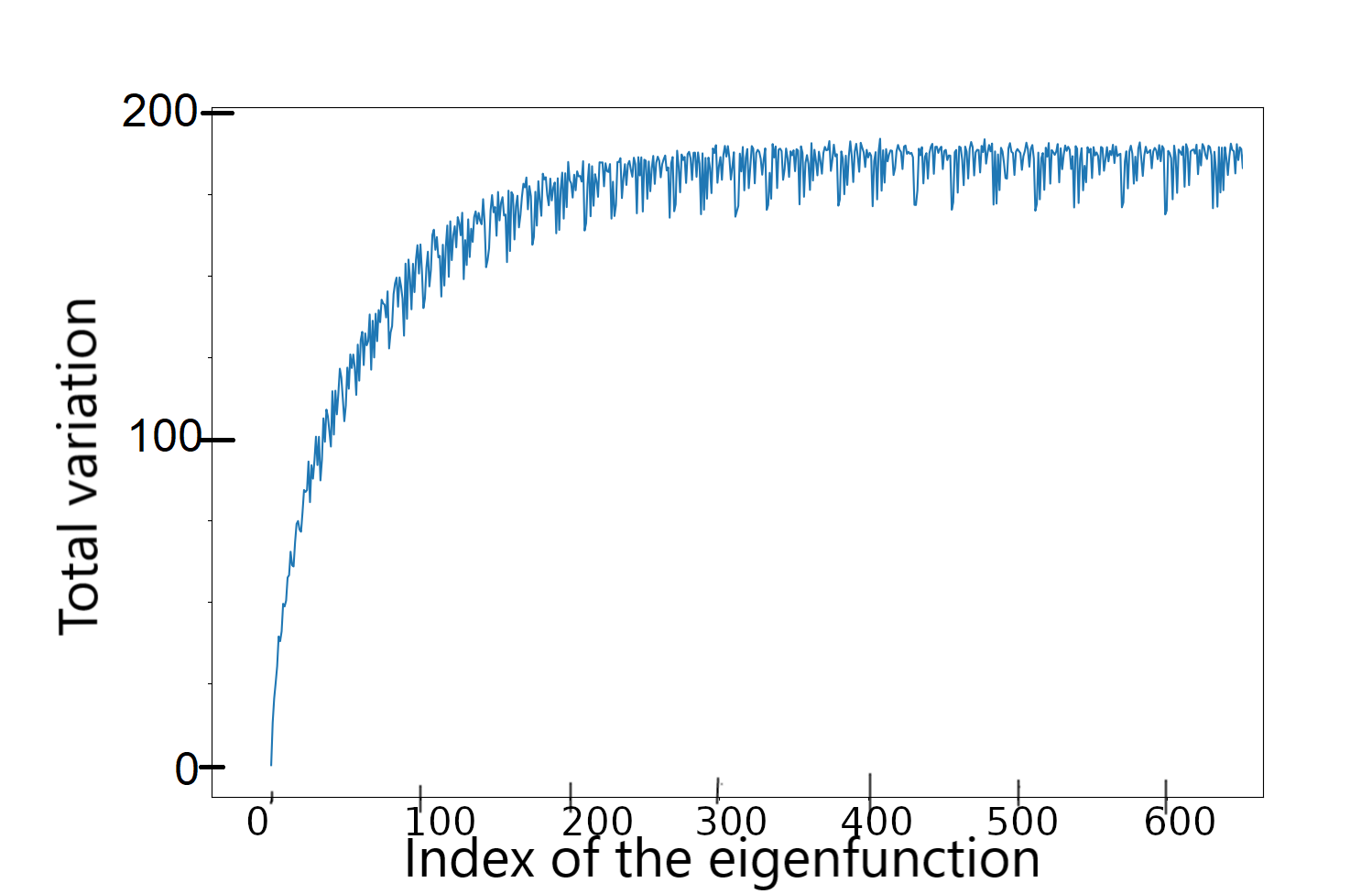}} 
\subfigure[Persistence]{\includegraphics[scale=0.14]{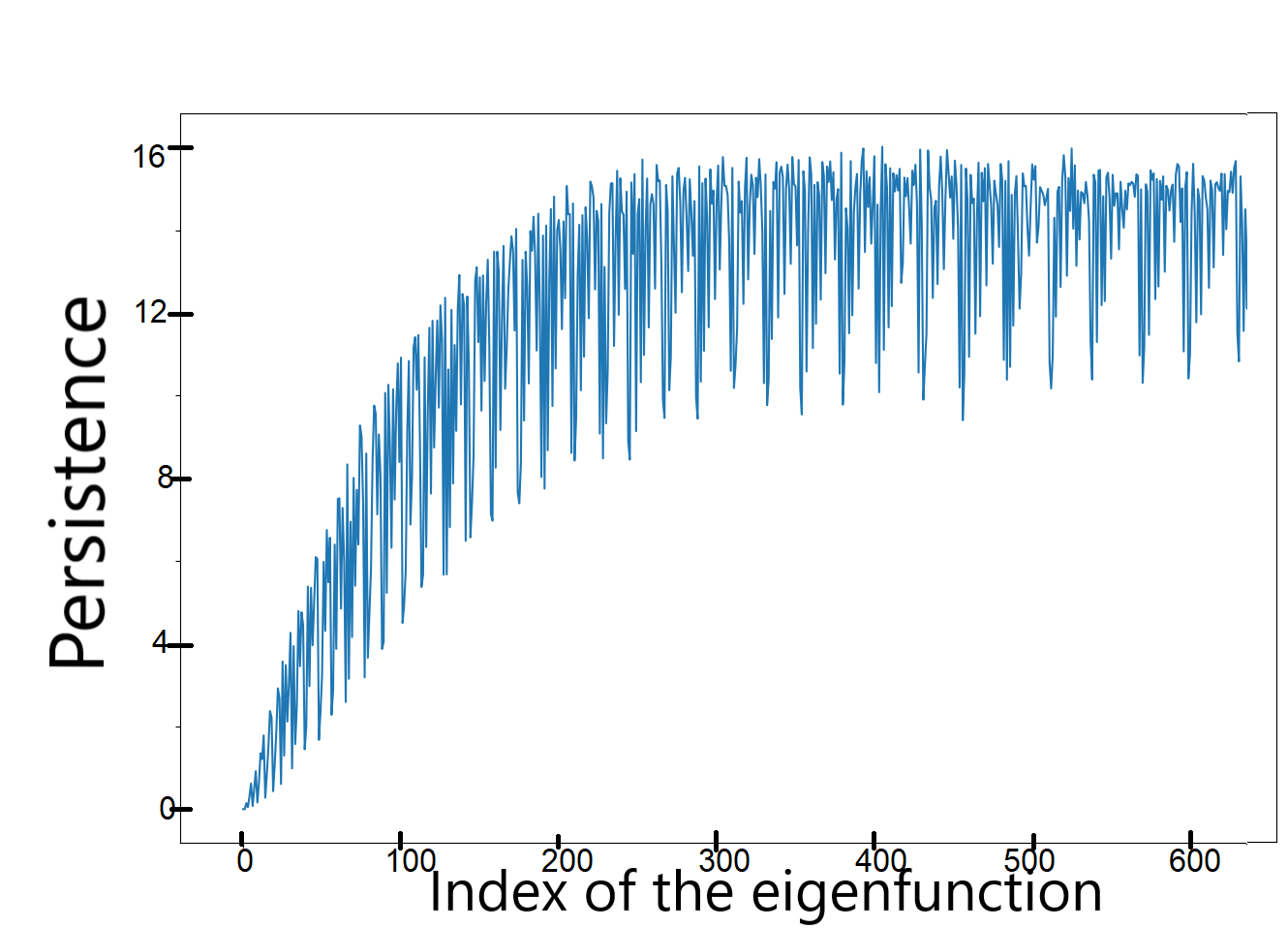}} 
\caption{Various quantifiers of the oscillations of the eigenfunctions of the Laplacian.}
\label{pers_torus}
\end{center}
\end{figure*}

 Figure~\ref{pers_torus} shows that the eigenvalues  increase `smoothly' and using them as weights for a Lasso-type penalty is a way to regularize the oscillatory behavior of eigenfunctions of large index.  However, it can be seen in panel (c) in Figure~\ref{pers_torus} that while the persistences of the eigenfunctions show an increasing trend with increasing eigenvalues, the persistences also show an overlaid periodic behavior. A similar behavior can be seen for the total variation, but with a much smaller periodic effect. The significant periodic behavior of the persistences means that eigenfunctions can have similar persistences, even if their eigenvalues are quite different. Vice versa, eigenfunctions with similar (or equal) eigenvalues can have quite different persistences. Indeed, for even fixed integers $n$ and $m$, let $\Phi: (x,y) \mapsto \sin(nx) \sin(my)$ be a corresponding eigenfunction. Its gradient is $\nabla \Phi (x, y) = (n \cos(nx) \sin(my), m \sin(nx) \cos(my))$ and it therefore has $2nm$ critical points. By using the fact that the number of saddle points must be equal to the sum of the number of maxima and the number of minima because the Euler characteristic of the torus equals 0, we obtain that $\Phi$ has exactly $nm/2$ maxima, $nm/2$ minima and $nm$ saddles. One of the minima, two of the saddle points, and one of the maxima generate essential homology classes whose corresponding persistent homology classes live forever. Following the convention taken, those are truncated at the maximum value of the function. The persistence diagram of dimension 0 has a point of persistence 2 and $mn/2 - 1$ of persistence 1. Therefore, the 0-persistence is $mn/2 + 1$. For homological dimension 1, we have $mn/2 + 1$ points, all of them have persistence 1, so the 1-persistence is $mn/2 + 1$. For homological dimension 2, the only point in the persistence diagram has coordinates $(1, 1)$. The total persistence is therefore $mn+2$. The case where $n$ or $m$ is odd is very similar and also yields that the persistence is of order $mn$. This means that within an eigenspace with eigenvalue $\lambda = n^2 + m^2$, a penalty on the persistence is proportional to $mn$, therefore eigenfunctions $\sin (nx) \sin(my)$ with $n$ or $m$ small are more likely to be kept in the model. For instance, the eigenfunctions $ (x,y) \mapsto \sin(10x)\sin(y)$ and $(x,y) \mapsto \sin(8x) \sin(6y)$ correspond to eigenvalues $101$ and $100$ but have very different persistence, namely five times larger for the latter eigenfunction. This effect is much less pronounced for the total variation penalty. 


While the above already shows some differences between the two types of measures of regularity of functions, the following observation is perhaps even more relevant for our purposes. To this end, let us reconsider the example of the 2-dimensional pyramidal function $f_\varepsilon$ shown in  Fig.~\ref{pyramids}. We already observed above that the TV-penalty does not depend on the choice of $\epsilon.$  To understand the behavior of the persistences of these functions, observe that the sub-level sets of the function $f_\varepsilon$ are empty for levels $t < 0$, and then for $t \in(0,  \varepsilon)$, the sub-level sets have $1/\varepsilon^2$ homology components of dimension 1, that all merge at $\varepsilon$. Therefore, the 1-persistence diagram of $f_\varepsilon$ has $1/\varepsilon^2$ points, all born at level 0 and dying at time $\varepsilon$. Thus, $f_\varepsilon$ has a 1-persistence of $1/\varepsilon$, which thus increase to infinity as $\epsilon \to 0$. This is in stark contrast to the behavior of the total variation. In a similar fashion, the sub-level sets of the function $-f_\varepsilon$ have $1/\varepsilon^2$ connected components from $-\varepsilon$ to $0$ that all merge at 0. Therefore, $\chi_0 (-f_\varepsilon) = 1/\varepsilon$ while it also has a total variation that does not depend on $\varepsilon$. 

\begin{figure}[t]
\begin{center}
\includegraphics[scale=0.4]{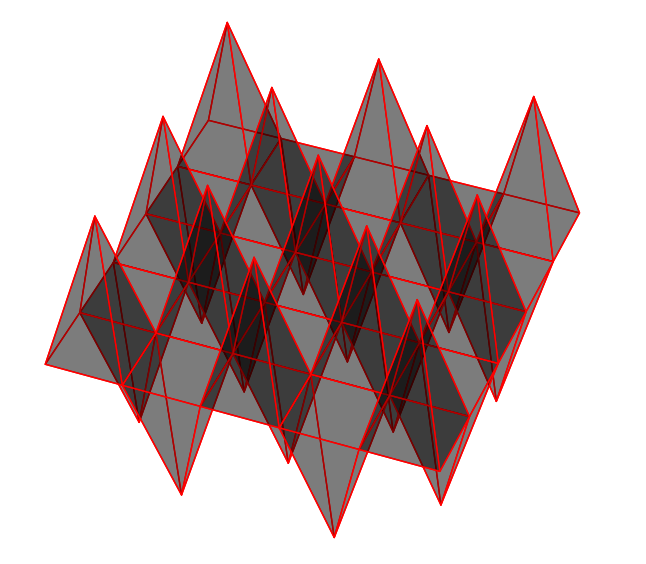}
\end{center}
\caption{Alternating pyramids.}
\label{pyramids2}
\end{figure}

\begin{wraptable}{r}{7.5cm}
\centering
\begin{tabular}{|c|c|c|c|}
	\hline
$\| \cdot \|_\infty$ & Lip & TV & $\chi_1$ \\
	\hline
$ \varepsilon$ & 2 & 2 & $1/ \varepsilon$ \\
	\hline
\end{tabular}
\caption{Quantities characterizing $f_\varepsilon$.}
\label{tab_pyramids}
\end{wraptable} 

As an example of a function where both $\chi_0$ and $\chi_1$ are of importance, consider the same discretization of the space as above, where this time we alternate between a pyramid of height $\varepsilon$ and a reversed pyramid of height $-\varepsilon$ (see Figure~\ref{pyramids2}). Similarly to the two previous cases, the persistence diagram of this function has $1/(2\varepsilon^2) $ points of coordinate ($-\varepsilon$, 0) (for the 0-homology) and $1/(2\varepsilon^2) $ points of coordinate (0, $\varepsilon$) (for the 1-homology). Therefore, its 0-persistence is equal to its 1-persistence, both equal to $1/(2\varepsilon),$ while the total variation here again does not vary with $\varepsilon$.

It is straightforward to build similar examples in higher dimensions and the effect will be even more striking: For instance, discretize the $d-$dimensional hypercube with cubes of size $1/\varepsilon^d$ and consider a function increasing linearly towards the center of each cube until it reaches a maximum of $\varepsilon$. Such a function still has a total variation of order 1 no matter the value of $\varepsilon$, however, its $(d-1)$-persistence will be equal to $1/\varepsilon^{d-1}$.

The takeaway of the above discussion is as follows: Consider Table \ref{tab_pyramids} which provides a summary of various measures of regularity of the pyramidal function $f_\varepsilon$. We see that if we penalize the supremum norm of this function, the penalty will have no effect as $\varepsilon \to 0$. If we try to penalize its Lipschitz constant or its total variation, the penalty will be the same, no matter the scaling $\varepsilon$ and it will therefore have a very limited effect. In contrast to that, when penalizing the persistence, the effect of the penalty will become quite important as $\varepsilon$ becomes small. Such a function $f_\varepsilon$ is assimilated to noise as $\varepsilon \to 0$ and we want to penalize it as much as possible, which is something that can be achieved by persistence but not by total variation.

\subsection{Complexity of functions with bounded persistence: A negative result}
When penalizing persistence, a natural question that immediately arises is to measure the \textit{complexity} of the set of bounded-persistence functions. 
Loosely speaking, if the set of candidate functions has a large complexity,  
seeking for a candidate function (e.g. minimizing an empirical loss) can be very challenging and furthermore, 
the control of the excess risk 
between $f^\star$ and its estimation becomes non-informative (see \citealp[Sections 3 and 11]{mohri2018foundations} for a more detailed exposition). For regression problems, a standard measure of the size (complexity) of a set of functions $\mathcal{F}$ is the so-called fat-shattering dimension introduced in \cite{kearns1994efficientlearn}. 
\begin{definition}
Let $\gamma >0$. A set of points $\mathbb{X}=\{X_1, \ldots, X_l\}$ is said to be $\gamma$-shattered if there exists thresholds $r_1, \ldots, r_l$ such that for any subset $E \subset \mathbb{X}$, there exists a function $f_E \in \mathcal{F}$ such that $f_E(x_i) \geq r_i+\gamma$ if $x_i \in E$ and $f_E(x_i) < r_i - \gamma$ if $x_i \notin E$ for all $i$. The fat-shattering dimension $\mathrm{fat}_\gamma(\mathcal{F})$ of the class $\mathcal{F}$ is then equal to the cardinality of the maximal $\gamma$-shattered set $X$. 
\end{definition}

Note that the fat-shattering dimension of the class $\mathcal{F}$ depends on the parameter $\gamma > 0$. A class $\mathcal{F}$ has infinite fat-shattering dimension if there are $\gamma$-shattered sets of arbitrarily large size. It is well-known that bounds on the fat-shattering dimension lead to bounds on the covering number and hence the metric entropy and Rademacher complexity of the function class. Furthermore,~\cite{bartlett1996fat} showed that a function class $\mathcal{F}$ is learnable (in the sense of~\citealp[Definition 2]{bartlett1996fat}) if and only if it has finite fat-shattering dimension. Unfortunately, in the case of bounded persistence functions, we have the following result:

\begin{theorem}
\label{theo:fat}
Let $\mathcal{H}_V= \{ f: [0, 1]^d \to [0, 1] |\, \chi(f) \leq V \}.$
Let $0 < \gamma <1/2$.
\begin{itemize}
\item If $d=1, \quad \mathrm{fat}_\gamma (\mathcal{H}_V) \leq 1+\lfloor \frac{V}{\gamma} \rfloor$.
\item If $d \geq 2$, then $\mathrm{fat}_\gamma (\mathcal{H}_V) = \infty$ if $2 \gamma \leq V,$ and $\mathrm{fat}_\gamma (\mathcal{H}_V) = 1$ otherwise.
\end{itemize}

\end{theorem}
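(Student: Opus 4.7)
The plan is to split into three cases: $d = 1$; $d \geq 2$ with $V < 2\gamma$; and $d \geq 2$ with $V \geq 2\gamma$. Across all cases the primary lever is that the clipped essential $0$-class of any continuous $f$ on $[0,1]^d$ contributes $\max f - \min f$ to $\chi(f)$; the dimension enters through a secondary argument that controls either the $1$-dimensional total variation or the higher-dimensional connectivity of sub-level sets.

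For $d = 1$, I would begin with a putative $\gamma$-shattered set $x_1 < \cdots < x_l$ with thresholds $r_1, \ldots, r_l$ and consider the two functions $f^{\pm} = f_{E^{\pm}}$ corresponding to the alternating subsets $E^+ = \{x_i : i \text{ odd}\}$ and $E^- = \{x_i : i \text{ even}\}$. For each consecutive index $i$, the shattering conditions force $|f^{\pm}(x_{i+1}) - f^{\pm}(x_i)| > 2\gamma + \sigma_i^{\pm}(r_i - r_{i+1})$ with $\sigma_i^+ = -\sigma_i^-$, so summing the pointwise bounds produces $TV(f^+) + TV(f^-) > 4\gamma(l-1)$ and at least one of the two, call it $f$, satisfies $TV(f) > 2\gamma(l-1)$. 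The bridge to persistence is the one-dimensional inequality $\chi_0(f) \geq TV(f)/2$ for a Morse function on a compact interval, which follows from the elder-rule pairing: each finite pair $(m,M)$ accounts for amplitude $M - m$ in $\chi_0$ and for $2(M-m)$ in the total variation, while the essential class absorbs the net boundary displacement. Combining, $V \geq \chi(f) \geq \chi_0(f) > \gamma(l-1)$, whence $l \leq 1 + \lfloor V/\gamma \rfloor$.

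For $d \geq 2$ with $V < 2\gamma$, a single point is trivially shattered by two constant functions (which have $\chi = 0$), so $\mathrm{fat}_\gamma \geq 1$. For the upper bound, if two points $\{x_1, x_2\}$ are $\gamma$-shattered with thresholds $r_1, r_2$, then $\mathrm{range}(f_{\{1\}}) > 2\gamma + (r_1 - r_2)$ and $\mathrm{range}(f_{\{2\}}) > 2\gamma + (r_2 - r_1)$, so the sum exceeds $4\gamma$ and at least one of the two functions has range larger than $2\gamma$. Since the clipped essential $0$-class contributes the range to $\chi$, one gets $\chi > 2\gamma > V$, contradicting membership in $\mathcal{H}_V$.

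For $d \geq 2$ with $V \geq 2\gamma$, I would exhibit an explicit shattering of any $l$ distinct interior points with common threshold $r_i = \tfrac{1}{2}$. For each subset $E \subseteq \{1,\dots,l\}$, the construction uses $d \geq 2$ to produce a compact contractible region $R_E \subset [0,1]^d$ (a tubular neighborhood of an embedded tree) containing each $x_i \in E$, avoiding each $x_j \notin E$, and meeting $\partial[0,1]^d$ along a single contractible arc. Define $f_E$ as a smooth approximation of the profile $\tfrac{1}{2} + \gamma + \delta$ on $R_E$ and $\tfrac{1}{2} - \gamma - \delta$ on the complement, for small $\delta > 0$. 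The key topological claim is that $[0,1]^d \setminus R_E$ is itself contractible: the complement deformation retracts onto $\partial[0,1]^d$ with a contractible piece removed, which is a topological $(d-1)$-cell. Consequently, every sub-level set of $f_E$ has trivial reduced homology throughout the filtration and the only surviving class is the essential $0$-class, giving $\chi(f_E) = 2\gamma + 2\delta \leq V$ for $\delta$ small enough. The main obstacle is precisely this contractibility step, which is a purely dimensional phenomenon: a $1$-dimensional tree does not disconnect a $d$-dimensional cube once $d \geq 2$, whereas in $d = 1$ the analogous construction would split $[0,1]$ into separate pieces and force $\chi$ to scale with the number of chosen points.
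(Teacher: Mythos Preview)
Your proof is correct and follows the same three-case decomposition as the paper. The $d=1$ argument and the $2\gamma > V$ argument are essentially identical to the paper's: you spell out the alternating-subset total-variation bound that the paper obtains by citing \cite{simon1997bounds}, but the bridge inequality $\chi \geq TV/2$ is the same.

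The genuine difference is in the $d\geq 2$, $2\gamma \leq V$ construction. The paper places the $n$ points at the vertices of a regular $n$-gon and, for each $E$, sets $f_E \equiv -V/2$ on $\mathrm{Conv}(x_i)_{i\notin E}$ and $f_E(x_i) = V/2$ for $i\in E$, interpolating smoothly on the polygon and extending by projection to all of $[0,1]^2$; convex position of the vertices ensures the points of $E$ lie outside the inner hull, and the projection extension pushes the maxima to the boundary so no $1$-cycles are created. Your tree-neighborhood construction reaches the same conclusion via a purely topological route (contractibility of $R_E$ and of its complement) that is more transparently dimension-independent---the paper simply asserts that ``similar examples can be built for $d>2$'' without details. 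One small point: your justification that the complement ``deformation retracts onto $\partial[0,1]^d$ with a contractible piece removed'' is not quite the right picture; it is cleaner to argue directly that removing a tame $d$-ball which meets $\partial[0,1]^d$ in a $(d-1)$-ball from the $d$-ball $[0,1]^d$ leaves a set homeomorphic to a $d$-ball.
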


\begin{proof}
First we note that if $d=1$, for any function $f : [0, 1] \to [0, 1]$, we have that $\chi(f) = \frac{1}{2} (TV(f)+|f(1)-f(0)|)$, and therefore:
\[\frac{1}{2}TV(f) \leq \chi(f) \leq TV(f).
\] 
We can therefore derive the claim for $d=1$ using the fact that the $\gamma-$fat shattering dimension of the set of functions with total variation smaller than $V$ is equal to $1 + \lfloor {V}/{2 \gamma} \rfloor$. We refer to Corollary 4.3 from \cite{simon1997bounds} for a detailed proof. Note that if we had considered functions on the circle, by effectively setting $f(0) = f(1)$, we would have had $\chi(\cdot) =\frac{1}{2} TV(\cdot)$, and therefore the claim for $d=1$ would be an equality.

In general, in the case where $2 \gamma > V$, a point can be shattered by constant functions; on the other hand, given two points $x,y$ that are shattered,
there must exist two real numbers $r_x,r_y$ and two functions $f,g$ in the family such that $f(x) > r_x + \gamma>r_x-\gamma >g(x)$ and $f(y) < r_y - \gamma < r_y + \gamma < g(y)$. Thus (depending if $r_x \geq r_y$ holds, or the opposite) either $f$ or $g$
must necessarily have a range of values larger than $2 \gamma$ and therefore its persistence must be larger than $2 \gamma$, which yields a contradiction.
Hence, in any dimension $\mathrm{fat}_\gamma (\mathcal{H}_V) = 1$ if
$2\gamma >V$.

Assume now $d=2$ and $2 \gamma \leq V$. Consider a set of $n$ points $x_1, \ldots, x_n$ in $[0, 1]^2$ that form a regular $n-$gon. 
Let $E \subseteq \{1, \ldots, n\}$ be an arbitrary subset of indices. 
We consider a function $f$ such that 
\[\qquad \qquad \qquad  \left\{
    \begin{array}{ll}
        f(x) = -V/2 \text{ if } x \in \mathrm{Conv}(x_i)_{i \notin E}, \\
        f(x_i) = V/2 \text{ if } i \in E, 
    \end{array}
\right.
\]
and $f$ increases smoothly on $\mathrm{Conv}(x_i)_{i=1}^n \setminus \mathrm{Conv}(x_i)_{i \notin E}$ and if $x \notin \mathrm{Conv}(x_i)_{i=1}^n$, \\ $f(x)=f(\Pi_{\mathrm{Conv}(x_i)_{i=1}^n}(x))$ where $\Pi_\mathcal{C}(x)$ denotes the projection of $x$ onto a convex set $\mathcal{C}$. The function $f$ defined as such has a persistence of $V$ and the set $\mathcal{H}_V$ therefore $\gamma$-shatters this set of $n$ points. Similar examples can be built for $d >2$.
\end{proof}

This observation highlights a challenge to overcome when constructing penalties involving topological persistence. This serves as our main motivation for our proposed penalties, in order to restrict the size of the set of candidate functions based on eigenbasis expansions.

\subsection{Empirical eigenfunctions} 

For simple manifolds (a flat open space, a torus or a sphere for instance), computing the spectrum of the Laplace-Beltrami operator is analytically tractable. However, for general manifolds this is not possible. Moreover, in practical problems, the manifold itself may be unknown. To deal with this, we take the standard empirical approach and build an undirected graph on the vertex set $V=\{X_1, \ldots, X_n\}$, with weights $W_{ij}$ between the vertex $i$ and the vertex $j$ that are computed according to the ambient metric. In the following experimental study, we consider nearest neighbor and Gaussian-similarity based graphs. We denote by $L=D-W$ the unnormalized graph Laplacian matrix with degree matrix $D$ and weight matrix $W$. The degree matrix is a diagonal matrix simply defined as
\[\qquad \qquad D_{ii} = \sum_{j=1}^n W_{ij} \text{ and } D_{ij} = 0 \text{ if } i \neq j.
\]
The normalized Laplacian matrix is then given by $L^{\prime} = D^{-1/2} L D^{-1/2}$. Both the matrix $L$ and $L^\prime$ are symmetric positive semi-definite and therefore admit a basis of orthogonal eigenvectors. We will only focus on the normalized Laplacian since it provides slightly better convergence guarantees~\citep{von2008consistency}. The use of these eigenvectors is justified by the fact that they converge to the true eigenfunctions of the Laplace-Beltrami operator in various metrics as the number of points $n$ tends to infinity and the scaling parameter of the graph tends to $0$~\citep{koltchinskii1998, trillos2020error}. A few estimated eigenfunctions based on nearest-neighbor graph Laplacian are plotted in Figure \ref{lap_eig}, for points regularly sampled on the unit square folded into a torus. The nearest-neighbor graph is built thanks to the ambient metric of $\mathbb{R}^3$ (and not the metric on the torus). This 
is justified because the results of \citet{trillos2020error} ensure the convergence of the spectrum of the graph Laplacian built on the ambient metric. This is due to the fact that locally, the metric on the manifold resembles the ambient metric (see Proposition 2 of \citealp{trillos2020error}). In what follows, the $i-$th eigenvector of the Graph Laplacian matrix is denoted by $\hat{\Phi}_i$.

We also remark that while we previously defined the persistence for the true Laplace-Beltrami eigenfunctions, we can simply extend the definition for estimated graph-Laplacian eigenfunctions on $V$ by considering $\bar{f}_i := \sum_{j=1}^n \hat{\Phi}_i (X_j) \mathbb{1}_{V_j}$ where $V_j$ is the Voronoi cell centered on $X_j$. We also use the notation $\chi(\hat{\Phi}_i) = \chi (\bar{f}_i)$. Finally, we also mention that using the spectrum of the Laplacian of a graph is a broadly developed idea to perform various statistical learning tasks such as regression or clustering; see, for example, \cite{chun2016eigenvector}, \cite{irion2014hierarchical}, \cite{ ng2002spectral} and \cite{von2008consistency}.

\begin{figure}[t]
\begin{center}
\subfigure[EF 1]{\includegraphics[width=0.3\linewidth]{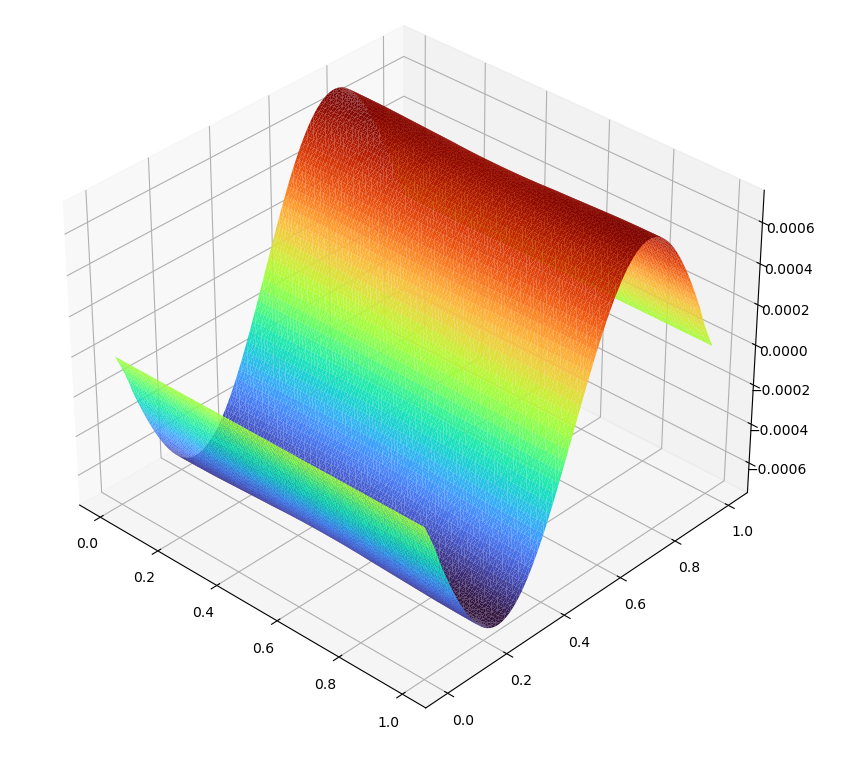}}
\subfigure[EF 2]{\includegraphics[width=0.3\linewidth]{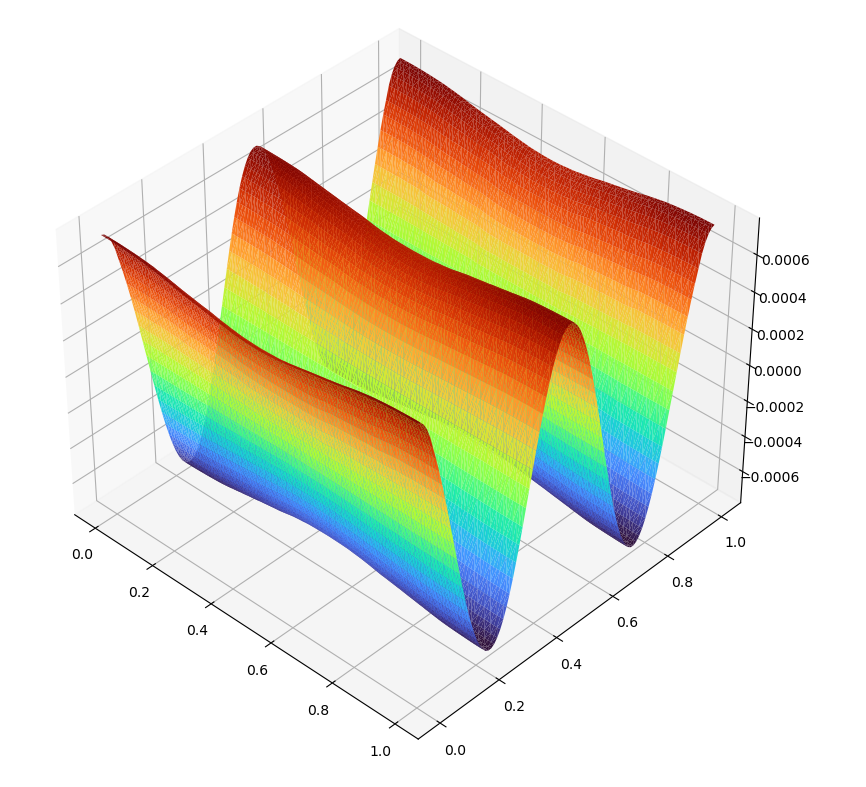}} 
\subfigure[EF 10]{\includegraphics[width=0.3\linewidth]{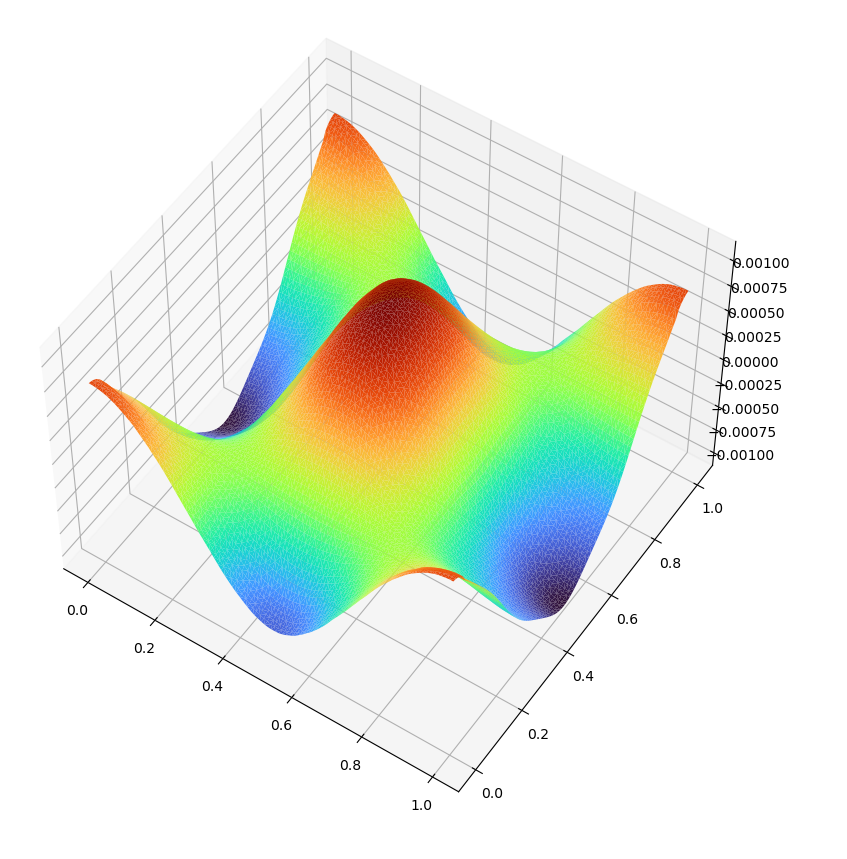}} \\
\subfigure[EF 27]{\includegraphics[width=0.3\linewidth]{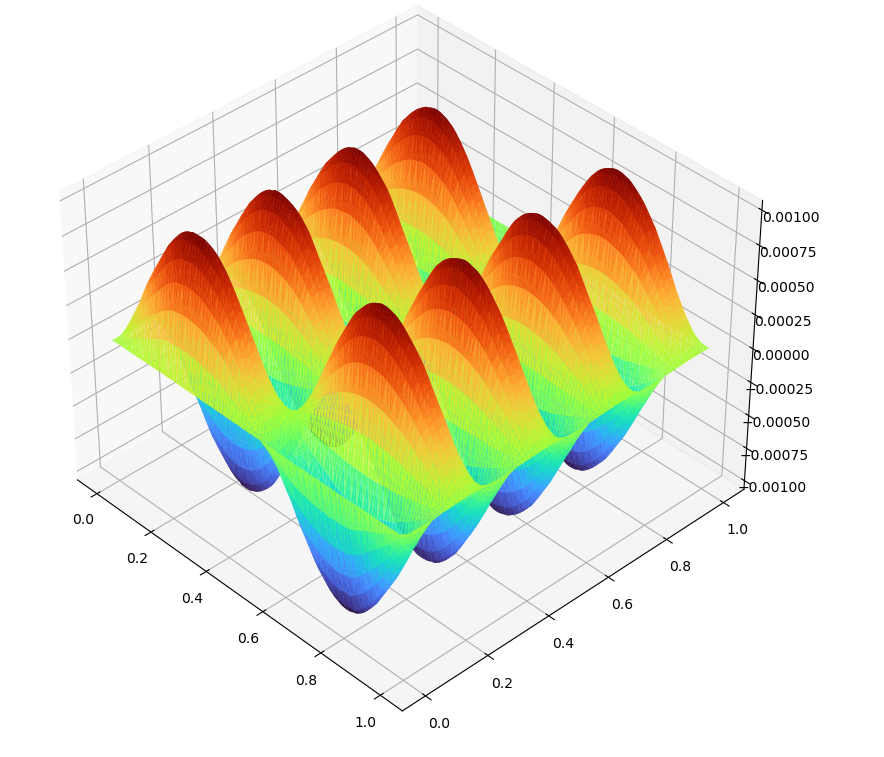}} 
\subfigure[EF 54]{\includegraphics[width=0.24\linewidth]{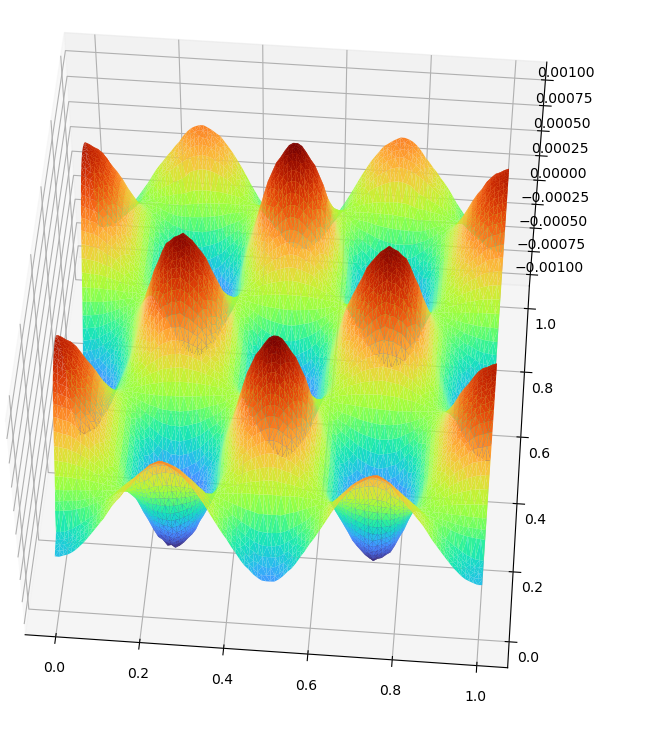}}
\subfigure[EF 75]{\includegraphics[width=0.3\linewidth]{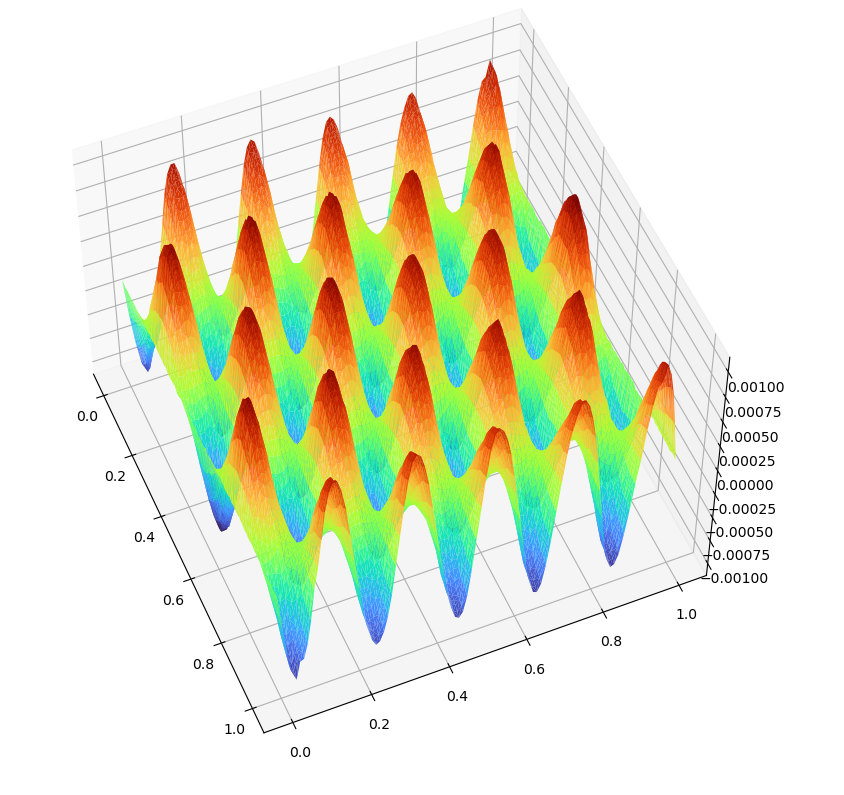}}
\end{center}
\caption{Several estimated eigenfunctions of the graph Laplacian, 10000 points sampled on $\mathbb{T}^2$,  8-NN graph.}
\label{lap_eig}
\end{figure}

\section{Theoretical guarantees}
\label{sec:theo}
We now provide novel theoretical guarantees for the proposed persistence regularization methodology.

\begin{assumption}
\label{asmp:onfstar}
We assume the true regression satisfies one of the two assumptions below.
\item[\textbf{A1}]  There exists $\theta^\star \in \mathbb{R}^p$ with $\|\theta^\star\|_0 = s$ such that $f^\star = \sum_{i=1}^p \theta_i^\star \Phi_i$, where $\Phi_j$ are the eigenfunctions of the Laplace-Beltrami operator with corresponding eigenvalue $\lambda_j$. In this case, we say that $f^\star$ has a sparsity index of $s$ over the basis $(\Phi_i)_{i=1}^p$. 
\item[\textbf{A2}]  There exists $\theta^* \in \mathbb{R}^p$ such that $f^\star = \sum_{j=1}^p \theta_j^\star \Phi_j$ and $f^\star$ is a Morse function.
\end{assumption}

A remark is in order regarding the sparsity assumption on $f^*$ in Assumption~\ref{asmp:onfstar}-\textbf{A1}. As discussed in Section~\ref{sec:tvvstp}, the relationship between the topological regularity of the eigenfunctions and their ordering is not known to be monotone in general, and it only exhibits a periodic trend. Hence, to maintain generality, we assume $f^*$ is a sparse linear combination of the eigenfunctions.

\subsection{Theoretical guarantees for the $\Omega_1$ penalization}
We are first interested in the properties of the penalty $\Omega_1$ introduced in Section \ref{two_types}. This approach can simply be understood as a weighted Lasso with a random design. Since the Laplace-Beltrami eigenfunctions form a basis of $L^2(\mathcal{M})$, the compatibility condition \citep{van2009conditions} is verified and we have a fast rate of convergence.

\begin{theorem}
\label{theo:lasso}
Assume that $f^*$ satisfies Assumption~\ref{asmp:onfstar}-\textbf{A1}. 
Assume we observe $Y_i = f^\star(X_i) + \varepsilon_i$ where $\varepsilon_i$ are zero-mean sub-Gaussian random variables with parameter $\sigma^2$.
Let $\hat{\theta}$ be the minimizer of $\mathcal{L}$
given by~\eqref{eq:penloss} with penalty $\Omega_1$ given by~\eqref{eq:omega1}. 
Then there exists a constant $C(\mathcal{M})$ that depends only on the manifold $\mathcal{M}$ such that for all $x>0$, we have that if
$$\mu \geq 2 \sigma \sqrt{\frac{pC(\mathcal{M})(\ln(p)+x)}{n}},$$ 
then with probability larger than $$1-2e^{-x}-e^{-\frac{0.15n}{C(\mathcal{M})p}+\ln(p)},$$ we have
\begin{align*}
\frac{1}{n}
\|\mathbf{X}(\hat{\theta}-\theta^\star)\|_2^2  + \mu \sum_{i=1}^p \chi(\Phi_i) |\hat{\theta}_i - \theta_i^\star| \leq \mu^2 \frac{s}{2}.
\end{align*}
\end{theorem}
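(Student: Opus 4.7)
The plan follows the classical weighted-Lasso oracle template, exploiting the $L^2$-orthonormality of the Laplace--Beltrami eigenfunctions to turn the empirical Gram matrix into something close to the identity. Write $\tilde\theta := \hat\theta - \theta^\star$ and let $S := \mathrm{supp}(\theta^\star)$ with $|S|=s$, and denote by $\Omega_{1,S}(v) := \sum_{i\in S}\chi(\Phi_i)|v_i|$ the restriction of $\Omega_1$ to coordinates in $S$. The optimality of $\hat\theta$ for $\mathcal{L}$, together with $Y=\mathbf{X}\theta^\star+\varepsilon$ (Assumption~\textbf{A1}) and expansion of the squared loss, gives the basic inequality
\[
\tfrac{1}{n}\|\mathbf{X}\tilde\theta\|_2^2 \;\le\; \tfrac{2}{n}\varepsilon^\top \mathbf{X}\tilde\theta \;+\; \mu\bigl(\Omega_1(\theta^\star) - \Omega_1(\hat\theta)\bigr),
\]
and the reverse triangle inequality applied coordinatewise on $S$ yields $\Omega_1(\theta^\star)-\Omega_1(\hat\theta)\le\Omega_{1,S}(\tilde\theta)-\Omega_{1,S^c}(\tilde\theta)$.

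For the noise term, I would combine sub-Gaussianity of $\varepsilon^\top\mathbf{X}_j$ (parameter $\sigma^2\|\mathbf{X}_j\|_2^2\le n\sigma^2\|\Phi_j\|_\infty^2$) with a uniform bound $\|\Phi_j\|_\infty^2\le C(\mathcal{M})$ on the first $p$ eigenfunctions (compactness of $\mathcal{M}$ and standard Hörmander-type estimates) and a union bound over $j\le p$, producing the event $\mathcal{A}=\{\tfrac{2}{n}\max_j|\varepsilon^\top\mathbf{X}_j|\le\mu/2\}$ with probability at least $1-2e^{-x}$ whenever $\mu$ meets the hypothesized lower bound. In parallel, orthonormality of $(\Phi_j)$ and uniform sampling give $\mathbb{E}[\mathbf{X}^\top\mathbf{X}/n]=I_p$, and a matrix-Chernoff inequality applied to the i.i.d.\ rank-one matrices $\Phi(X_i)\Phi(X_i)^\top$, whose operator norm is controlled by $\sum_j\|\Phi_j\|_\infty^2\le pC(\mathcal{M})$, delivers the design event $\mathcal{B}=\{\tfrac{1}{n}\mathbf{X}^\top\mathbf{X}\succeq\tfrac{1}{2}I_p\}$ of probability at least $1-\exp(-0.15\,n/(C(\mathcal{M})p)+\ln p)$, matching the second term in the stated probability. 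On $\mathcal{B}$ one obtains the compatibility bound $\|\tilde\theta\|_2\le\sqrt{2/n}\,\|\mathbf{X}\tilde\theta\|_2$.

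On $\mathcal{A}\cap\mathcal{B}$, substituting the noise estimate into the basic inequality, adding $\mu\Omega_1(\tilde\theta)$ to both sides and using the sparsity decomposition above leads to an inequality of the form $\tfrac{1}{n}\|\mathbf{X}\tilde\theta\|_2^2+\mu\Omega_1(\tilde\theta)\le c\,\mu\,\Omega_{1,S}(\tilde\theta)$ for a small constant $c$. A Cauchy--Schwarz step bounding $\Omega_{1,S}(\tilde\theta)$ by $\sqrt{s}\|\tilde\theta_S\|_2$ (keeping the weights inside the $\ell^2$ factor so as to absorb them on the left), followed by the compatibility inequality $\|\tilde\theta_S\|_2\le\sqrt{2/n}\|\mathbf{X}\tilde\theta\|_2$ and the arithmetic--geometric mean inequality $2ab\le a^2+b^2$, absorbs $\|\mathbf{X}\tilde\theta\|_2^2/n$ on the left and leaves the claimed $\mu^2 s/2$ on the right.

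The main obstacle is bookkeeping the weights $\chi(\Phi_i)$ so that they remain attached to $\Omega_1(\tilde\theta)$ on the left-hand side without contaminating the factor $s$ on the right. A secondary technical point is obtaining clean uniform sup-norm control on the first $p$ eigenfunctions: Hörmander's estimate $\|\Phi_j\|_\infty\lesssim\lambda_j^{(d-1)/4}$ with Weyl's law only yields $\|\Phi_j\|_\infty^2\lesssim j^{(d-1)/d}$, so one must absorb the mild polynomial growth into $C(\mathcal{M})$ uniformly for $j\le p$, and use the aggregate bound $\sum_{j=1}^p\|\Phi_j\|_\infty^2\le pC(\mathcal{M})$ in the matrix-Chernoff step, which is what produces the $\sqrt{p}$ factor in the threshold on $\mu$.
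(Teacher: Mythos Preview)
Your approach is essentially the same as the paper's: both verify the compatibility condition via a matrix-Chernoff bound on $\lambda_{\min}(\hat\Sigma)$ and control the noise event via sub-Gaussianity plus sup-norm bounds on the eigenfunctions; the paper simply cites Theorem~6.1 of B\"uhlmann--van de Geer for the oracle inequality once these two ingredients are in place, whereas you spell out the basic inequality, the sparsity decomposition, and the Cauchy--Schwarz/AM--GM closing step explicitly. Structurally these are the same argument.

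The one substantive point where your sketch wobbles is the eigenfunction control. You start from individual H\"ormander bounds $\|\Phi_j\|_\infty^2 \lesssim j^{(d-1)/d}$ and then try to ``absorb the mild polynomial growth into $C(\mathcal{M})$'', which would make that constant depend on $p$ and so does not give what the theorem claims. The paper instead uses the \emph{local Weyl law} (its Lemma~\ref{sup_norm}): $\bigl\|\sum_{j=1}^p \Phi_j^2\bigr\|_\infty \le C(\mathcal{M})\,p$. Note that this is a pointwise bound on the sum, not the sum of sup-norms; your statement $\sum_{j=1}^p \|\Phi_j\|_\infty^2 \le pC(\mathcal{M})$ is a different inequality that does not follow from the local Weyl law (and would in fact fail via the H\"ormander estimate you quote). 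From the correct bound one immediately gets both (i) the operator-norm control $\|\Phi(X_i)\Phi(X_i)^\top\|_{\mathrm{op}} = \sum_j \Phi_j(X_i)^2 \le C(\mathcal{M})p$ needed in the matrix-Chernoff step, and (ii) the crude individual bound $\Phi_j(X_i)^2 \le C(\mathcal{M})p$ for the sub-Gaussian parameter in the noise event. It is this second use --- not the matrix-Chernoff step, as you suggest --- that produces the $\sqrt{p}$ factor in the threshold on $\mu$. With this correction your ``secondary technical point'' disappears, and the rest of your outline matches the paper's proof.
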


The proof of the above theorem is provided in Section~\ref{sec:proofs}. For the result in Theorem~\ref{theo:lasso} to hold with large probability, the number of samples $n$ should be of order at least $p \ln(p)$. Under those circumstances, if the trade-off parameter $\mu$ is chosen of order $\sigma \sqrt{\frac{ p \ln(p)}{n}}$ as this theorem suggests, it can be shown that the overall prediction error is of order $ \frac{\sigma^2 p \ln(p)s}{n}$, up to multiplicative constants~\citep{van2009conditions}. According to Lemma~\ref{sup_norm} in Section~\ref{sec:proofs}, the use of a Laplace-Beltrami eigenbasis here translates into an additional multiplicative factor of $p$ as opposed to a standard Lasso with a design matrix satisfying the RIP conditions~\citep{buhlmann2011statistics}.

In the case where we study an approximation of the Laplace-Beltrami eigenbasis by using the estimated eigenfunctions $\hat{\Phi}_i$ of the graph Laplacian, the design matrix can be chosen to be orthonormal. Under those circumstances, the Lasso has an explicit solution, which we illustrate next. Let a design matrix $\mathbf{\hat{X}}$ be built on the estimated eigenfunctions or the graph Laplacian eigenvectors. Then, the minimizer $\hat{\theta}$ of the functional 
\begin{align}\label{eq:empobj}
\mathcal{L}^\prime(\theta) = \|Y-\mathbf{\hat{X}} \theta \|_2^2 + \mu \sum_{i=1}^p |\theta_i| \chi (\hat{\Phi}_i),
\end{align}
if a soft-thresholding type estimator given for all $j$ by:
\begin{align}\label{eq:thetaj}
 \hat{\theta}_j = \chi (\hat{\Phi}_j) \hat{\Phi}_j^T Y \bigg(1-\frac{\mu \chi (\hat{\Phi}_j)}{2 |\hat{\Phi}_j^T Y|} \bigg)_{+}.
\end{align}

To see this, consider a new design matrix $\mathbf{\hat{X}}$ such that  $\mathbf{\hat{X}}_{i,j} = \hat{\Phi}_j(X_i) / \chi (\hat{\Phi}_j)$. Minimizing the functional in~\eqref{eq:empobj} is then equivalent to minimizing the functional
\[ \qquad\qquad \mathcal{L}^{\prime}(\theta) = \|Y-\mathbf{\hat{X}} \theta\|^2+\mu \|\theta \|_1,
\]
by solving a standard Lasso problem. This function is convex in $\theta$ and therefore admits a global minimum (not necessarily unique) that we will denote by $\hat{\theta}$. Although $\mathcal{L}^\prime$ is not differentiable because of the $L^1$ penalty, we can still write the optimality conditions in terms of its sub-differential. Specifically, we have the sub-differential to be
\[ \qquad\partial \mathcal{L}^\prime (\theta) = \{ - 2\mathbf{\hat{X}}^T (Y-\mathbf{\hat{X}} \theta) + \mu z : z \in \partial \| \theta \|_1 \},
\]
from which the optimality condition could be obtained as 
\[\mathbf{\hat{X}}^T \mathbf{\hat{X}} \hat{\theta} = \mathbf{\hat{X}}^T Y - \frac{\mu}{2}\hat{z},
\]
where $\hat{z} \in \mathbb{R}^p$ is such that $\hat{z}_j = \mathrm{sgn}(\hat{\theta}_j)$ whenever $\hat{\theta}_j \neq 0$ and $\hat{z}_j \in [-1, 1]$ whenever $\hat{\theta}_j = 0.$ Due to the orthogonal design, the term $\mathbf{\hat{X}}^T \mathbf{\hat{X}}$ simplifies and a straightforward analysis of the possible cases given the sign or the nullity of $\hat{\theta}_j$, for all $j$ leads to the solution of $\theta_j$ given in~\eqref{eq:thetaj} for all $j$. We therefore have an explicit condition on the eigenbasis selection process, that is $\hat{\theta}_j = 0$ if and only if $ |\langle \hat{\Phi}_j, Y \rangle| \leq \frac{\mu}{2} \chi (\hat{\Phi}_j)$. Stated otherwise, an eigenvector with a high persistence has to explain the data significantly well to be kept in the model.

\subsection{Theoretical guarantees for the $\Omega_2$ penalization}
The penalty $\Omega_2$ being non-convex, a thorough theoretical study seems more complicated. Nonetheless guarantees on the prediction error and on the persistence of the reconstruction can be derived, as described below.

\begin{theorem}
\label{oracle_theo}
Let $f^*$ satisfy Assumption~\ref{asmp:onfstar}-\textbf{A2}. Assume we observe $Y_i = f^\star(X_i) + \varepsilon_i$ where $\varepsilon_i$ are zero-mean sub-Gaussian random variables with parameter $\sigma^2$. We further assume that for all $i=1,\ldots,n$, $X_i$ is sampled uniformly from $\mathcal{M}$ and that $\varepsilon_i$ is
independent of $X_i$. Let $\hat{\theta}$ be the minimizer of $\mathcal{L}$
given by~\eqref{eq:penloss} with penalty $\Omega_2$ given by~\eqref{eq:omega2}. Then for all $x>0$, the estimated parameter $\hat{\theta}$ verifies with probability larger than  
$$1 - 2e^{-x}-\exp \left(\frac{-0.1n}{C(\mathcal{M})p} + \ln (2p) \right),$$
that
\begin{align*}
\|\theta^\star - \hat{\theta} \|^2 \leq 16 \frac{p\sigma^2}{n} \left[1+ C(\mathcal{M}) \sqrt{\frac{2x}{n}} \right] (1+ \sqrt{x})^2  + 4 C(\mathcal{M}) p\, (2\nu(f^\star)+\zeta)^2 \mu^2.
\end{align*}
Here, we recall that $\mu$ is the trade-off parameter, $C(\mathcal{M})$ is a constant that depends only on the manifold $\mathcal{M}$, $\zeta$ is the total Betti number of $\mathcal{M}$ and $\nu(f^\star)$ is the number of points in the persistence diagram of $f^\star$. In addition, we also have under the same hypotheses with $\hat f = \sum_{j=1}^p \hat \theta_i \Phi_i$:
\[ \chi (\hat{f}) \leq \chi (f^{\star}) + 16 \frac{p \sigma^2}{\mu n}\left[1+ C(\mathcal{M})
\sqrt{\frac{2x}{n}} \right] (1+\sqrt{x})^2 + 8 C(\mathcal{M}) p (2\nu(f^\star)+\zeta)^2 \mu.
\]
\end{theorem}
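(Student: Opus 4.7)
I would start from the basic inequality obtained from the optimality of $\hat\theta$, namely $\|Y-\mathbf{X}\hat\theta\|_2^2 + \mu\chi(\hat f) \leq \|Y-\mathbf{X}\theta^\star\|_2^2 + \mu\chi(f^\star)$. Substituting $Y = \mathbf{X}\theta^\star + \varepsilon$ and cancelling $\|\varepsilon\|_2^2$ yields, with $u := \hat\theta - \theta^\star$,
\[
\|\mathbf{X}u\|_2^2 \;\leq\; 2\langle \varepsilon, \mathbf{X}u\rangle \;+\; \mu\bigl(\chi(f^\star) - \chi(\hat f)\bigr).
\]
Since $f^\star$ is Morse by Assumption \textbf{A2}, Lemma \ref{stability} applied with $f = f^\star$, $h = \hat f$ controls the topology term by $(2\nu(f^\star) + \zeta)\|\hat f - f^\star\|_\infty$, and Lemma \ref{sup_norm} converts this sup-norm bound on an element of $\mathrm{span}(\Phi_1,\dots,\Phi_p)$ into $\sqrt{C(\mathcal{M})p}\,\|u\|_2$, using a uniform bound of the form $\sum_{j=1}^p \Phi_j(x)^2 \leq C(\mathcal{M})p$.

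For the noise cross-term, the key trick is to project onto the column space of $\mathbf{X}$: writing $\varepsilon = P_\mathbf{X}\varepsilon + (I-P_\mathbf{X})\varepsilon$, one has $\langle \varepsilon, \mathbf{X}u\rangle = \langle P_\mathbf{X}\varepsilon, \mathbf{X}u\rangle \leq \|P_\mathbf{X}\varepsilon\|_2\,\|\mathbf{X}u\|_2$. A sub-Gaussian tail bound for the norm of a sub-Gaussian vector projected onto a $p$-dimensional subspace produces the $\sigma^2 p\,(1+\sqrt{x})^2$ factor appearing in the statement, with probability at least $1 - 2e^{-x}$. For the random design, I would apply a matrix Bernstein/Chernoff inequality to the rank-one matrices $\Phi(X_i)\Phi(X_i)^\top$ (where $\Phi(X_i) = (\Phi_1(X_i),\dots,\Phi_p(X_i))^\top$); together with the uniform eigenfunction bound this yields a two-sided deviation $(1-\delta)I \preceq \tfrac{1}{n}\mathbf{X}^\top\mathbf{X} \preceq (1+\delta)I$ on an event whose failure probability matches the $\exp\bigl(-0.1n/(C(\mathcal{M})p) + \ln(2p)\bigr)$ appearing in the statement, and the factor $[1+C(\mathcal{M})\sqrt{2x/n}]$ comes from tracking the deviation parameter.

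Combining these three ingredients converts the basic inequality into a quadratic in $\|\mathbf{X}u\|_2$:
\[
\|\mathbf{X}u\|_2^2 \;\leq\; 2\,\|P_\mathbf{X}\varepsilon\|_2\,\|\mathbf{X}u\|_2 \;+\; \mu(2\nu(f^\star)+\zeta)\sqrt{C(\mathcal{M})p}\,\|u\|_2.
\]
Completing the square gives $\|\mathbf{X}u\|_2^2 \leq 4\|P_\mathbf{X}\varepsilon\|_2^2 + 2\mu(2\nu(f^\star)+\zeta)\sqrt{C(\mathcal{M})p}\,\|u\|_2$. Plugging in the lower design bound $\|\mathbf{X}u\|_2^2 \geq (n/C(\mathcal{M}))\|u\|_2^2$ and applying Young's inequality on the term linear in $\|u\|_2$ isolates $\|u\|_2^2$ and produces the first conclusion. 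For the bound on $\chi(\hat f)$, I would return to the basic inequality, keep $\chi(\hat f)$ on the left, rearrange into $\mu(\chi(\hat f) - \chi(f^\star)) \leq -\|\mathbf{X}u\|_2^2 + 2\|P_\mathbf{X}\varepsilon\|_2\|\mathbf{X}u\|_2$, then control the right-hand side using the design upper bound $\|\mathbf{X}u\|_2^2 \leq (3n/C(\mathcal{M}))\|u\|_2^2$ and substitute the just-proven bound on $\|u\|_2^2$.

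The main obstacle is the careful calibration between the three scales $n$, $p$, $\mu$ that enter the proof. The stability term is linear in $\|u\|_2$, whereas the design concentration gives quadratic control $\|\mathbf{X}u\|_2^2 \sim n\|u\|_2^2/C(\mathcal{M})$, so the Young splitting must be tuned precisely to produce the exact product of $p$, $(2\nu(f^\star)+\zeta)^2$ and $\mu^2$ in the final bound while leaving the noise contribution in the form $p\sigma^2/n$. A secondary subtlety is that $u$ depends on $\varepsilon$ through $\hat\theta$, which is why the projected quantity $\|P_\mathbf{X}\varepsilon\|_2$, rather than a bound along a fixed direction, is the right object to concentrate.
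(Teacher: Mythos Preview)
Your approach is largely correct and structurally close to the paper's: the same three ingredients appear (basic inequality from optimality of $\hat\theta$, Lemma~\ref{stability} combined with Lemma~\ref{sup_norm} for the persistence term, and a matrix Chernoff bound for $\hat\Sigma = \tfrac{1}{n}\mathbf{X}^\top\mathbf{X}$). The paper organizes them slightly differently. Rather than working with $\|\mathbf{X}u\|_2^2$ and sandwiching it between multiples of $n\|u\|^2$, the paper uses orthonormality of the $\Phi_j$ to write $\|u\|^2 = P\gamma(\hat\theta,\cdot) - P\gamma(\theta^\star,\cdot)$ directly as a population excess risk, then bounds $(P-P_n)(\gamma(\hat\theta,\cdot)-\gamma(\theta^\star,\cdot))$; the design concentration enters only once, as the one-sided bound $(P_n-P)(\langle u,\Phi\rangle^2)\le\tfrac12\|u\|^2$. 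For the noise cross-term the paper controls $\bigl\|\tfrac{1}{n}\sum_i\varepsilon_i\Phi(X_i)\bigr\|$ in $\mathbb{R}^p$ (via a conditional-on-$X$ sub-Gaussian argument together with Hoeffding on $\sum_i\|\Phi(X_i)\|^2$), rather than your $\|P_{\mathbf{X}}\varepsilon\|$ in $\mathbb{R}^n$. Both are valid Cauchy--Schwarz splits; yours avoids one conversion step but then needs both eigenvalue bounds on $\hat\Sigma$.

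One point in your write-up is misattributed: the factor $[1+C(\mathcal{M})\sqrt{2x/n}]$ does not come from ``tracking the deviation parameter'' in the matrix Chernoff step. That event uses a fixed deviation $\delta=\tfrac12$, and its failure probability $\exp\bigl(-0.1n/(C(\mathcal{M})p)+\ln(2p)\bigr)$ already reflects that choice; there is no $x$ in it. In the paper the factor $[1+C(\mathcal{M})\sqrt{2x/n}]$ arises from the Hoeffding deviation of $\tfrac{1}{n}\sum_i\|\Phi(X_i)\|^2$ around its mean $p$, which controls the conditional sub-Gaussian parameter of $\sum_i\varepsilon_i\Phi(X_i)$. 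Your projected-noise route would in fact \emph{not} produce this factor at all, since $\|P_{\mathbf{X}}\varepsilon\|$ on a $p$-dimensional subspace has an $X$-free sub-Gaussian tail bound; that still yields a valid (and slightly cleaner) upper bound, but it will not match the exact form of the stated inequality.
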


The proof of the above theorem is provided in Section~\ref{sec:proofs}. This result holds with large probability 
if the number of samples $n$ is at least of order $O(p \ln(p))$. Choosing $\mu = O(1/\sqrt{n})$ ensures
that the trade-off term is of the same order as the main term, and we obtain a rate of convergence of order $O(p/n)$ for $\|\hat \theta - \theta^*\|$, which is what we can expect from such a model without any sparsity assumption. The second part of this theorem ensures the topological consistency of the reconstructed function $\hat{f}$, namely that it has a persistence that remains close to the persistence of the regression function $f^\star$ and is therefore topologically smooth to some extent. Again choosing $\mu = O(1/\sqrt{n})$ 
(as suggested above to keep a classical convergence rate for the parameter) leads to a consistency of the persistence of $\hat{f}$ towards that of $f^*$ at a rate $O(p/\sqrt{n})$. We therefore need a larger sample size (of order at least $O(p^2)$) to obtain consistency of the total persistence.

Note that according to Equation 6.14 from \cite{polterovich2019topological}, the number of features with persistence larger than some given value $c$ can be upper bounded by $(\kappa \|\nabla f\|_\infty/c)^d$ where $\kappa$ is a constant that depends only on the metric of the manifold. This shows a connection between the topological smoothness developed in this paper and a more usual notion of smoothness.

\subsection{Theoretical prospects}

A first possible extension of the theoretical results presented in this section can be to generalize Theorem \ref{oracle_theo} to mis-specified models, that is, cases where the target function $f^\star$ no longer belongs to $\mathrm{Span} (\Phi_1, \ldots, \Phi_p)$ but can be any $L^2$ function. To this end, a lower bound on the bias incurred with such a model for a function with fixed total persistence may be found in Proposition 2.1.1 from \cite{polterovich2019persistence} in the case of surfaces.

\begin{theorem}
\label{approx_neg}
Let $\mathcal{M}$ be a compact orientable Riemannian surface without boundary. Denote by $\mathcal{F}_\lambda$ the set of smooth functions over $\mathcal{M}$, such that for every $f \in \mathcal{F}_\lambda$, $\|f\|_2 = 1$ and $\| \Delta f \|_2 \leq \lambda$. Then there exists a constant $\kappa$ that only depends on $\mathcal{M}$ and its metric such that for every Morse function $f : \mathcal{M} \to \mathbb{R}$,

\[ \inf \{ \|f-h\|_\infty | h \in \mathcal{F}_\lambda \} \geq \frac{1}{2(\nu (f) +1)} \left( \chi(f) - \kappa (\lambda +1 ) \right).
\]

\end{theorem}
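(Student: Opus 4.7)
The plan is to combine a stability-type inequality for total persistence with an a priori upper bound on the total persistence of functions in $\mathcal{F}_\lambda$. The key auxiliary claim is that there exists a constant $\kappa_0 = \kappa_0(\mathcal{M})$ such that
\[
\chi(h) \leq \kappa_0(\lambda+1) \qquad \text{for every } h \in \mathcal{F}_\lambda.
\]
Granting this, for any Morse $f$ and any $h \in \mathcal{F}_\lambda$ I would apply Lemma~\ref{stability} (using that the total Betti number of the compact orientable surface $\mathcal{M}$ is itself a constant $\zeta = 2+2g$ depending only on $\mathcal{M}$) to obtain an upper bound on $\chi(f) - \chi(h)$ of the shape $c(\nu(f)+1)\|f-h\|_\infty$ for some $c = c(\mathcal{M})$. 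Substituting $\chi(h) \leq \kappa_0(\lambda+1)$, taking the infimum over $h$, and absorbing $\kappa_0$ and $c$ into a single constant $\kappa$ produces the stated inequality; the regime $\chi(f) < \kappa(\lambda+1)$ renders the right-hand side non-positive, so the conclusion holds trivially there.

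The technical crux is the auxiliary bound $\chi(h) \lesssim \lambda+1$, and this is where the two-dimensional hypothesis intervenes. My plan would be to expand $h = \sum_j c_j \Phi_j$ in the Laplace--Beltrami eigenbasis; the constraints $\|h\|_2 = 1$ and $\|\Delta h\|_2 \leq \lambda$ translate to $\sum_j c_j^2 = 1$ and $\sum_j c_j^2 \lambda_j^2 \leq \lambda^2$, so the $L^2$-mass of $h$ is essentially concentrated on eigenfunctions with eigenvalue at most $\lambda$. Weyl's law on a two-dimensional surface caps the number of such eigenfunctions by $O(\lambda)$, and elliptic/Sobolev regularity gives $\|h\|_\infty \lesssim \|h\|_{W^{2,2}(\mathcal{M})} \lesssim \lambda+1$. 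A Morse-theoretic input (for instance a Courant nodal-domain argument together with critical-point counts for low-frequency spectral projections) then bounds the number of persistence-diagram points of $h$ in terms of this spectral count, while each individual persistence is bounded by $\|h\|_\infty$; collecting the bounds yields $\chi(h) = O(\lambda+1)$. The additive $1$ accounts for the finitely many essential (infinite-persistence) classes of $\mathcal{M}$, which are clipped to $\max(h)-b$ and thus controlled by $\|h\|_\infty$.

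The main obstacle is precisely this auxiliary estimate. A function $h$ with small $\|\Delta h\|_2$ is smooth only in a mean-square sense and could a priori exhibit many small-scale oscillations contributing individually small but cumulatively significant persistences; naive Sobolev embedding does not rule this out. What rescues the argument in two dimensions is the tight link between the spectral dimension of the low-frequency subspace (controlled by Weyl's law) and the topological complexity of functions living in it, and making this link quantitative and uniform over all of $\mathcal{F}_\lambda$ is the delicate step. Once this is in place, the remainder is essentially a formal manipulation of the stability inequality together with a constant-chasing argument exploiting that the topological invariants of $\mathcal{M}$ (in particular $\zeta$) are fixed once and for all by $\mathcal{M}$.
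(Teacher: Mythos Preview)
The paper does not prove this theorem; it merely quotes it as Proposition~2.1.1 of \cite{polterovich2019persistence} in the ``Theoretical prospects'' subsection. So there is no in-paper proof to compare against, and your proposal should be judged as a standalone attempt.

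Your high-level strategy is the right one and is essentially how the cited source proceeds: use the stability inequality (Lemma~\ref{stability}) to get $\chi(f) \leq \chi(h) + (2\nu(f)+\zeta)\|f-h\|_\infty$, combine with an a priori bound $\chi(h) \leq \kappa_0(\lambda+1)$ for all $h \in \mathcal{F}_\lambda$, and rearrange. (A minor wrinkle: the stability lemma gives denominator $2\nu(f)+\zeta$, which matches $2(\nu(f)+1)$ only when $\zeta=2$, i.e.\ for the sphere; for higher genus the explicit prefactor is slightly worse, though still of the form $c(\mathcal{M})(\nu(f)+1)$.)

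The genuine gap is in your sketch of the auxiliary bound $\chi(h)\lesssim \lambda+1$. As you have written it, the argument gives the wrong power of $\lambda$: you bound the number of diagram points by $O(\lambda)$ (via Weyl counting) and each individual persistence by $\|h\|_\infty = O(\lambda+1)$ (via Sobolev embedding), and the product is $O(\lambda^2)$, not $O(\lambda)$. Moreover, the ``Courant nodal-domain argument together with critical-point counts for low-frequency spectral projections'' is not a tool that delivers what you need here: Courant bounds nodal domains of a single eigenfunction, not critical points of an arbitrary $h$ with $\|\Delta h\|_2\leq\lambda$, and there is no general bound of the form ``$h$ with spectrum essentially below $\lambda$ has $O(\lambda)$ critical points'' on a surface.

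The actual argument in the cited source is more delicate and genuinely two-dimensional: it controls the full barcode of $h$ (not just the count and the sup) via a multiscale covering argument that bounds, for each $c>0$, the number of bars of length exceeding $c$ in terms of $\|\nabla h\|_\infty$ and the geometry of $\mathcal{M}$, and then integrates in $c$. The surface hypothesis enters through the scaling in Weyl's law and in the $L^\infty$ bounds on $h$ and $\nabla h$ that make the integral converge to $O(\lambda)$ rather than a higher power. You have correctly flagged this step as the obstacle, but the mechanism you propose does not close it.
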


This means that for a fixed $\lambda$, if the persistence of the target function $f$ is too large, it will be impossible to approximate it with eigenfunctions of corresponding eigenvalue smaller than $\lambda$, and in order to have a chance of approximating it, we will have to allow for more oscillating functions by letting $\lambda$ increase. Balancing the estimation and approximation errors, based on the above result might lead to a data-driven choice for selecting $p$.

The other possibility of extension would be to establish consistency results for the case of estimated eigenfunctions, based on the graph Laplacian approach. For example, in order to derive an oracle inequality similar to that of Theorem \ref{oracle_theo}, we would need to establish the convergence of the persistence of the eigenvectors towards the persistence of the eigenfunctions. A potential approach is to leverage the stability results for persistence (for example, Lemma~\ref{stability}), and combine them with error rates for eigenfunction estimation (for example, recent results by~\citealp{dunson2021spectral} and \citealp{calder2020lipschitz}  for the empirical uniform norm). However, there are several technical challenges to overcome, in order to implement the above proof strategy. For example, it is not clear if the estimated eigenfunctions satisfy the regularity conditions required, for example, by stability results like Lemma~\ref{stability}. Furthermore, the results from~\cite{dunson2021spectral} and \cite{calder2020lipschitz} are existence results, and do not resolve the inherent identifiability issue arising in eigenfunction estimation. In addition, the following two cases are to be distinguished:
\begin{itemize}
    \item Semi-supervised setting: This corresponds to the case when there is an additional set of unlabeled observations $(X_{n+1}, \ldots, X_{n+m})$ uniformly and independently sampled over $\mathcal{M}$. In this case, the eigenfunctions could first be estimated using the above unlabeled observations and the regression coefficients could be subsequently estimated using the estimated eigenfunctions. 
    \item Supervised setting: In this case, the same set of observations $(X_1, \ldots, X_n)$ is used to estimate the eigenfunctions and the regression coefficients. Extra difficulties arise in this case due to the dependency in estimating the eigenfunctions and the regression coefficients using the same set of observations. We remark that this is the approach we take in the experiments as we discuss in Section \ref{sec:expe}.
\end{itemize}

\section{Experimental results}
\label{sec:expe}
\subsection{Experimental design}
\label{sec:expdesign}

We have applied the following experimental routine in order to estimate the function $f^\star$, given a set of points $(X_i)_{i=1}^n$ in $\mathbb{R}^D$ that are assumed to lie on a manifold $\mathcal{M}$ of dimension $d$ and a vector of real responses $(Y_i)_{i=1}^n.$
\begin{itemize}
\item Build a proximity graph on the data points $X_i$. Many options are possible, in most experiments we have taken a $k-$nearest neighbor graph with $k \simeq \log(n)$ but we also sometimes consider Gaussian weighted graphs, for instance in Section \ref{cat1D}.
\item Compute the normalized Laplacian matrix of the graph.
\item For a fixed $p \leq n$, compute the first $p$ eigenvectors of the Laplacian matrix, which yields a new design matrix in $ \mathbb{R}^{n \times p}$ where each column is an eigenvector.
\item For the penalty $\Omega_1$, compute the persistence of each eigenvector, divide each column of the design matrix by its persistence and solve a Lasso with cross-validation.
\item For the penalty $\Omega_2$, we start from a random vector $\theta_0 \in \mathbb{R}^p$ and perform a stochastic gradient descent of $\mathcal{L}$, where we compute the persistence of $\sum_{i=1}^p \theta_i \hat{\Phi}_i$ at each iteration, similarly to what is done in \cite{carriere2020note}.
\end{itemize}

The method that has been the most efficient is to take $p=n$ for $\Omega_1$ to perform a variable selection, using Lasso sparsity properties. We then perform a gradient descent of the loss function with penalty $\Omega_2$ on the subset of eigenvectors previously selected. The "vanilla" penalty $\Omega_2$ (without pre-selection step) is itself numerically outperformed in terms of MSE by $\Omega_1$. This can be explained by Theorem \ref{oracle_theo} : without performing a preliminary variable selection, the dimension of the problem does not guarantee a good reconstruction of the source function. This dimension reduction also offers a modest acceleration of the computational cost for the optimization of the loss with penalty $\Omega_2$ as we will see in Section \ref{complexity}. In the tables below, the results for the penalty $\Omega_2$ are always understood to have been obtained this way.

The routine previously described works for the denoising problem where we have a label for each data point. If we are interested in prediction problems, where the set of covariates is split in two: one part with labels and one part without, we build the graph on all the data points since all the points are assumed to lie on $\mathcal{M}$ but we train the model only on the points for which we have a label at our disposal.

Numerically, to compute the persistence of a function $f$ for which we know the values at points $x_1, \ldots, x_n$, we build the alpha-complex introduced in \citep{edelsbrunner2010computational} on the vertex set $(x_1, \ldots, x_n)$ where the filtration value of a $k$-dimensional simplex $[x_{i_0}, \ldots, x_{i_k}]$ is equal to $\underset{i \in \{ i_0, \ldots, i_k \} }{\max} f(x_i)$. For the alpha-complex to convey the same topology as the underlying manifold, we truncate it by keeping only simplices whose circumradius is small enough (namely smaller than half the reach of the manifold, see e.g., \citealp{boissonnat2018geometric}). This reach parameter is known in the simulations of Section \ref{simdata}. Should it not be known, it could be estimated (for instance, using methods developed by \citealp{berenfeld2022estimating}). In the real data examples of Section \ref{realdata}, the underlying simplicial complex is the extension of the nearest-neighbor graph used to compute the Laplacian eigenbasis. We claim that if the number of neighbors is chosen with care, we will also retrieve the topology of the underlying manifold. If using Gaussian-weighted graphs, we introduce a proximity parameter which is equivalent to considering a truncated Rips-filtration. Note that this simplicial complex is only computed once, and does not impact the overall computational cost, even when the ambient dimension is high.  Here, we have used the GUDHI library~\citep{maria2014gudhi} to compute the persistence given this filtration.

In what follows, we will present the results of several experiments conducted both on synthetic and real data to investigate the relevance of our approach in practice. We compare our method to standard regression methods on manifolds: Kernel Ridge Regression (KRR), Nearest-neighbour regression (k-NN), Total variation penalty (TV) as well as graph Laplacian eigenmaps with a $L^{1}$ penalty (Lasso) and a weighted Lasso where the weights are the total variation of each eigenvector, computed on the graph (Lasso-TV). The performance is measured in terms of root mean squared error between the estimated and the true functions at the data points. Its expression is given by 
\[\qquad \qquad RMSE(\hat{f}) = \sqrt{\sum_{i=1}^n \frac{(f^{\,^{\boldsymbol \star} }(x_i) - \hat{f} (x_i))^2}{n}}.
\] All hyperparameters have been tuned by cross validation or grid-search. The code used for data on a Swiss roll and the spinning cat in 2D is available here.\footnote{ \url{https://github.com/OlympioH/Lap_reg_topo_pen}}

\subsection{Simulated data}
\label{simdata}
\subsubsection{Illustrative example}
In order to illustrate the behavior of the $\Omega_2$-penalization model, we first look at a synthetic setting where the function we try to reconstruct is a sum of 4 Gaussians, to which we add a large noise (Figure \ref{gaussian_data}).

The persistence diagram of the true function to be estimated has four points in its 1-homology corresponding to the 4 cycles in each Gaussian, all being born at a neighboring saddle and dying at the corresponding local maximum, and one point in its 0-homology dying at infinity corresponding to the only connected component. The 0-homology points on the diagonal correspond to sampling noise. When noise is added, the visual chaos of the observation is supported by its persistence diagram: it has a lot of features and the statistical noise added to the measurement here is converted into topological noise. 

In Figure \ref{result_gaussians} we compare the results of a stochastic gradient descent penalizing $\Omega_2$ against a Lasso estimation. The reconstruction is visually better and is also better in terms of MSE. Indeed, when penalizing the persistence, we have managed to keep the four most persistent one-dimensional features in the persistent diagram (although their persistence has diminished) and we still observe four peaks, while most of the noise has been removed. Although the Lasso enables some denoising, it has only been able to reconstruct 2 to 3 peaks and does not offer the same denoising performance. Note that it is also possible to consider a topological penalty where we penalize the persistence of the function except the 4 most persistent points in 1-homology and the most persistent point in 0-homology like in \cite{bruel2019topology}. In this case, we can be more coarse in the choice of the trade-off $\mu$ since the penalty $\Omega_2$ must then be set to 0. This shows that if one is given an a priori on the topology of the function to reconstruct, it can be included in the model very easily.

\begin{figure*}[h]
\begin{center}
\subfigure[Function estimated by a Lasso]{\includegraphics[scale=0.3]{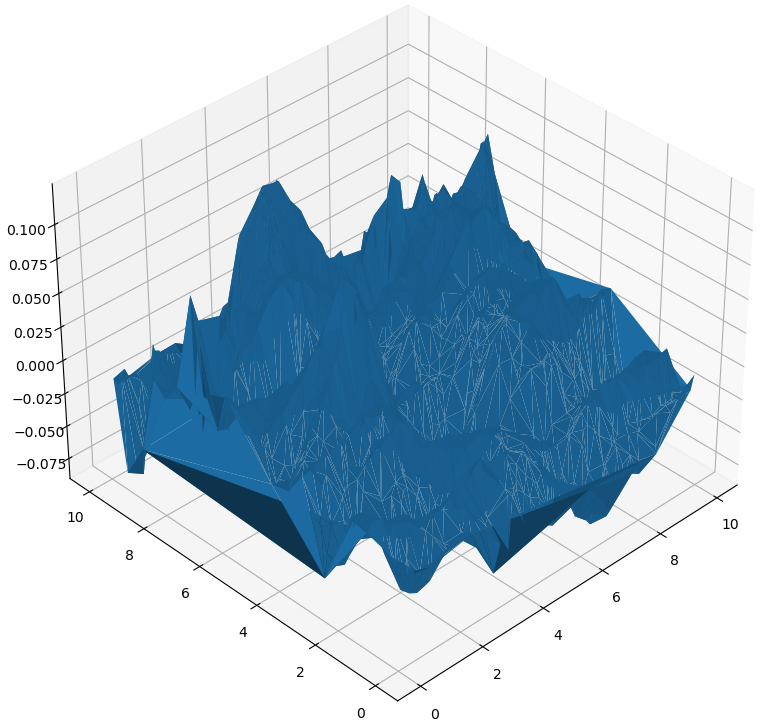}}
\subfigure[Function estimated by the topological penalty $\Omega_2$]{\includegraphics[scale=0.3]{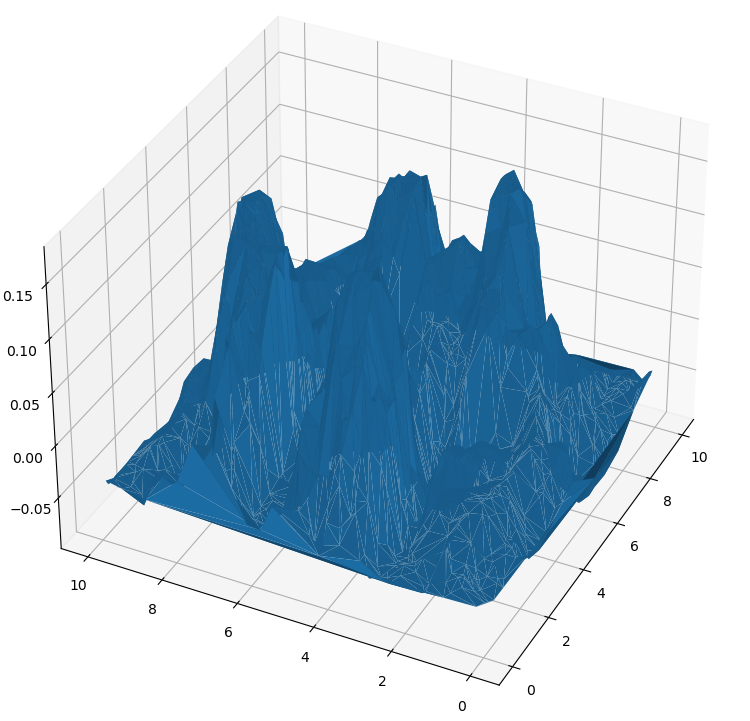}}\\
\subfigure[Persistence diagram of the Lasso estimate]{\includegraphics[width=0.4\linewidth]{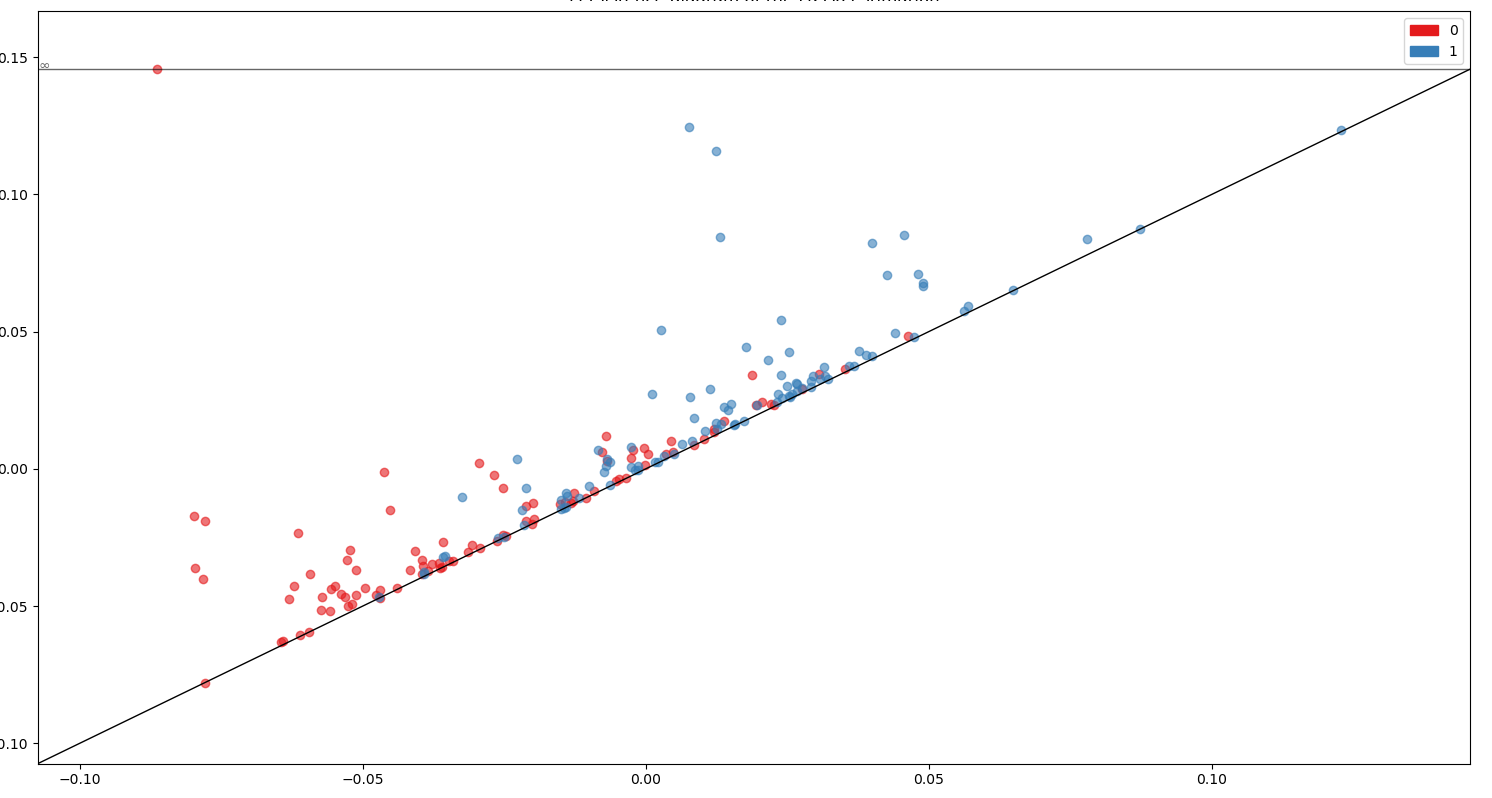}} 
\subfigure[Persistence diagram of the $\Omega_2$ penalty estimate]{\includegraphics[width=0.4\linewidth]{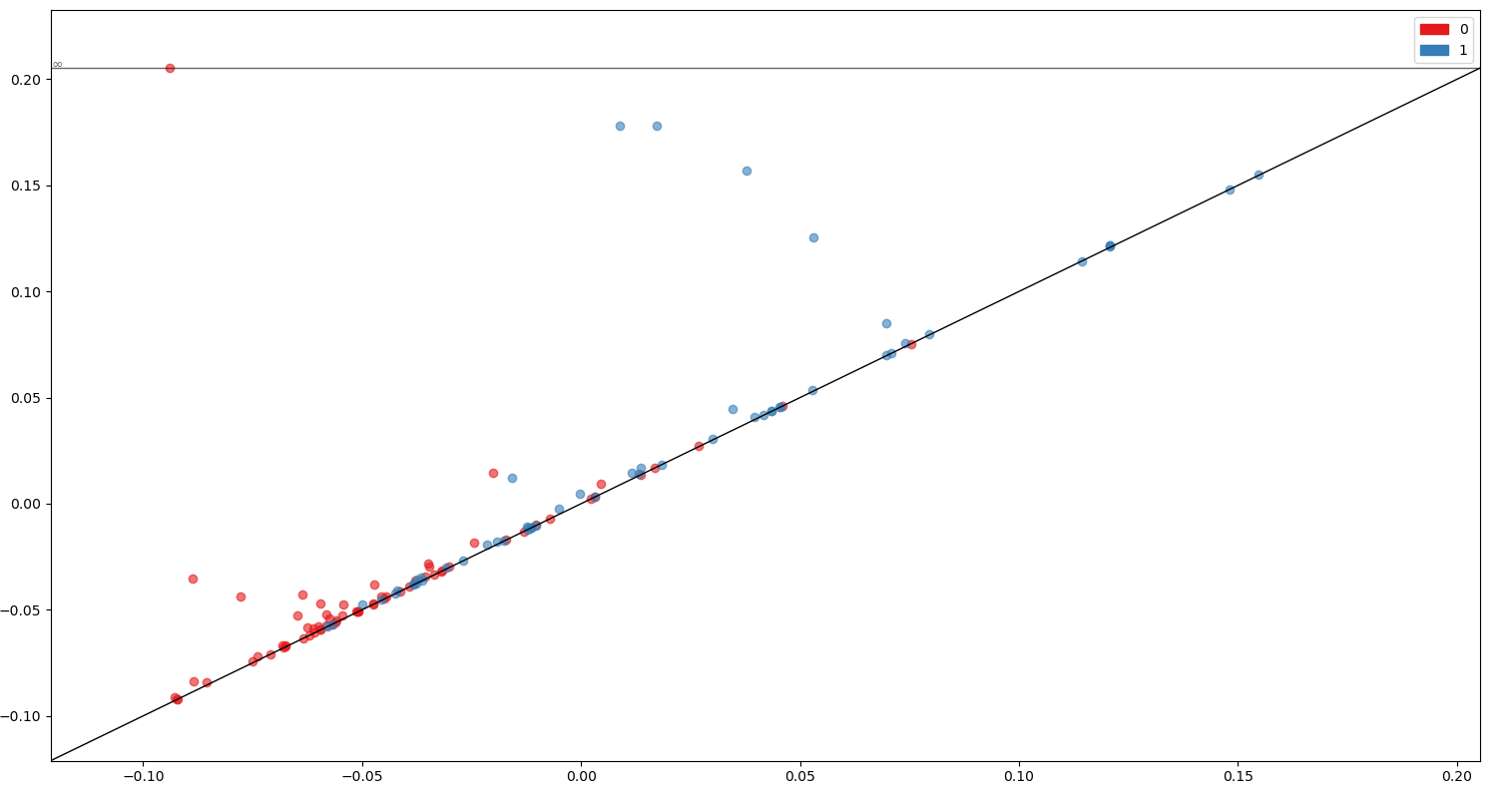}}
\caption{Reconstruction of the sum of four Gaussians.}
\label{result_gaussians}
\end{center}
\end{figure*}

\subsubsection{Torus data}
\label{sec:torus}

We simulate data on a torus which we recall is homeomorphic to $\mathbb{S}^1 \times \mathbb{S}^1$, parametrized by the embedding $\Psi_{\mathbb{T}^2} : (\theta, \phi) \mapsto (x_1, x_2, x_3)$:

\[ \qquad \qquad \left\{
    \begin{array}{lll}
        x_1 = (2+\cos(\theta))\cos(\phi), \\
        x_2 = (2+\cos(\theta))\sin(\phi), \\
        x_3 = \sin (\theta).
    \end{array}
\right.
\]

Following the approach of \cite{diaconis2013sampling}, sampling uniformly on the torus may be carried out via sampling $\phi$ uniformly in $[0, 2\pi]$ and $\theta$ according to the density \\  $g(\theta)=\frac{1}{2 \pi} \left( 1+\frac{\cos(\theta)}{2}\right)$ on $[0, 2\pi]$. In practice, this is performed by a rejection sampling. Note that sampling $\theta$ and $\phi$ uniformly and independently does not provide a uniform sampling on the torus as we will observe a higher density of points in the inside of the torus. A function on the torus is identified with a function of $(\theta, \phi)$. To set up the regression problem, we define the target response $f^*(\theta,\phi)=\xi [-17(\sqrt{(\theta - \pi)^2+(\phi - \pi)^2} - 0.6 \pi)]$ where $\xi$ is the sigmoid function. Note that $f^*$ is radial symmetric, depending only on the distance between $(\theta, \phi)$ and $(\pi, \pi)$. This signal function has been studied because it has a simple topology that illustrates the topological denoising method. It has first been introduced by \citet{nilsson2007regression} for similar purposes. The comparison of the RMSE for all methods can be found in Table \ref{tab:torus}.

\begin{table*}

\begin{center}
\caption{RMSE of the reconstruction for $n$ points lying on a torus, average on $100$ runs.}

\resizebox{\textwidth}{!}{
\begin{tabular}{|c|c|c|c|c|c|c|c|c|}

	\hline
$n$ & $\sigma$ & Lasso & Lasso-TV  & $\Omega_1$ & $\Omega_2$ & KRR & k-NN & TV \\
	\hline
300 &	0.5 & 0.261 $\pm$ 0.019  & 0.241 $\pm$ 0.017 & 0.220 $\pm$ 0.019 & 0.223 $\pm$ 0.019 & \bf{0.217} $\pm$ 0.012 & 0.237 $\pm$ 0.017 & 0.413 $\pm$ 0.016 \\
	\hline
300 &	1 & 0.381 $\pm$ 0.038  & 0.337 $\pm$ 0.043 & \bf{0.281} $\pm$ 0.038 & 0.288 $\pm$ 0.037 & 0.292 $\pm$ 0.025 & 0.401 $\pm$ 0.029 & 0.509 $\pm$ 0.026\\
	\hline	
	1000 & 0.5&	0.174 $\pm$ 0.011 & 0.171 $\pm$ 0.010 & 0.157 $\pm$ 0.010 & \bf{0.156} $\pm$ 0.011 & 0.172 $\pm$ 0.008 & 0.167 $\pm$ 0.009 & 0.421 $\pm$ 0.008 \\
		\hline	
	1000 & 1 &	0.290 $\pm$ 0.024 & 0.266 $\pm$ 0.021 & 0.212 $\pm$ 0.018 & \bf{0.209} $\pm$ 0.018 & 0.222 $\pm$ 0.017 & 0.285 $\pm$ 0.013 & 0.513 $\pm$ 0.015 \\
	\hline 

\end{tabular}}
\label{tab:torus}
\end{center}
\end{table*}

\subsubsection{Data on a Swiss roll}

We now consider data on a Swiss roll which is a two dimensional manifold parametrized by the mapping $(x, y) \mapsto (x \cos x, y, x \sin x)$. We have set as target function $$f^*(x, y) =4\exp(-((y-7)^2/20+(x-6)^2/5))+ 2 \cos^2 (x) \sin ^2 (y),$$ for $(x,y) \in [1.5 \pi, 3.5 \pi] \times [0, 21]$. The function is plotted Figure \ref{swiss_func}.

\begin{figure}[t]
\begin{center}
\subfigure[Original function]{\includegraphics[scale=0.45]{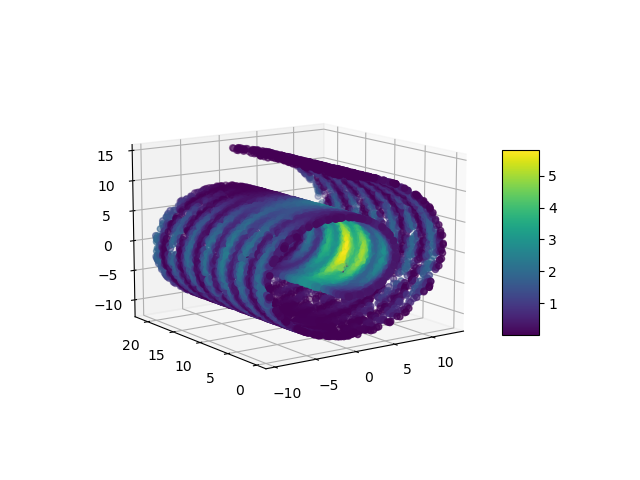}}
\subfigure[Noisy observation]{\includegraphics[scale=0.45]{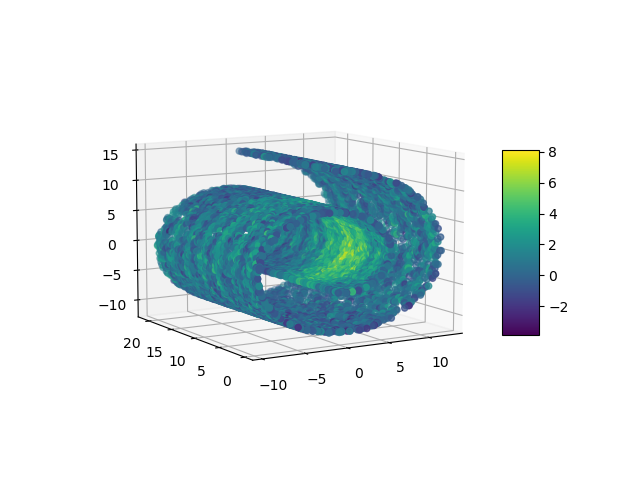}} 
\end{center}
\caption{Regression function on a Swiss roll}
\label{swiss_func}
\end{figure}

\begin{table*}
\begin{center}
\caption{RMSE of the reconstruction for 500 points on a Swiss roll, average on 100 runs.}

\resizebox{\textwidth}{!}{
\begin{tabular}{|c|c|c|c|c|c|c|c|}
\hline
$\sigma$ & Lasso & Lasso - TV  & $\Omega_1$ & $\Omega_2$ & KRR & k-NN & TV\\
	\hline
			0.2 & 0.422 $\pm$ 0.035 & 0.421 $\pm$ 0.031 & 0.491 $\pm$ 0.065 & 0.398 $\pm$ 0.026   & \bf{0.186}  $\pm$ 0.006 & 0.505 $\pm$ 0.020 & 0.200 $\pm$ 0.006 \\
				\hline	
	0.5  & 0.498 $\pm$ 0.036 & 0.478  $\pm$ 0.026  & 0.472 $\pm$ 0.043 & \bf{0.455} $\pm$ 0.020 & 0.476 $\pm$ 0.014  & 0.532 $\pm$ 0.020 & 0.494 $\pm$ 0.017 \\
	\hline	
	0.7  & 0.549 $\pm$ 0.041 & 0.525  $\pm$ 0.033  & 0.534 $\pm$ 0.032 & \bf{0.489} $\pm$ 0.021 & 0.526 $\pm$ 0.017  & 0.549 $\pm$ 0.021 & 0.671 $\pm$ 0.022 \\
	\hline
		1 & 0.628 $\pm$ 0.0430 & 0.593 $\pm$ 0.035 & 0.572 $\pm$ 0.0600 & \bf{0.546} $\pm$ 0.030 & 0.637 $\pm$ 0.026 & 0.587 $\pm$ 0.023 & 0.920 $\pm$ 0.029\\
		\hline
		1.3 & 0.689 $\pm$ 0.049 & 0.648 $\pm$ 0.042 & 0.615 $\pm$ 0.064 &  \bf{0.595} $\pm$ 0.038  & 0.746 $\pm$ 0.026 & 0.646 $\pm$ 0.025 & 1.056 $\pm$ 0.043 \\
		\hline		
\end{tabular}}
\label{tab:swiss}
\end{center}
\end{table*}

Here, we observe that the penalty $\Omega_2$ performs the best and benefits from the selection properties of the regularization with penalty $\Omega_1$. Data on a Swiss roll shows the limitations of Kernel methods when the number of points is low, as the geodesic metric can be very different from the ambient metric. The use of a $k-$NN graph is a solution to bypass this problem since at a small scale the two metrics are close. We see in Table \ref{tab:swiss} that a topology-based penalty is particularly efficient when the noise level becomes important. Although Kernel Ridge regression and Total Variation penalties provide a very good reconstruction and should be preferred when the noise is low, they become quite unstable as the noise level increases, whereas all Laplacian eigenmaps based methods as well as a k-NN regression are somehow robust to noise. Note that among all possible penalties on Laplacian eigenmaps based models, $\Omega_2$ ran on the eigenfunctions selected by $\Omega_1$ always yield the best results.

\subsection{Real data}
\label{realdata}
\subsubsection{Spinning cat in 1D}
\label{cat1D}
This model has also been tried on real data. We consider a data set of $n=72$ images of the same object rotated in space with increments of $5 \degree$. The data lie on a one-dimensional submanifold of $\mathbb{R}^{16384}$, the images having size $128 \times 128$ pixels. We can see some of the images from the dataset Figure \ref{cat}.
For each image, we want to retrieve the angle of rotation in radians of the object. The source vector we want to estimate is therefore $(0, 5 \pi /180, 10 \pi /180, \ldots, 355 \pi /180) \in \mathbb{R}^{72}$ .

\begin{figure*}[t]
\begin{center}
\subfigure[Baseline image]{\includegraphics[scale=01]{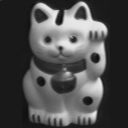}}
\subfigure[Image rotated by $30 \degree$]{\includegraphics[scale=1]{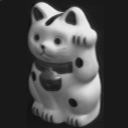}} 
\subfigure[Image rotated by $90 \degree$]{\includegraphics[scale=1]{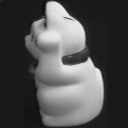}} 
\caption{Data set for the experiments on a one dimensional manifold with real data.}
\label{cat}
\end{center}
\end{figure*}

We have built a Gaussian weighted graph on the data points, using the ambient $L^2$ metric between images for the weights. Using a geodesic distance between images would probably yield better results, but the use of the ambient metric is enough to have convergence of the graph Laplacian eigenvectors towards the eigenfunctions of the Laplace-Beltrami operator on the manifold according to \cite{trillos2020error}.

We have tried different values of the scaling parameter $t$ in the Gaussian weights 
\[W_{ij}=\frac{1}{n} \frac{1}{t(4 \pi t)^{d / 2}} e^{-\frac{\left\|x_{i}-x_{j}\right\|^{2}}{4 t}}.
\]

We depict in Figure \ref{RotEF} the aspect of some eigenfunctions as a function of the rotation degree of the baseline image for $t=1$ and $t=10$. Here, we can see that the value $t=1$ is too small: indeed, although the graph-Laplacian eigenvectors converge to the true Laplace-Beltrami eigenfunctions as $n \to \infty$ and $t \to 0$, this only occurs if $t$ verifies a particular scaling with respect to $n$ according to \cite{trillos2020error}. Here, we only have a small number of data at hand ($n=72$), and therefore, the value $t=10$ visually seems to be more satisfying. Indeed, the data are on a circle (of some large dimensional Euclidean space) and we would expect the eigenfunctions to converge towards the spherical harmonics for the circle, which are oscillating functions. For $t=1$, all the eigenfunctions are highly localized which is not quite satisfying. In addition, a preliminary study yielded a much better performance of the parameter $t=10$ over $t=1$ for the corresponding regression task. For a larger index, the eigenfunctions start to be localized around an image of the manifold, reminding a wavelet-type basis.

\begin{figure*}[t]
\begin{center}
\subfigure[t=1, EF 1]{\includegraphics[width=0.30\linewidth]{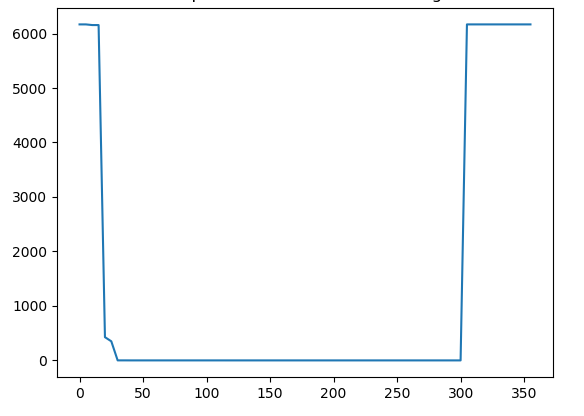}}
\subfigure[t=1, EF 13]{\includegraphics[width=0.30\linewidth]{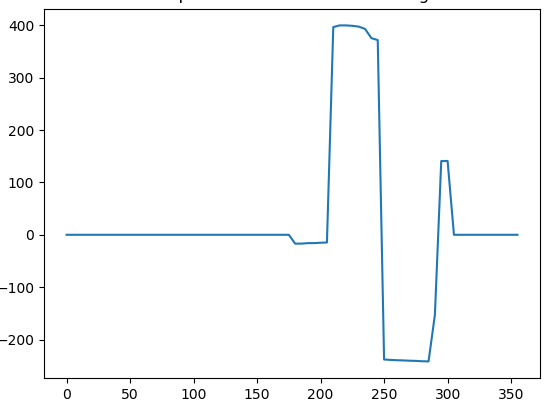}} 
\subfigure[t=1, EF 47]{\includegraphics[width=0.30\linewidth]{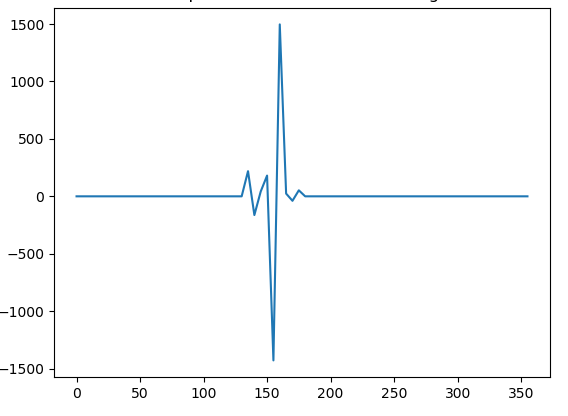}} \\
\subfigure[t=10, EF 1]{\includegraphics[width=0.30\linewidth]{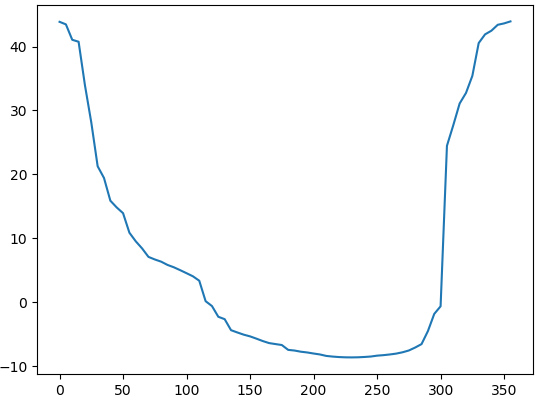}}
\subfigure[t=10, EF 13]{\includegraphics[width=0.30\linewidth]{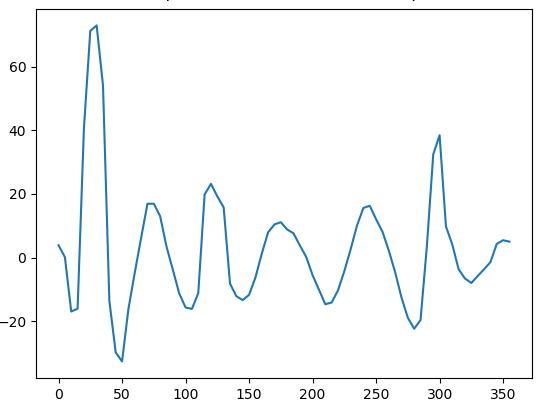}} 
\subfigure[t=10, EF 47]{\includegraphics[width=0.30\linewidth]{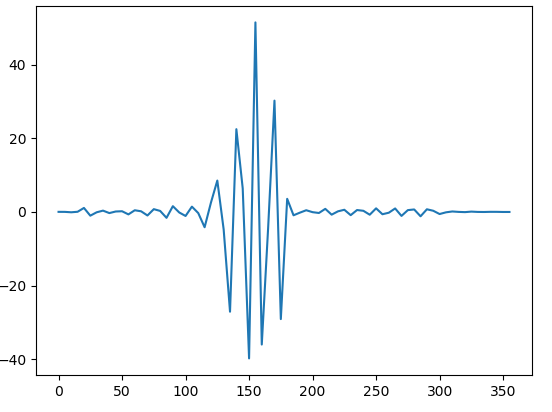}} \\
\caption{Estimated eigenfunctions of the manifold Laplacian evaluated at each image, as functions of the degree of rotation of the object in the image.}
\label{RotEF}
\end{center}
\end{figure*}

Unlike the synthetic experiments, where we have only focused on denoising, here we are interested in the prediction properties of the method. To this aim, the data are randomly split between a training set and a validation set. The graph is then built on the whole dataset (since the data points are all assumed to lie on the same manifold). The optimization procedure is then only performed on the labels from the training set, and we measure the mean square error between the prediction on the new points and the true values from the validation set. 

\begin{table*}
\begin{center}
\caption{RMSE of the prediction of the angle of rotation of the spinning cat in 1D, average on $100$ runs.}
\vspace*{0.3cm}

\resizebox{\textwidth}{!}{
\begin{tabular}{|c|c|c|c|c|c|c|c|}
	\hline
$\sigma$ & Size train/test & Lasso & Lasso-TV  & $\Omega_1$ & $\Omega_2$ & KRR & k-NN \\

	\hline	
	0 & 68/4  & 0.79 $\pm$ 0.59 & 0.56 $\pm$ 0.52 & 0.38 $\pm$ 0.51 & 0.35 $\pm$ 0.49 & \bf{0.22} $\pm$ 0.47 & 0.25 $\pm$ 0.50 \\
	\hline	
	0.5 & 68/4  & 0.89 $\pm$ 0.60  & 0.70 $\pm$ 0.51 & 0.59 $\pm$ 0.52 & 0.53 $\pm$ 0.48 & \bf{0.47} $\pm$ 0.47 & 0.44 $\pm$ 0.47 \\
	\hline
	0 & 60/12  & 0.96 $\pm$ 0.50 & 0.74 $\pm$ 0.46 & 0.58 $\pm$ 0.47 & 0.54 $\pm$ 0.46 & \bf{0.45} $\pm$ 0.41 & 0.65 $\pm$ 0.41 \\
	\hline
	0.5 & 60/12  & 1.04 $\pm$ 0.56 & 0.85 $\pm$ 0.50 & 0.71 $\pm$ 0.48 & 0.66 $\pm$ 0.49 & \bf{0.57} $\pm$ 0.42 & 0.77 $\pm$ 0.44 \\
 \hline
\end{tabular}}
\end{center}
\end{table*}	
We see here that although an approach based on the penalization of the topological persistence yields results a lot better than a $L^1$ or total variation penalty, it is still outperformed by a kernel ridge regression. This might be due to the small number of data available on which to build the graph or the fact that the topology of the manifold as well as the topology of the source function are very simple and do not benefit fully from the method presented in this paper. We will see in the next subsection a set-up that shows the appeal of such a penalty.

\subsubsection{Spinning cat in 2D}

We consider the data set from the previous subsection where in addition each image is rotated in another direction by increments of $5 \degree$. We thus dispose of a two-dimensional manifold with the homotopy type of a torus, where the two parameters are the degree of rotation of the object within the image $\theta$ and the degree of rotation of the image itself $\phi$. A few images of the dataset are plotted in Figure \ref{2D_dataset}. We consider the same target response as in the Subsection \ref{sec:torus}: 
\[
f^*(\theta,\phi) = \xi [-17(\sqrt{(\theta-\pi)^2+(\phi-\pi)^2} - 0.6 \pi)].
\]
We add an i.i.d. Gaussian noise with standard deviation $\sigma = 1$ to the input. We select a random subset of 1000 images over the dataset and randomly split it into a training set and a testing set. In this example, we only penalize the $0-$persistence since it has provided a better performance on a preliminary study. The results can be found Table \ref{table_cat2D}.

\begin{figure*}[t]
\begin{center}
\subfigure[$\theta = 0 \degree, \phi = 0 \degree$]{\includegraphics[width=0.3\linewidth]{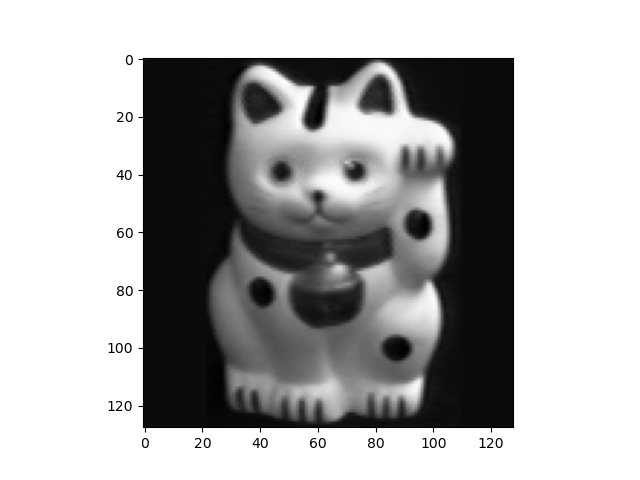}}
\subfigure[$\theta = 30 \degree, \phi = 0 \degree$]{\includegraphics[width=0.3\linewidth]{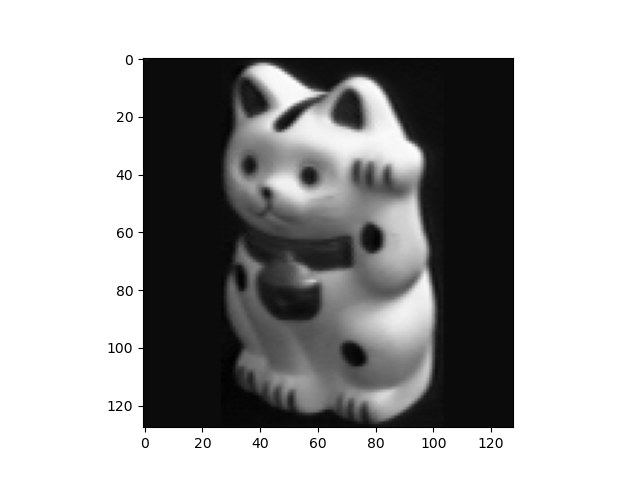}} 
\subfigure[$\theta = 120 \degree, \phi =0 \degree$]{\includegraphics[width=0.3\linewidth]{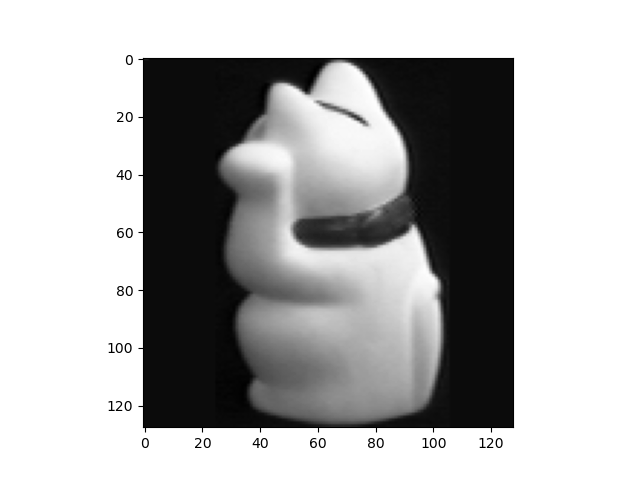}} \\
\subfigure[$\theta = 0 \degree, \phi = 30 \degree$]{\includegraphics[width=0.3\linewidth]{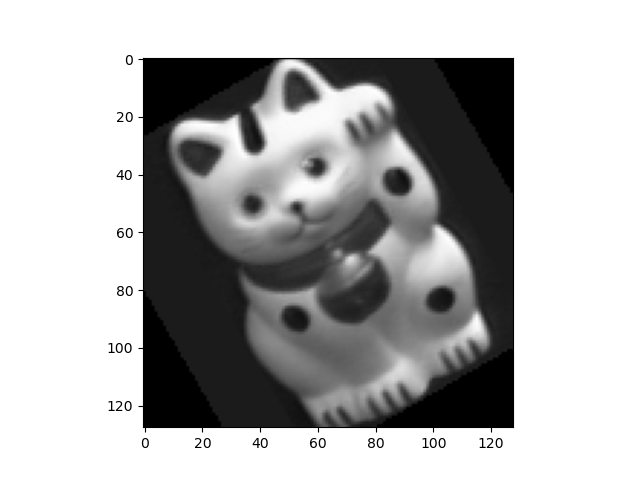}}
\subfigure[$\theta = 30 \degree, \phi = 30 \degree$]{\includegraphics[width=0.3\linewidth]{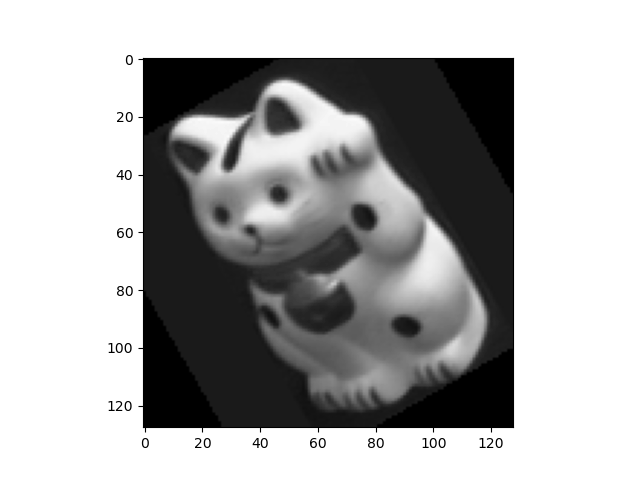}} 
\subfigure[$\theta = 120 \degree, \phi =30 \degree$]{\includegraphics[width=0.3\linewidth]{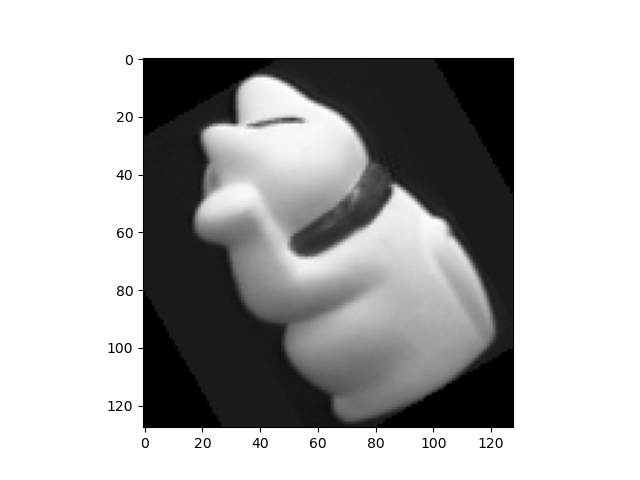}} \\
\subfigure[$\theta = 0 \degree, \phi = 90 \degree$]{\includegraphics[width=0.3\linewidth]{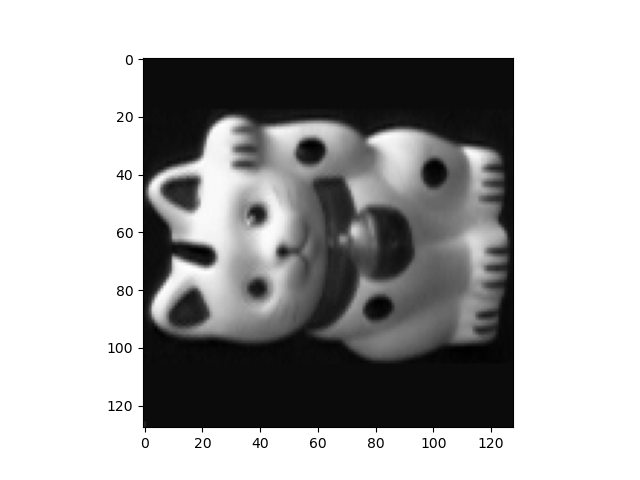}}
\subfigure[$\theta = 30 \degree, \phi = 90 \degree$]{\includegraphics[width=0.3\linewidth]{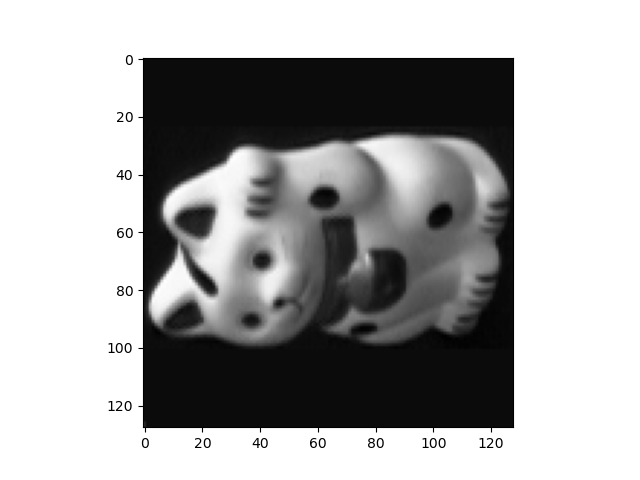}} 
\subfigure[$\theta = 120 \degree, \phi =90 \degree$]{\includegraphics[width=0.3\linewidth]{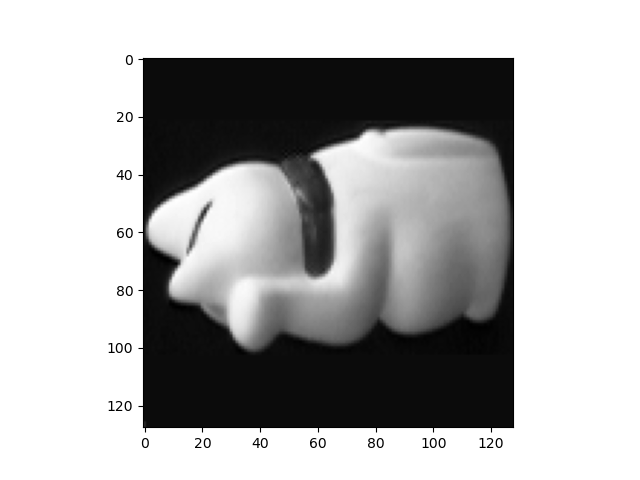}} \\

\caption{Dataset of images on a 2-dimensional manifold.}
\label{2D_dataset}
\end{center}
\end{figure*}

\begin{table*}
\begin{center}
\caption{RMSE of the prediction, regression on a 2D manifold with real data, average of $100$ runs.}
\vspace*{0.3cm}

\resizebox{\textwidth}{!}{
\begin{tabular}{|c|c|c|c|c|c|c|}
	\hline
train/test & Lasso & Lasso-TV  & $\Omega_1$ & $\Omega_2$ & KRR & k-NN  \\

	\hline	
    900/100  & 0.346 $\pm$ 0.031 & 0.344 $\pm$ 0.036 & 0.302 $\pm$ 0.032 & 0.291 $\pm$ 0.029 & \bf{ 0.281 } $\pm$ 0.027 & 0.312 $\pm$ 0.035 \\
	\hline
	800/200  & 0.351 $\pm$ 0.033 & 0.317 $\pm$ 0.030 & 0.283 $\pm$ 0.028 & \bf{0.273} $\pm$ 0.029 & 0.290  $\pm$ 0.025 & 0.309 $\pm$ 0.024 \\
	\hline
	700/300  & 0.348 $\pm$ 0.028 & 0.314 $\pm$ 0.029 & 0.293 $\pm$ 0.025 & \bf{0.280} $\pm$ 0.024 & 0.296 $\pm$ 0.023 & 0.320 $\pm$ 0.023 \\
	\hline
	600/400 & 0.344 $\pm$ 0.027 & 0.306 $\pm$ 0.025 & 0.304 $\pm$ 0.028 & 0.297 $\pm$ 0.027 & \bf{0.290} $\pm$ 0.017 & 0.333 $\pm$ 0.024 \\
\hline
\end{tabular}}
\label{table_cat2D}
\end{center}
\end{table*}

Among all prediction methods that make use of a Laplacian eigenbasis decomposition, $\Omega_2$ on a subset of eigenvectors preselected thanks to $\Omega_1$ offers the best performance when predicting to new data. Our method is overall comparable to Kernel Ridge Regression.

\subsubsection{Electrical consumption dataset}

We have tried our method on the electrical consumption dataset.\footnote{\url{https://www.kaggle.com/nicholasjhana/energy-consumption-generation-prices-and-weather}} The covariates are curves of temperature in Spain, averaged over a week. There is a measurement for each hour of the day, thus each curve can be seen as a point of $\mathbb{R}^{24}$. However, the possible profiles of temperature are very limited (namely, each curve is increasing towards a maximum in the afternoon and is then decreasing), it is therefore believed that the data lie on a manifold of smaller dimension. For each week, we try to predict the electrical consumption in Spain in GW. Like in the previous experiments, the data are randomly split between a training and a testing set, and the graph Laplacian is built on all the available data. The results can be found Table \ref{table:elec}.

\begin{table*}
\begin{center}
\caption{RMSE of the prediction of the average electrical consumption.}

\resizebox{\textwidth}{!}{
\begin{tabular}{|c|c|c|c|c|c|c|}

	\hline
  train/test & Lasso & Lasso-TV  & $\Omega_1$ &$\Omega_2$ & KRR & $k-$NN \\

	\hline	
	 108/100 & 1.289 $\pm$ 0.067  & 1.216 $\pm$ 0.065 & 1.174 $\pm$ 0.072 & 1.251 $\pm$ 0.063 & \bf{1.168} $\pm$ 0.072 & 1.188 $\pm$ 0.064 \\
	\hline
	 58/150  & 1.254 $\pm$ 0.044 & 1.265 $\pm$ 0.050 & 1.181 $\pm$ 0.048 & \bf{1.165} $\pm$ 0.046 & 1.193 $\pm$ 0.051 & 1.211 $\pm$ 0.059 \\
	 \hline
\end{tabular}}
\label{table:elec}
\end{center}
\end{table*}

On this dataset, we notice once again that a small training set is not really damaging towards topological methods as opposed to more standard methods, illustrating once again the generalization properties of the penalties $\Omega_1$ and $\Omega_2$. Here, all methods act relatively similarly as they all predict well the electrical consumption over a week when it takes values around its mean (see Figure \ref{results_elec}). However, sometimes the electrical consumption over a week reaches a peak and all methods fail to capture the extreme values of the source function. It makes perfect sense here that the knowledge of the temperature alone fails to explain perfectly the global electrical consumption over an entire country, without taking into account any other covariates.

\begin{figure}[t]
\begin{center}
\includegraphics[width=1\linewidth]{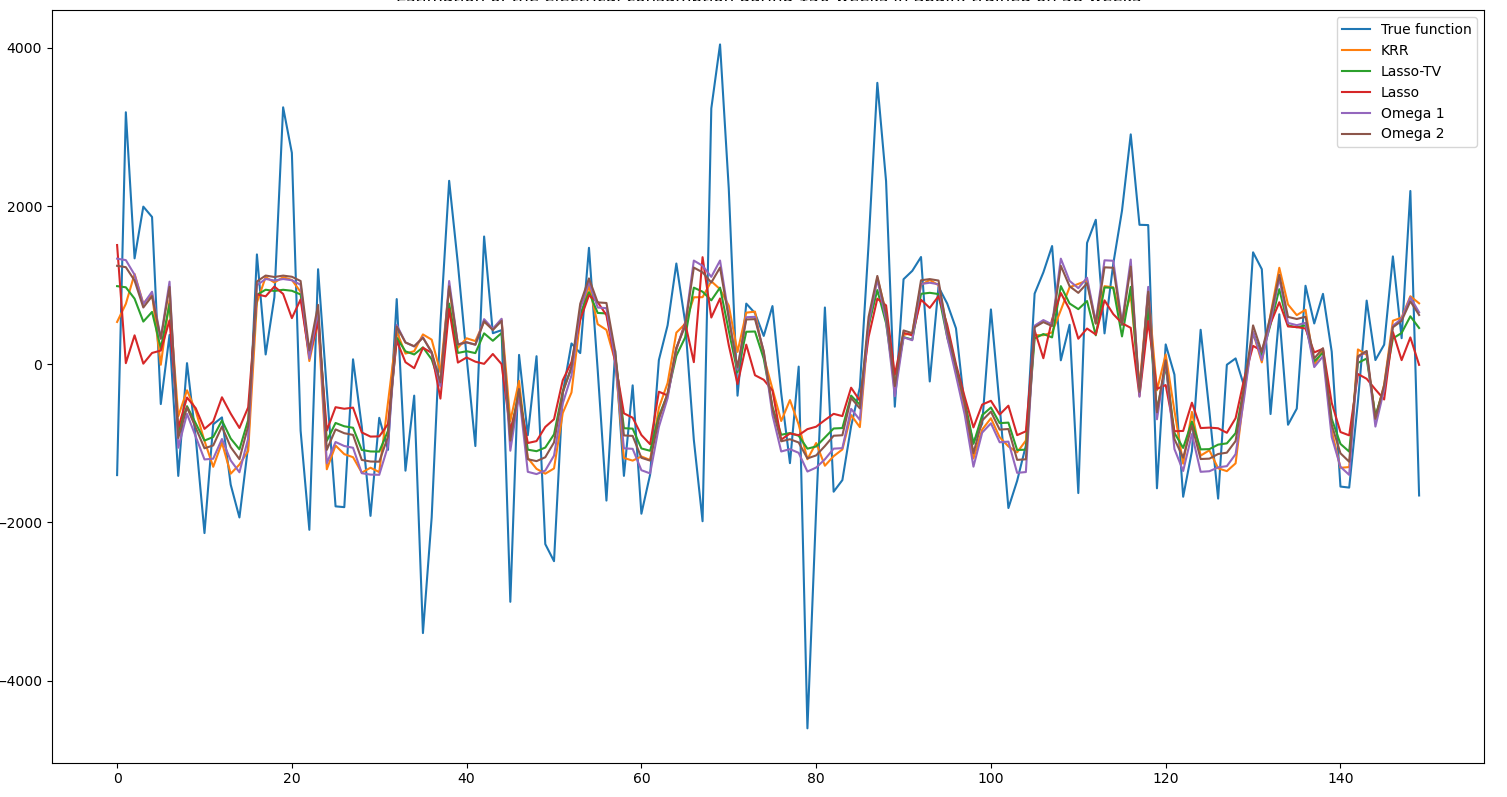}
\end{center}
\caption{Prediction of the electrical consumption using various methods.}
\label{results_elec}
\end{figure}

\subsection{Discussion on the computational cost}
\label{complexity}

The good performance of topological penalties, both in terms of prediction and reconstruction have to be nuanced by their computational cost. Table \ref{table_cost1} shows the computational time in seconds on a standard laptop without GPU for the example of Section \ref{sec:torus} (radial peak function on a torus). All the methods have been implemented in Python using standard libraries, except for $\Omega_2$ and $TV$ for which the optimization of the loss function has been implemented from scratch. This explains a higher cost as opposed to standard regression methods that already benefit from an optimized implementation. This table presents the cost for various methods for single-choices of parameters and without any cross validation. We observe that computing the graph Laplacian and its spectrum is very fast (less than a second if we have $1000$ points and ask for the entire spectrum), and performing a Lasso has a negligible cost. Most of the time is actually spent computing persistence diagrams : for the penalty $\Omega_1$, we have to compute the persistence of each eigenfunction prior to performing a standard Lasso. This turns out to be very costly if we ask for the whole spectrum and somehow reasonable if we ask for a small value of $p$. Note that once these persistences have been computed, estimating a new signal at the same data points can be done almost instantly. Penalty $\Omega_2$ has a very high computational cost, which does not decrease significantly with the dimension $p$. This is due to the fact that the persistence diagram of the entire function has to be computed at each gradient step.

Computing the $k-$persistence of a simplicial complex with $N$ simplices is done with the Gudhi package \citep{maria2014gudhi} which relies on the algorithm of \citet{edelsbrunner2010computational} and has an algorithmic complexity  of $O(N^3)$. Note that if we have $n$ data-points, the number of $k-$dimensional simplices is of order $n^k$. This accounts for the very high computational cost of topological penalties developed in this paper. It is therefore infeasible to use this method in practice for high homological dimensions, namely as soon as $k \geq 4$; in practice we recommend to only penalize $k$-persistences, for $k$ up to 3. On the other hand, note that the 
computation time is barely impacted by the intrinsic dimension of the manifold nor the ambient dimension. We remark that when minimizing $\Omega_2$, the function does not change much from one epoch to another and therefore, neither does its persistence diagram. Recomputing the diagram at each iteration is therefore a naive approach and this is a possible way of improving the method on the numerical side. We believe that using vineyards \citep{cohen2006vines} would enable a computation of the persistence diagram in linear time which would speed up the method, maybe at the cost of a higher space complexity.

\begin{table*}
\begin{center}
\caption{Computational time (in seconds), data on a torus}
\vspace*{0.3cm}

\begin{tabular}{|c|c|c|c|c|c|c|c|}
	\hline
$n$ &$p$ & Lasso & $\Omega_1$ & $\Omega_2$ & KRR & $k$-NN & TV  \\
\hline
300 & 100 & 0.04 & 4.0 & 36.3 & 0.11 & $3.0 \times 10^{-3}$ & 2.3 \\
	\hline	
1000 & 100 & 0.19 & 14.2 & 129.9 & 0.13 & $5.6 \times 10^{-3}$ & 3.6 \\

	\hline	
300 & 300 & 0.11 & 8.5 & 39.7 & 0.11 & $3.0 \times 10^{-3}$ & 2.3 \\
	\hline	
1000 & 1000 & 0.94 & 132.3 & 195.9 & 0.13 & $5.6\times 10^{-3}$ & 3.6 \\
	\hline	

\end{tabular}
\label{table_cost1}
\end{center}
\end{table*}

\subsection{Conclusion of the experiments}

The methods developed in this paper couple the use of a graph Laplacian eigenbasis and a topological penalty. The first aspect enables to treat regression problems for data living on manifolds with an extrinsic approach: nothing needs to be known on the manifold and its metric, the graph Laplacian being computed on the ambient metric. Laplace eigenmaps methods in general have proven to be useful when the manifold structure is quite strong, illustrated here with the experiment on a Swiss roll. The use of a topological penalty has multiple advantages: it acts as a generalization of total variation penalties in higher dimensions and seems to be a more natural way to regularize functions, as observed in Section \ref{sec:motiv}. Numerically, topological methods almost always provide better results than usual penalties such as TV or $L^1$ penalty. Here, we have developed two types of topological penalties: $\Omega_1$ aims at performing a selection process of the regression basis functions, by discarding the ones that oscillate too much in order to allow a good generalization to new data. On the other hand $ \Omega_2$ directly acts on the topology of the source function and performs a strong denoising. Note that the statistical noise on the data translates into a topological noise on the persistence diagram of the corresponding function which is the one the regularization $\Omega_2$ acts onto, by providing a powerful simplification of the persistence diagram of the reconstructed function. When the underlying geometric structure is complex or the noise is important, $\Omega_2$ regularization on the eigenfunctions selected by $\Omega_1$ provides a better reconstruction in terms of RMSE than standard methods, including KRR which appears to be the most competitive one.

We have essentially penalized the total persistence of functions, but it would be possible to numerically penalize any smooth function on the points of the persistence diagram. In particular, establishing a penalty that would erase all low-dimensional persistence features while keeping all the high-dimensional features could be of practical interest, likewise to the smoothly clipped absolute deviation (SCAD) penalty developed by \cite{fan2001variable}.

One major drawback of topological methods is their computational cost as opposed to a simple KRR regression, especially when we need to compute topological persistence of high dimensions.

\section{Proofs for Section~\ref{sec:theo}}
\label{sec:proofs}

\subsection{Proof of Theorem \ref{oracle_theo}}

We start by the proof of Theorem \ref{oracle_theo} since it introduces a number of lemmas and results that will also be useful to prove the other theorems. We start with a  technical lemma based on the celebrated Weyl's Law~\citep{chavel1984eigenvalues, ivrii2016100}, reformulated for statistical regression purposes in \cite{barron1999risk}. It provides a control of the sup-norm of the sum of the squares of the first Laplacian eigenfunctions.

\begin{lemma}
\label{sup_norm}
Let $\mathcal{M}$ be a compact Riemannian manifold of dimension $d$ and $\Delta$ its Laplace-Beltrami operator. Let $\lambda_1 \leq \lambda_2 \leq \ldots \leq \lambda_p \leq \ldots$ be the eigenvalues of $\Delta$, and for an eigenvalue $\lambda_i$, we denote by $\Phi_i$ a normalized eigenfunction. Then there exists constants $C^{\prime}(\mathcal{M})$ and $C(\mathcal{M})$ depending only on the geometry of $\mathcal{M}$ (with $C(\mathcal{M})$ also potentially depending on the dimension) such that for all $p$,
\[\qquad \qquad \left\| \sum_{i=1}^p\Phi_i^2 \right\|_\infty \leq C^{\prime}(\mathcal{M}) \lambda_p^{d/2} \leq C(\mathcal{M}) p,
\]
where the last inequality follows by Weyl's law.
\end{lemma}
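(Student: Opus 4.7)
The plan is to obtain both inequalities by invoking two standard facts about the Laplace-Beltrami spectrum on a compact Riemannian manifold without boundary: the \emph{pointwise} (local) Weyl law for the spectral function, and the \emph{global} Weyl law for the eigenvalue counting function.

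First I would introduce the spectral function
\[
N(x,\lambda) \;=\; \sum_{\lambda_i \le \lambda} \Phi_i(x)^2,
\]
so that the quantity of interest is exactly $N(x,\lambda_p)$. The pointwise Weyl law of H\"ormander states that, uniformly in $x\in\mathcal{M}$,
\[
N(x,\lambda) \;=\; \frac{\omega_d}{(2\pi)^d}\,\lambda^{d/2} \;+\; O\!\left(\lambda^{(d-1)/2}\right),
\]
where $\omega_d$ is the volume of the unit Euclidean ball and the implicit constant depends only on the geometry of $\mathcal{M}$. Taking the sup over $x$ and specializing to $\lambda=\lambda_p$ yields the first inequality $\|\sum_{i=1}^p \Phi_i^2\|_\infty \le C'(\mathcal{M})\,\lambda_p^{d/2}$ for a constant $C'(\mathcal{M})$ depending only on the manifold.

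Next I would invert this to get the second inequality. The (global) Weyl law for the counting function $\mathcal{N}(\lambda) = \#\{i : \lambda_i \le \lambda\}$ gives
\[
\mathcal{N}(\lambda) \;=\; \frac{\omega_d\,\mathrm{Vol}(\mathcal{M})}{(2\pi)^d}\,\lambda^{d/2} \;+\; O\!\left(\lambda^{(d-1)/2}\right),
\]
so that $p=\mathcal{N}(\lambda_p)$ is comparable to $\lambda_p^{d/2}$ up to constants depending on $\mathcal{M}$ and $d$; in particular $\lambda_p^{d/2} \le c(\mathcal{M},d)\,p$ for all sufficiently large $p$, with the remaining finitely many $p$ absorbed into the constant. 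Setting $C(\mathcal{M}) := C'(\mathcal{M})\,c(\mathcal{M},d)$ completes the chain
$\|\sum_{i=1}^p \Phi_i^2\|_\infty \le C'(\mathcal{M})\lambda_p^{d/2} \le C(\mathcal{M})\,p$.

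The main (and only real) obstacle here is simply the appeal to the pointwise Weyl law: the ordinary Weyl law only controls $\sum_i \Phi_i^2$ in an averaged sense (its integral over $\mathcal{M}$ equals $p$), and upgrading this to a uniform bound requires H\"ormander's refinement, which is the nontrivial analytic input. Once that is quoted from the references in the paper (e.g.\ \citealp{chavel1984eigenvalues, ivrii2016100}), the rest is just bookkeeping of constants depending on $\mathcal{M}$ and $d$.
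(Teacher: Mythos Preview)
Your approach is correct and is exactly the argument underlying the result. Note, however, that the paper does not actually give a proof of this lemma: it is stated as a known technical fact ``based on the celebrated Weyl's Law'' with citations to \cite{chavel1984eigenvalues}, \cite{ivrii2016100}, and \cite{barron1999risk}, and then used as a black box in the subsequent proofs. Your sketch via H\"ormander's pointwise Weyl law for the spectral function, followed by the global Weyl law to convert $\lambda_p^{d/2}$ into $p$, is precisely the content of those references, so there is nothing to compare---you have simply supplied the argument that the paper chose to cite rather than reproduce.
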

We will also need the following concentration result, itself using Lemma \ref{sup_norm}. Recall that a random variable $\epsilon$ is called sub-Gaussian with parameter $\sigma^2 > 0$ if ${\mathbb{E}}\exp(\lambda \epsilon) \le \exp(\sigma^2\lambda^2/2)$ for all $\lambda \in {\mathbb R}$.
%
%
\begin{lemma}
\label{lemma:concentration1}
Denote for every $X \in \mathcal{M}$, the tuple $\Phi(X) = (\Phi_1 (X), \ldots, \Phi_p(X))$. Let $\varepsilon_1\ldots,\varepsilon_n$ be i.i.d sub-Gaussian random variables with parameter $\sigma^2$, independent of $X_1,\ldots,X_n$, and let $x > 0$. Then, with probability larger than $1 - 2e^{-x}$,

\[ \left\| \frac{1}{n} \sum_{i=1}^n \varepsilon_i \Phi( X_i) \right\| \leq 2 \sigma \sqrt{\frac{p}{n}} (1+2 \sqrt{x}) \sqrt{1+C(\mathcal{M})\ \sqrt{\frac{2x}{n}}}.
\]
\end{lemma}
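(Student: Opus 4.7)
The idea is to separate a randomness-in-design step from a randomness-in-noise step, then combine them via a union bound. Write $A \in \mathbb{R}^{n \times p}$ for the design matrix whose $i$-th row is $\Phi(X_i)^T$, so that the quantity of interest is $\|\tfrac{1}{n} A^T \varepsilon\|$. The target bound is a product of a ``design'' factor $\sqrt{1+C(\mathcal M)\sqrt{2x/n}}$ and a ``noise'' factor $(1+2\sqrt{x})$, each naturally associated with one level of randomness.

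\paragraph{Step 1 (design: concentration of $\mathrm{tr}(A^T A)/n$).}
Set $T_n := \tfrac{1}{n} \sum_{i=1}^n \|\Phi(X_i)\|^2 = \mathrm{tr}(A^T A)/n$. By orthonormality of the $\Phi_j$ in $L^2(\mathcal M)$ and uniform sampling, $\mathbb{E}[\|\Phi(X_i)\|^2] = \sum_{j=1}^p \mathbb{E}[\Phi_j(X_i)^2] = p$. Lemma~\ref{sup_norm} gives the deterministic bound $\|\Phi(X_i)\|^2 \le C(\mathcal M)\, p$ uniformly in $i$. Hoeffding's inequality for bounded i.i.d.\ random variables then yields, with probability at least $1 - e^{-x}$,
\begin{equation*}
T_n \;\le\; p\Bigl(1 + C(\mathcal M)\sqrt{2x/n}\,\Bigr),
\end{equation*}
after mild adjustment of the constant (we absorb the $\sqrt{1/2}$ arising from Hoeffding into $C(\mathcal M)$).

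\paragraph{Step 2 (noise: conditional concentration of $\|A^T\varepsilon\|/n$).}
Condition on $X_1,\dots,X_n$. Then $\tfrac{1}{n}A^T \varepsilon$ is a linear image of a sub-Gaussian vector, and its Euclidean norm $V := \|\tfrac{1}{n}A^T \varepsilon\|$ satisfies
\begin{equation*}
\mathbb{E}[V^2 \mid X] \;\le\; \frac{\sigma^2}{n^2}\,\mathrm{tr}(A^T A) \;=\; \frac{\sigma^2 T_n}{n},
\end{equation*}
so $\mathbb{E}[V \mid X] \le \sigma\sqrt{T_n/n}$ by Jensen. Moreover $\varepsilon \mapsto V$ is Lipschitz with constant $\|A\|_{\mathrm{op}}/n$, and $\|A\|_{\mathrm{op}}^2 \le \mathrm{tr}(A^T A) = n T_n$, so the Lipschitz constant is at most $\sqrt{T_n/n}$. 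Standard sub-Gaussian concentration for (convex) Lipschitz functionals (equivalently, the Hanson--Wright inequality applied to the quadratic form $\varepsilon^T (A A^T/n^2) \varepsilon$ to deviate from its mean, followed by $V = \sqrt{V^2}$) then yields, with conditional probability at least $1 - e^{-x}$,
\begin{equation*}
V \;\le\; \mathbb{E}[V \mid X] + c\,\sigma\sqrt{T_n/n}\,\sqrt{x} \;\le\; 2\sigma\sqrt{T_n/n}\,(1 + 2\sqrt{x}),
\end{equation*}
where the factor $2$ and the constant in front of $\sqrt{x}$ absorb the (universal) constant from the sub-Gaussian Lipschitz concentration inequality.

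\paragraph{Step 3 (combine).}
Union-bounding the events of Steps 1 and 2 (which together fail with probability at most $2e^{-x}$) and substituting the bound on $T_n$ into the bound on $V$ gives
\begin{equation*}
V \;\le\; 2\sigma\sqrt{\tfrac{p}{n}}\,(1+2\sqrt{x})\,\sqrt{1 + C(\mathcal M)\sqrt{2x/n}},
\end{equation*}
which is the claimed inequality.

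\paragraph{Main obstacle.}
The delicate step is the conditional sub-Gaussian concentration of $V$ around its mean: the Lipschitz-concentration inequality is cleanest for Gaussian noise, and its extension to general sub-Gaussian noise costs a universal constant (obtained via Hanson--Wright applied to $\varepsilon^T (A A^T/n^2)\varepsilon$, or via a net argument on the sphere $S^{p-1}$ using that $u \mapsto \langle u, A^T \varepsilon\rangle$ is sub-Gaussian with variance proxy $\sigma^2 \|A u\|^2 \le \sigma^2 \|A\|_{\mathrm{op}}^2$). Tracking this universal constant is what forces the factor $2$ in front and $(1+2\sqrt{x})$ rather than $(1+\sqrt{2x})$, but it does not affect the scaling.
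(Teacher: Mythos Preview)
Your proposal is correct and follows essentially the same two-stage architecture as the paper: condition on the design, concentrate the norm using the noise randomness, then remove the conditioning via Hoeffding on $\sum_i\|\Phi(X_i)\|^2$ (your Step~1 is the paper's final step, but the order is immaterial). The only substantive difference is the concentration tool invoked in your Step~2. The paper appeals to a result of Kontorovich on concentration of Lipschitz functions of independent variables with bounded sub-Gaussian diameter, applied to $\varphi(v_1,\ldots,v_n)=\|\sum_i v_i\|$ with $v_i=\varepsilon_i\Phi(X_i)$; this directly handles the non-Gaussian case without extra work and gives the sub-Gaussian parameter $2\sigma^2\sum_i\|\Phi(X_i)\|^2$ in one line. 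Your route via Lipschitz concentration requires either Hanson--Wright or an $\varepsilon$-net argument to cope with general sub-Gaussian noise (as you correctly flag in your ``main obstacle'' paragraph), which is slightly more work but relies only on textbook tools rather than a somewhat specialized inequality. Both approaches yield the same bound up to the universal constants you already absorbed.
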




\begin{proof}{(of Lemma \ref{lemma:concentration1}).}
Our proof is based on a non-standard result by \cite{kontorovich2014concentration}, on concentration of Lipschitz functions of not necessarily bounded random vectors, which we briefly introduce next. 

For $i=1, \ldots, n$, let $Z_i$ be random objects living in a measurable space $\mathcal{M}_i$ with metric $\rho_i$, endowed with measure $\mu_i$. Define $Z=(Z_1,\ldots, Z_n)$ to be living on the product probability space $\mathcal{M}_1 \times \cdots \times \mathcal{M}_n$ with product measure $\mu_1 \times \cdots \times \mu_n$, and metric $\rho = \sum_{i=1}^n \rho_i$. To each $(Z_i, \mu_i, \rho_i)$, we also associate symmetrized random objects $\Xi_i = \gamma_i \rho_i(Z_i,Z'_i)$, where $Z_i,Z'_i \sim \mu_i$ are independent, and $\gamma_i$ are Rademacher random variables (i.e. taking values $\pm 1$ with probability 1/2) independent of $Z_i, Z'_i$. We also define  $\Delta_{\mathrm{SG}}(\Xi_i)$ as the sub-Gaussian diameter of $\Xi_i$, given by the smallest value of $s$ for which we have $\mathbb{E} e^{\lambda \Xi_i} \leq  e^{s^2 \lambda^2/2}$ for all $\lambda \in \mathbb{R}$. Then using \cite[Proof of Theorem 1]{kontorovich2014concentration}, we have for the random object $Z$ as above, and a $1$-Lipschitz function $\varphi:\mathcal{M}_1 \times \cdots \times \mathcal{M}_n \to \mathbb{R}$, that
\begin{align}\label{probbound}
\mathbb{P}(\varphi(Z)-\mathbb{E} \varphi(Z) >t) \leq \exp \left(-\frac{t^{2}}{2 \sum_{i=1}^n \Delta_{\mathrm{SG}}^{2}\left(Z_i\right)}\right).
\end{align}

In our context, for all $1\leq i\leq n$, we set $\mathcal{M}_i =\mathbb{R}^p$, and $\rho_i= \| \cdot\|$. For $v_i\in\mathbb{R}^p$, we define the function $\varphi$ as 
$$ \varphi : (v_1, \ldots, v_n) \mapsto \left\| \sum_{i=1}^n v_i \right\|.$$ Note that, for two finite sets of vectors $(v_i)_i$ and $(w_i)_i$, we have $$\Big| \big\|\sum v_i\big\| - \big\|\sum w_i\big\|\Big| \le \sum\|v_i - w_i\|.$$ Hence, the function $\varphi$ is $1-$Lipschitz with respect to the mixed $\ell^1/\ell^2$ norm of $(\mathbb{R}^p)^n$, meaning that we will be able to apply Kontorovich's result.

Now, considering $\left\| \sum_{i=1}^n \varepsilon_i \Phi (X_i) \right\|,$ we proceed by conditioning on $(X_i)_{1\leq i \leq n}$, that is, we consider randomness only with respect to $(\varepsilon_i)_{1\leq i\leq n}$,
which are independent of the $X_i$'s;  $\mathbb{E}_\varepsilon$ will denote the corresponding conditional expectation. First, note that we have the following bound on the (conditional) expectation:
\begin{align}\label{expecbound}
\mathbb{E}_{\varepsilon} \left( \left\| \sum_{i=1}^n \varepsilon_i \Phi (X_i) \right\| \right) \leq
\mathbb{E}_{\varepsilon} \left( \left\| \sum_{i=1}^n \varepsilon_i \Phi (X_i) \right\|^2 \right)^{\frac{1}{2}} \le 
\sqrt{\sigma^2 \sum_{i=1}^n \|\Phi(X_i)\|^2}.
\end{align}
Defining $Z_i = \varepsilon_i\Phi(X_i)$, and considering the corresponding symmetrized object 
$$\Xi_i = \gamma_i \left\| \varepsilon_i \Phi(X_i) - \varepsilon_i' \Phi(X_i)\right\|
= \gamma_i | \varepsilon_i - \varepsilon_i'| \| \Phi(X_i) \|,$$ 
and since $\gamma_i | \varepsilon_i - \varepsilon_i'|$ has the same distribution as $(\varepsilon_i - \varepsilon_i')$ by independence and symmetry, we have
$$
\mathbb{E}_\varepsilon \left( \exp(\lambda \Xi_i) \right) = \mathbb{E}_\varepsilon
\left( \exp ( \lambda (\varepsilon_i - \varepsilon_i') \| \Phi(X_i) \| )\right)
\leq \exp( \lambda^2 \sigma^2 \| \Phi(X_i) \|^2)
$$
since the $\varepsilon_i,\varepsilon'_i$ are independent sub-Gaussian of parameter $\sigma^2$, hence we also have for the (conditional) sub-Gaussian diameter $\Delta^2_{\mathrm{SG}}\left(\Xi_i\right) \leq  2\sigma^2 \|\Phi(X_i)\|^2$. Hence, by using~\eqref{probbound} and~\eqref{expecbound}, we have that with probability larger than $1-e^{-x}$,
\begin{align*}
    \left\|\frac{1}{n}\sum_{i=1}^n \varepsilon_i \Phi(X_i) \right\| \leq 2\frac{\sqrt{\sigma^2 \sum_{i=1}^n \|\Phi(X_i)\|^2 x}}{n}    +\frac{\sqrt{\sigma^2 \sum_{i=1}^n \|\Phi(X_i)\|^2}}{n}.
\end{align*}

We now deal with removing the conditioning on $X_i$. Note that according to Lemma \ref{sup_norm}, we have for all $i$: $\|\Phi(X_i)\|^2 = \sum_{j=1}^p \Phi_j(X_i)^2 \leq C(\mathcal{M}) p$. We can therefore apply Hoeffding's inequality and obtain that with probability larger than $1-e^{-x}$,
\[
\qquad \qquad \sum_{i=1}^n \| \Phi(X_i) \|^2 \leq np+C(\mathcal{M})p \sqrt{2nx}  
= np\Big( 1+ C(\mathcal{M}) \sqrt{\frac{2x}{n}}\Big),
\]
where we used that ${\mathbb E}\|\Phi(X_i)\|^2 = np$, because the eigenfunctions are normalized. We therefore obtain the desired result. 
\end{proof}

For a given function $f$ on a probability space ($\mathcal{X}, \pi)$, we define the expectation operator $P(f) = \int f \mathrm{d}\pi$. Given $n$ i.i.d. random variables on $\mathcal{X}$, we define the empirical measure $P_n (f) = \frac{1}{n} \sum_{i=1}^n f(X_i)$. The following lemma provides a control of the empirical process of the difference between the true function and its estimation:

\begin{lemma}
\label{lemma:concentration2}
With the same notation as before, 
we have that with probability larger than $1-\exp \left(\frac{-0.1\,n}{C(\mathcal{M})p} + \ln (2p) \right)$:
\[
 \sup_{\|\beta\|=1} (P_n-P)(\langle \beta, \Phi (X) \rangle^2) \leq \frac{1}{2}.
\]
\end{lemma}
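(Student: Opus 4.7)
The plan is to recognize the stated supremum as an operator norm of a centered sample covariance, and then apply a matrix Bernstein inequality whose inputs come directly from Lemma~\ref{sup_norm}.

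First, I observe that $\langle \beta, \Phi(X) \rangle^2 = \beta^\top \Phi(X)\Phi(X)^\top \beta$, so
\[
\sup_{\|\beta\|=1} (P_n-P)\bigl(\langle \beta,\Phi(X)\rangle^2\bigr) = \bigl\|\widehat\Sigma - \Sigma\bigr\|_{\mathrm{op}},
\]
where $\widehat\Sigma := \tfrac{1}{n}\sum_{i=1}^n \Phi(X_i)\Phi(X_i)^\top$ and $\Sigma := \mathbb{E}[\Phi(X)\Phi(X)^\top]$. Because the $X_i$ are uniform on $\mathcal{M}$ and the $\Phi_j$ are $L^2$-orthonormal, $\Sigma = I_p$ (the volume normalization of $\pi$ can be absorbed into the constant $C(\mathcal{M})$). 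Thus the claim reduces to showing $\|\widehat\Sigma - I_p\|_{\mathrm{op}} \leq 1/2$ with the required probability.

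Next, set $Z_i := \Phi(X_i)\Phi(X_i)^\top - I_p$, so the $Z_i$ are i.i.d., centered, symmetric $p\times p$ random matrices and $\widehat\Sigma - I_p = \tfrac{1}{n}\sum_{i=1}^n Z_i$. Lemma~\ref{sup_norm} yields the pointwise bound
\[
\|\Phi(X)\|^2 = \sum_{j=1}^p \Phi_j(X)^2 \leq C(\mathcal{M})\,p,
\]
from which $\|\Phi(X_i)\Phi(X_i)^\top\|_{\mathrm{op}} \le C(\mathcal{M})p$ and consequently $\|Z_i\|_{\mathrm{op}} \leq C(\mathcal{M})p + 1 \leq C'(\mathcal{M})\,p$ almost surely. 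For the variance proxy, since $Z_i^2 \preceq \|\Phi(X_i)\|^2\,\Phi(X_i)\Phi(X_i)^\top + I_p$,
\[
\Bigl\|\sum_{i=1}^n \mathbb{E}[Z_i^2]\Bigr\|_{\mathrm{op}} \leq n\bigl(C(\mathcal{M})p\cdot \|\Sigma\|_{\mathrm{op}} + 1\bigr) \leq n\,C''(\mathcal{M})\,p.
\]

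Now I would apply the matrix Bernstein inequality (e.g.\ Tropp's version), which gives
\[
\mathbb{P}\Bigl(\bigl\|\textstyle\sum_{i=1}^n Z_i\bigr\|_{\mathrm{op}} \geq s\Bigr) \leq 2p\,\exp\!\left(-\frac{s^2/2}{n\,C''(\mathcal{M})\,p + C'(\mathcal{M})\,p\,s/3}\right).
\]
Choosing $s = n/2$ so that $\|\widehat\Sigma - I\|_{\mathrm{op}} \leq 1/2$, the denominator becomes of order $n\,C(\mathcal{M})\,p$, and the exponent simplifies to $-c\,n/(C(\mathcal{M})p)$ for a numerical constant $c > 0$. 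After absorbing constants into $C(\mathcal{M})$ so that $c/C(\mathcal{M}) \geq 0.1/C(\mathcal{M})$, we obtain
\[
\mathbb{P}\Bigl(\|\widehat\Sigma - I\|_{\mathrm{op}} \geq \tfrac{1}{2}\Bigr) \leq 2p\,\exp\!\left(-\frac{0.1\,n}{C(\mathcal{M})p}\right) = \exp\!\left(-\frac{0.1\,n}{C(\mathcal{M})p} + \ln(2p)\right),
\]
which is the desired statement.

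The only nontrivial point is matching the precise constant $0.1$ appearing in the bound: this is just a matter of choosing $C(\mathcal{M})$ large enough to dominate the numerical constants arising from Bernstein's inequality (namely $1/8$ from $s^2/2$ and the $1 + 1/6$ factor in the denominator). An alternative route, if one wants to avoid matrix Bernstein explicitly, is to apply a scalar Bernstein inequality to $\beta^\top(\widehat\Sigma - I)\beta$ for $\beta$ in a $1/4$-net of the sphere $S^{p-1}$ (of cardinality at most $9^p$), use the bound $|\beta^\top Z_i \beta| \leq C(\mathcal{M})p$, and take a union bound; this yields an identical exponent up to constants and would produce the same statement.
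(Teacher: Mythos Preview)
Your proof is correct and follows essentially the same approach as the paper: both reduce the supremum to $\|\widehat\Sigma - I_p\|_{\mathrm{op}}$, invoke Lemma~\ref{sup_norm} for the almost-sure bound $\|\Phi(X_i)\|^2 \leq C(\mathcal{M})p$, and finish with a matrix concentration inequality from Tropp's monograph. The only minor difference is that the paper applies the matrix Chernoff bound (Theorem~5.1.1 of \cite{tropp2015introduction}) directly to the PSD summands $\Phi(X_i)\Phi(X_i)^\top$, whereas you center first and apply matrix Bernstein; the resulting exponents and the constant-absorption argument are the same.
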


\begin{proof}{(of Lemma \ref{lemma:concentration2}).}
First note that for the empirical process we have

\begin{align*}
\sup_{\|\beta\|=1} (P_n-P)(\langle \beta, \Phi (X) \rangle^2) &=  
\sup_{\|\beta\|=1} \beta^t \left( \frac{1}{n} \sum_{i=1}^n \Phi(X_i)\Phi(X_i)^T - \mathbb{E}[\Phi (X) \Phi(X)^T]\right)\beta\\
& = \left\| \frac{1}{n} \sum_{i=1}^n \Phi(X_i)\Phi(X_i)^T - I_p \right\|_{\mathrm{op}},
\end{align*}
where we have used $ \mathbb{E}[\Phi (X) \Phi(X)^T] = I_p$, since the components
$(\Phi_i)_{1\leq i \leq p}$ of $\Phi$ form an orthonormal system of $L^2 (\mathcal{M})$.
Note that  $\|\Phi (X_i) \Phi(X_i)^T\|_{\mathrm{op}} = \|\Phi(X_i)\|^2$ is upper-bounded by $L=C(\mathcal{M})p$ according to Lemma \ref{sup_norm}. We then use Theorem 5.1.1 from \cite{tropp2015introduction} which yields that: 
\begin{equation} \label{eq:concmat} \mathbb{P} \left(
\left\| \frac{1}{n} \sum_{i=1}^n \Phi(X_i)\Phi(X_i)^T  - I_p \right\|_{\mathrm{op}} \geq \frac{1}{2}\right)
\leq 2p K^{\frac{n}{C(\mathcal{M})p}}.
\end{equation}
A simple calculation and evaluation of the numerical constant $K=\max \Big(\frac{e^{-1/2}}{(1/2)^{1/2}},\frac{e^{1/2}}{(3/2)^{3/2}} \Big)$ gives the required result.
\end{proof}

We are now in a position to prove Theorem \ref{oracle_theo}. For $f_{\theta}=\sum_{i=1}^p \theta_i \Phi_i$, let the quadratic loss be denoted by %
\[
\gamma(\theta, (x,y)) = (f_\theta(x) - y)^2 = ( \langle \theta, \Phi(x) \rangle - y)^2.
\]
We first remark that since the $\Phi_i$'s are an orthonormal system,  $ \mathbb{E} (f_\theta (X) - f_{\theta^\star} (X))^2  = \| \theta - \theta^\star \|^2$. 
Moreover, it is a well-known property of the quadratic loss that the excess risk 
of any prediction function $f$ is the squared $L^2$ distance to the optimal regression function
$f^*=f_{\theta^*}$, so
that $P(\gamma(\theta, (X,Y))) - P(\gamma(\theta^\star, (X,Y)))
= \mathbb{E}(f_\theta (X) - f_{\theta^\star} (X))^2 =  \|\theta - \theta^\star \|^2$.
Since the test data point $(X,Y)\sim P$ used to compute the risk is independent of the sample used to construct the estimator $\hat\theta$, we also have $P(\gamma(\hat\theta, (X,Y))) - P(\gamma(\theta^\star, (X,Y))) = \|\hat \theta - \theta^*\|^2$ (conditionally on the training sample).

Since $\hat{\theta}$ is a minimizer of $\mathcal{L}$ given in~\eqref{eq:penloss}
with $\Omega_2(\theta) = \chi(f_\theta)$, we have that
\begin{equation}
    \label{eq:lopt}
P_n (\gamma (\hat{\theta},.)) - P_n (\gamma (\theta^\star,.)) + \mu \chi (f_{\hat{\theta}}) - \mu \chi (f_{\theta^\star})
\leq 0.
\end{equation}
Therefore, we have
\begin{align*}
\|\hat{\theta} - \theta^*\|^2  &= P {\gamma} (\hat{\theta}, \cdot) - P {\gamma} (\theta^\star, \cdot)\\ 
&\leq (P-P_n)({\gamma} (\hat{\theta}, \cdot) -  {\gamma} (\theta^\star, \cdot)) + \mu (\chi (f_{\theta^\star}) - \chi (f_{\hat{\theta}})).\\
&\leq (P-P_n) (-2(\langle \theta^\star, \Phi \rangle + \varepsilon)\langle \hat{\theta}, \Phi \rangle + \langle \hat{\theta}, \Phi \rangle^2 + 2(\langle \theta^\star, \Phi \rangle + \varepsilon) \langle \theta^\star, \Phi \rangle - \langle \theta^\star, \Phi \rangle^2 ) \\
 & \qquad \qquad + \mu (\chi (f_{\theta^\star}) - \chi (f_{\hat{\theta}})) \\
 &\leq \underbrace{(P_n-P) (2 \varepsilon \langle \hat{\theta} - \theta^\star, \Phi \rangle)}_{A}+ \underbrace{(P_n-P)(\langle \hat{\theta} - \theta^\star, \Phi \rangle^2)}_{B} + \underbrace{\mu (\chi (f^\star) - \chi (\hat{f}))}_{C}.
\end{align*}
We will now bound terms A, B and C  below. First note that
\begin{align*}
A \leq 2\| \theta^\star - \hat{\theta} \| \underset{\theta}{\sup} \left\langle \frac{\theta - \theta^\star}{\|\theta - \theta^\star\|}, \frac{1}{n}\sum_{i=1}^n \varepsilon_i \Phi (X_i) \right\rangle &\leq 2\|\hat{\theta} - \theta^\star \| \left\| \frac{1}{n} \sum_{i=1}^n \varepsilon_i \Phi (X_i) \right\|,
\end{align*}
and that Lemma \ref{lemma:concentration1} provides control of the norm on the right-hand side. Next note that according to Lemma \ref{lemma:concentration2}, we have 
$$B \le \frac{1}{2} \|\hat \theta - \theta^*\|.$$
Finally, to control term $C$, we simply apply Lemma \ref{stability}, which yields
\begin{equation*}
\begin{split}
C = \mu (\chi (f^\star) - \chi (\hat{f})) & \leq \mu (2\nu (f^\star)+\zeta) \|\hat{f}-f^\star \|_\infty \\
& = \mu (2\nu (f^\star)+\zeta) \| \langle \hat{\theta} - \theta^*, \Phi(\cdot) \rangle \|_\infty \\
& \leq \mu (2\nu (f^\star)+\zeta) \|\hat{\theta} - \theta^\star\| \left\|\sum_{i=1}^p \Phi_i^2 \right\|^{1/2}_\infty \\
 & \leq \mu \sqrt{C(\mathcal{M})}(2\nu (f^\star)+\zeta) \| \hat{\theta} - \theta^\star \| \sqrt{p},
\end{split}
\end{equation*}
where the second to last inequality uses the Cauchy-Schwarz inequality, and the last is using Lemma \ref{sup_norm}. Finally, we have, with probability larger than $1-2e^{-x}-\exp \left(\frac{-0.1n}{C(\mathcal{M})p} + \ln (2p) \right)$,

\begin{align*}
 \| \hat{\theta}-\theta^\star \|^2 \leq &~2\| \hat{\theta}-\theta^\star \|\; 2 \sigma\sqrt{\frac{p}{n}} (1+ \sqrt{x}) \sqrt{1+C(\mathcal{M})\sqrt{\frac{2x}{n}}}  \\
 &~+ \frac{1}{2} \|\hat{\theta} - \theta^\star \|^2 + \| \hat{\theta} - \theta^\star \| \mu 
 \sqrt{C(\mathcal{M})}(2\nu (f^\star)+\zeta) \sqrt{p}.
\end{align*}
A simple calculation then gives the first claim.

To prove the second claim of Theorem \ref{oracle_theo}, notice that 
from~\eqref{eq:lopt} we deduce that
\begin{align*}
\chi(f_{\hat{\theta}}) \leq \chi (f_{\theta^\star}) + \frac{1}{\mu} (P-P_n) ({\gamma} (\hat{\theta}, \cdot) - {\gamma} (\theta^\star, \cdot)) + \frac{1}{\mu} (P {\gamma} (\theta^\star, \cdot) -P {\gamma} (\hat{\theta}, \cdot)).
\end{align*}
Since $\frac{1}{\mu} (P {\gamma} (\theta^\star, \cdot) -P {\gamma} (\hat{\theta}, \cdot)) \leq 0$, the claim immediately follows from the same arguments as in the first part (control of terms A and B, and
reinjecting the control for $\|\hat{\theta}-\theta^\star \|$).

\subsection{Proof of Theorem \ref{theo:lasso}}

We want to use the results of \citet[Section 6]{buhlmann2011statistics} on the Lasso.
The design matrix $\mathbf{X}$ here is given by $\mathbf{X}_{i,j} = \Phi_j (X_i)$,
i.e. the $i$-th column of $\mathbf{X}$ is $\Phi(X_i)$, denoting as done earlier
$\Phi(x) = (\Phi_1(x),\ldots,\Phi_p(x)) \in \mathbb{R}^p$.
The empirical Gram matrix is
\[
\hat{\Sigma} = \frac{1}{n} \mathbf{X}^T\mathbf{X} = \frac{1}{n} \sum_{i=1}^n \Phi(X_i) \Phi(X_i)^T.
\]
Following the same argument as in the proof of Lemma~\ref{lemma:concentration2} leading up to~\eqref{eq:concmat}, we have $\mathbb{E}(\hat \Sigma) = I_p$ and, according to Theorem 5.1.1 of \cite{tropp2015introduction}: 
\[ \mathbb{P} (\Lambda_{\min} (\hat{\Sigma}) \leq 1/2) \leq p \left( \frac{e^{-1/2}}{\sqrt{1/2}} \right)^{\frac{n}{C(\mathcal{M})p}}.
\]
(The minor difference in comparison to~\eqref{eq:concmat} is due to the fact that we only 
need the upper bound on the largest eigenvalue from Theorem 5.1.1 of \citealp{tropp2015introduction} here.)
Therefore, we have
\[
\mathbb{P} (\Lambda_{\min} (\hat{\Sigma}) \geq 1/2) \geq 1-p\exp \left(\frac{n}{C(\mathcal{M})p} \ln \left( \frac{2e^{-1/2}}{\sqrt{2}} \right) \right) \ge 1 - \exp\left( -\frac{0.15n}{C(\mathcal{M})p}+\ln(p)\right)
\]
This shows that, for $p\log p/n$ large, the smallest eigenvalue of the empirical Gram matrix is larger than $1/2$ with high probability, and the compatibility condition is verified with a constant larger than $1/2$.

We can now use the results of \citet[Section 6]{buhlmann2011statistics}. Their Theorem 6.1 remains true for the norm $I(\theta) = \sum_{i=1}^p |\theta_i| \chi (\Phi_i)$ and a compatibility constant larger than $1/2$ with large probability. For a given $\mu_0$, we therefore have, for $\mu \geq 2\mu_0$, with probability larger than $1-\exp\left(-\frac{0.15n}{C(\mathcal{M})p}+\ln(p)\right)$,
\[
\frac{1}{n} \|\mathbf{X}(\hat{\theta}-\theta^\star)\|_2^2  + \mu \sum_{i=1}^p \chi(\Phi_i) |\hat{\theta}_i - \theta_i^\star| \leq \frac{\mu^2 s}{2},
\]
on the event
\[
\mathcal{E}:=\left\{\max _{1 \leq j \leq p} \frac{2}{n} \left|\sum_{i=1}^n \varepsilon_i \Phi_j (X_i) \right|  \leq \mu_{0}\right\}.
\]
For a well chosen value $\mu_0$, the event $\mathcal{E}$ is realized with large probability. Indeed, sub-Gaussianity of the $\varepsilon_i$'s (with parameter $\sigma^2$) yields, by using similar arguments as given in the proof of Theorem~\ref{oracle_theo}, that conditionally on the $X_i$'s, the random variable $\sum_{i=1}^n \varepsilon_i \Phi_j(X_i)$ is sub-Gaussian with parameter $\sigma^2 \sum_{i=1}^n\Phi_j(X_i)^2$. We thus obtain
\[\mathbb{P}_{\varepsilon} \left(\frac{1}{n}\left|\sum_{i=1}^n \varepsilon_i \Phi_j(X_i) \right| > \mu_0 \right) \leq2\exp \left( \frac{- n^2\mu_0^2}{4 \sigma^2 \sum_{i=1}^n \Phi_j (X_i)^2} \right),
\]
where we use the fact that for any sub-Gaussian random variable $Y$ with parameter $\tau^2$, we have $P(|Y| > \lambda) \le 2 \exp(-\lambda^2/4\tau^2)$ (e.g. see \citet{vershynin2018high}. Furthermore, Lemma \ref{sup_norm} gives that
$\Phi_j (X_i)^2 \leq \sum_{j=1}^p \Phi_j(X_i)^2 \leq C(\mathcal{M}) p$ almost surely, and thus we have
\[\mathbb{P}_{\varepsilon} \left(\frac{1}{n}\left|\sum_{i=1}^n \varepsilon_i \Phi_j(X_i) \right| > \mu_0 \right) \leq 2\exp \left( \frac{- n\mu_0^2}{4 \sigma^2 C(\mathcal{M}) p}\right).
\]
A simple union-bound and the choice  $$\mu_0 = 2 \sigma \sqrt{\frac{pC(\mathcal{M})(\ln(p)+x)}{n}}$$ directly gives the required result.

\subsection*{Acknowledgments}
We would like to thank the Action Editor, Sayan Mukherjee, and the two anonymous reviewers for their insightful comments that helped greatly improve the exposition. We gratefully acknowledge support for this project from the National Science Foundation via grant NSF-DMS-2053918, and from the Agence Nationale de la Recherche via grant ANR-19-CHIA-0021-01. We would like to thank Mathieu Carrière for his implementation of the stochastic gradient descent for persistence-based functionals.




%



\bibliography{Bibliographie_Davis}

\end{document}